\newcommand{\gKSS}{{{\rm gKSS}}} 
\newcommand{\uk}{\underline{k}} 
\newtheorem{Definition}{Definition}
\newtheorem{Assumption}{Assumption}
\def\be#1{\begin{equation*}#1\end{equation*}}
\def\ben#1{\begin{equation}#1\end{equation}}
\def\ban#1{\begin{align}#1\end{align}}
\def\given{\typeout{Command 'given' should only be used within bracket command}}
\newcounter{@bracketlevel}
\def\@bracketfactory#1#2#3#4#5#6{
\expandafter\def\csname#1\endcsname##1{%
\addtocounter{@bracketlevel}{1}%
\global\expandafter\let\csname @middummy\alph{@bracketlevel}\endcsname\given%
\global\def\given{\mskip#5\csname#4\endcsname\vert\mskip#6}\csname#4l\endcsname#2##1\csname#4r\endcsname#3%
\global\expandafter\let\expandafter\given\csname @middummy\alph{@bracketlevel}\endcsname
\addtocounter{@bracketlevel}{-1}}%
}
\def\bracketfactory#1#2#3{%
\@bracketfactory{#1}{#2}{#3}{relax}{1mu plus 0.25mu minus 0.25mu}{0.6mu plus 0.15mu minus 0.15mu}
\@bracketfactory{b#1}{#2}{#3}{big}{1mu plus 0.25mu minus 0.25mu}{0.6mu plus 0.15mu minus 0.15mu}
\@bracketfactory{bb#1}{#2}{#3}{Big}{2.4mu plus 0.8mu minus 0.8mu}{1.8mu plus 0.6mu minus 0.6mu}
\@bracketfactory{bbb#1}{#2}{#3}{bigg}{3.2mu plus 1mu minus 1mu}{2.4mu plus 0.75mu minus 0.75mu}
\@bracketfactory{bbbb#1}{#2}{#3}{Bigg}{4mu plus 1mu minus 1mu}{3mu plus 0.75mu minus 0.75mu}
}
\def\^#1{\ifmmode {\mathaccent"705E #1} \else {\accent94 #1} \fi}
\def\~#1{\ifmmode {\mathaccent"707E #1} \else {\accent"7E #1} \fi}
\edef\-#1{\noexpand\ifmmode {\noexpand\bar{#1}} \noexpand\else \-#1\noexpand\fi}
\def\>#1{\vec{#1}}
\def\atop{\@@atop}
\renewcommand{\leq}{\leqslant}
\renewcommand{\geq}{\geqslant}
\renewcommand{\phi}{\varphi}
\newcommand{\eps}{\varepsilon}
\newcommand{\Var}{\mathop{\mathrm{Var}}\nolimits}
\newcommand{\IR}{\mathbbm{R}}
\newcommand{\IE}{\mathbbm{E}}
\def\tsfrac#1#2{{\textstyle\frac{#1}{#2}}}
\newcommand{\ergm}{{\mathrm{ERGM}}}
\def\s#1{^{(#1)}}
\newcommand{\astar}{a^*}
\newcommand{\R}{\mathbb{R}} 
\renewcommand{\H}{\mathcal{H}} 
\newcommand{\G}{\mathcal{G}} 
\newcommand{\A}{\mathcal{A}} 
\renewcommand{\P}{\mathbb{P}} 
\renewcommand{\Pr}{\P} 
\newcommand{\E}{\mathbb{E}} 
\newcommand{\N}{\mathbb{N}} 
\newcommand{\T}{\mathcal{A}}
\newcommand{\q}{\widehat{q}}
\newcommand{\AgraSSt}{{\rm AgraSSt}}
\newcommand{\gr}[1]{\textcolor{magenta}{#1}}
\newcommand{\wx}[1]{\textcolor{blue}{#1}}
\newcommand{{\steingen}}{{{\rm SteinGen}}}
\begin{document}

\title{{{\steingen}}: 
Generating Fidelitous and Diverse Graph Samples}

\author{\name Gesine Reinert \email reinert@stats.ox.ac.uk \\
       \addr Department of Statistics\\
       University of Oxford\\
       Oxford, OX1 3LB, United Kingdom\\
       \AND
       \name Wenkai Xu \email 
       wenkai.xu@uni-tuebingen.de\\
       \addr T\"ubingen AI Center\\
       University of T\"ubingen\\
       72076
       T\"ubingen, Germany 
       }

\editor{}

\maketitle

\begin{abstract}
Generating graphs that preserve characteristic structures while promoting sample diversity can be 
challenging, 
especially when the number of graph observations is small.
{Here}, we tackle the problem of graph generation from {only one observed graph.}

{The classical approach of g}raph generation from parametric {models}  
relies 
on 
{the estimation of parameters,} 
which can be inconsistent or expensive to compute due to 
intractable normalisation constants.
Generative modelling based on 
machine learning techniques to generate high-quality graph samples {avoids} 
parameter estimation but {usually} requires abundant training samples. 
Our proposed generating procedure, 
\emph{{\steingen},} 
{which is phrased in the setting of graphs as realisations of exponential random graph models, combines ideas from Stein's method {and} MCMC by {employing} Markovian dynamics which are based on a Stein operator for the target model. 
{{\steingen}} uses the  Glauber dynamics associated with an estimated Stein operator {to generate a sample}, {and}  re-estimates the Stein operator {from the sample} after every sampling step.
We show that on a class of exponential random graph models this} 
novel ``estimation and re-estimation'' generation strategy {yields high distributional similarity {(high fidelity)} to the original data, combined with high sample diversity.} 
\end{abstract}

\begin{keywords}
Stein's method,  graph generation, sample diversity, Glauber dynamics, network statistics
\end{keywords}


\section{Introduction}\label{sec:intro}

Synthetic data generation 
{is} 
a key ingredient for many modern statistics and machine learning tasks
{such as} Monte Carlo tests, enabling privacy-preserving
data analysis, data augmentation, or visualising representative samples. 
{Synthetically generated data can be useful even when in principle the original data set is the only focus of interest, as using the original data for machine learning tasks can be problematic,}
for example when training models on small or imbalanced samples, or even prohibitive  for example due to authority regularisation on privacy-sensitive information; 
see for example \cite{figueira2022survey}.

\clearpage 

{Synthetic data generation learns a procedure to generate samples that capture the main features of an original dataset.} 
%
In particular, 
data in the form of graphs {(or{, used interchangeably,} networks)}
have been 
explored in the machine learning community {to tackle tasks including community detection, prediction and graph representational learning \citep{chami2022machine, abbe2015community, hein2007graph}.}
Viewing 
the observed dataset as a realisation from a
{learnable} probability distribution,  model learning and generating graph samples have been challenging tasks due to the complex 
dependencies within graphs. 
{Statistical}
methods for model fitting and simulation 
from such models 
are available, {{see for example Part III in \cite{newman2018networks} and Chapter 6 in \cite{kolaczyk2009statistical}}.
 {{H}owever, {for complex network models such as exponential random graph models,} intractable normalisation constants can pose} 
a major challenge for parametric modelling, see \cite{handcock2008statnet}.

{Thus} from a computational viewpoint it may be advantageous to assume that edges are {generated} independent{ly}. Edge independent models include the inhomogeneous random graph model by \cite{bollobas2007phase},  and graphon models which originated in \cite{lovasz2006limits}; latent space models introduced in \cite{hoff2002latent}, of which stochastic blockmodels are a special case,  create an embedding in a latent space and then assume that edges occur independently with probabilities described through the latent space, see also  the survey  \cite{sosa2021review}. 
{However,} \cite{chanpuriya2021power} showed that 
{synthetic network} generators {which assume independently generated edges} tend to generate many more triangles and 4-cycles than are present in the data. 

{V}arious deep generative models for graphs have been developed, such as methods using a variational autoencoder (VAE) \citep{simonovsky2018graphvae}; 
using recurrent neural networks ({Graph}RNN) \citep{you2018graphrnn}; based on a generative adversarial network ({Net}GAN) \citep{bojchevski2018netgan} or score-based approaches \citep{niu2020permutation}.
{\cite{goyal2020graphgen} convert networks into  sequences and then use an {Long Short-Term Memory (LSTM) network} to generate samples from these sequences.}  
{DiGress \citep{vignac2023digress} {develops} a diffusion approach with denoising.} A survey on applications of deep generative models for graphs can be found in \citet{guo2022systematic}. 
{While} achieving superior performances in some graph generation tasks and being able to adaptively learn implicit network features,
these deep-learning approaches {typically} rely heavily on a large number of training samples for stochastic optimisation \citep{kingma2014adam}. 
However, {often}
only 
a single graph {is observed}. 
{Only having one observed graph considerably}
limits the advantages and flexibility of many deep generative models on graphs trained via stochastic optimisation. 

Instead, 
these flexible architectures {can be used} 
to sample 
a larger number of subgraphs to create the training set.
For example, \citet{liu2017can} {constructs hierarchical layers of a graph and trains a GAN for each layer.} 
Graph generation based on representation learning and augmentation have also been considered {in} \cite{han2022generative} using re-sampled subgraphs with contrastive learning objectives. 
The CELL method from \cite{pmlr-v119-rendsburg20a} learns 
a probability distribution on networks from a single realisation by using an underlying random walk.  
Like NetGAN, it is however an approach which assumes edge independence. {These ``black-box'' models are hence expected to suffer from the deficiencies pointed out in \cite{chanpuriya2021power}. 
{Although} these methods may reproduce some features of the original data very well, fidelity issues may arise for subgraph counts.}

{While fidelity to the original network is one criterion for synthetic network generators, it is also desirable that the generated synthetic networks show some variability around the original network.}
When there is not much training data available,  
{there is a risk that} graph generation methods may create graphs which are not only similar to each other in their underlying probability distribution, but that are 
actually very similar or even close to identical {to the original graph and between generated samples}, see for example the discussion in \cite{karwa2016dergms}. 
Such samples {may} not reflect the true diversity of the underlying graph distribution and may hence lead to erroneous statistical inference. 
 
{Here,} we focus on the setting that the graph generative model
takes a simple, undirected, unweighted 
{network} 
as input and 
has the task 
to generate synthetic networks 
that 
could be viewed as plausibly coming from the same probability distribution as the one which generated the input network,
while reflecting 
the diversity of 
networks under this probability distribution.}
Motivated by insights from social network analysis \citep{wasserman1994social} we phrase our method in the setting of so-called {\it exponential random graph models  (ERGMs)}. For such models, a  
kernelised goodness of fit test similar to those in \citet{chwialkowski2016kernel, liu2016kernelized} called {\gKSS}~\citep{xu2021stein} 
is available. {We use}
{the proportion of rejected {\gKSS} goodness-of-fit tests 
to assess the quality of 
graph generators.
{For a particular graph sample, the fidelity is assessed via the total variation distance between empirical degree distributions and} 
the diversity 
{is assessed by} 
the pairwise Hamming distance.

\begin{figure}[t]
    \centering    
    \hspace{-0.\linewidth}\includegraphics[width=1.05\linewidth]{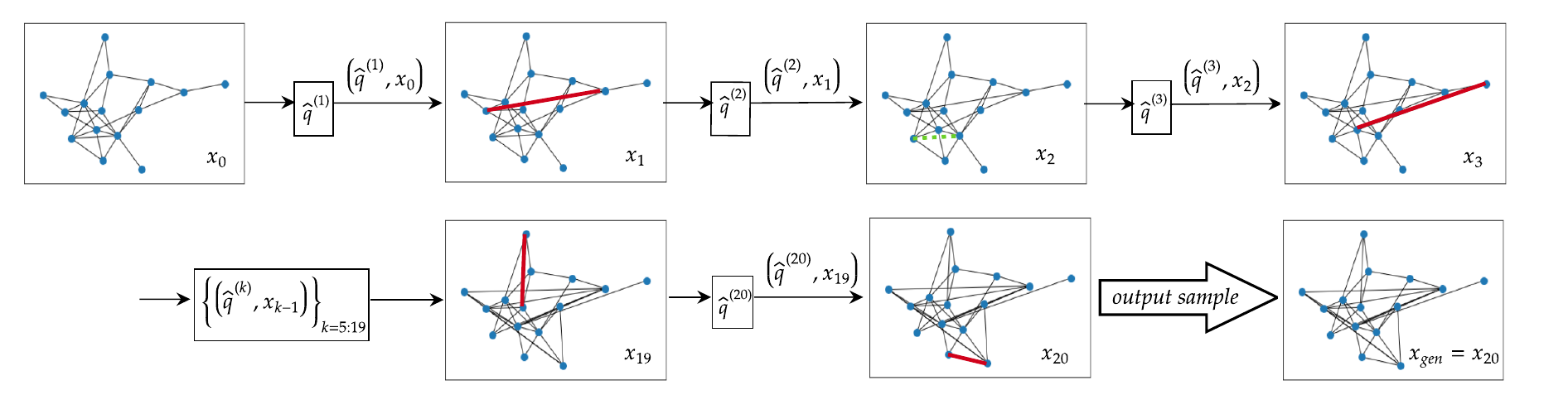}
    \vspace{-.7cm}
    \caption{
    {T}he {\steingen}  
    procedure: 
    $x_0$ is the input network;    in {step $k$} 
 we pick a vertex pair uniformly at random and re-sample its edge indicator from {the} (re-)estimated  conditional probability $\hat{q}^{(k)},$ 
   given the current graph sample $x_{k-1}$ {but excluding the picked vertex pair,}  
   to generate  {the} next graph sample $x_{k}$. 
   Changes in the intermediate steps  are {highlighted}: 
 the thicker {red} solid line {in $x_1, x_3, x_{19}, x_{20}$} denotes that an edge is added, 
 the {green} dashed line {in $x_2$ indicates} 
 {that} an edge is removed.
 {Only samples are shown which differ from the previous sample.}
 %
 The generated sample $x_{20} $ is visually different from the input graph $x_0$.}
   \vspace{-.8cm}
    \label{fig:steingen_illustrate}
\end{figure}

{A key ingredient of {\gKSS} is a Stein operator. This paper proposes to use a Stein operator not only for assessing goodness of fit,
but also for generating graph samples.} 
Inspired by 
Glauber dynamics-based Stein operators for exponential   random graph models \citep{reinert2019approximating},
we propose \textit{{\steingen}}, a novel synthetic sample generating procedure that generates graph samples by running an estimated Glauber dynamics; {in contrast to a classical Markov chain,} the target Glauber dynamics are iteratively re-estimated
{from the current sample.} 
An  
illustration of {{\steingen}}   is shown in \Cref{fig:steingen_illustrate}.
This procedure not only 
{avoids} 
parameter estimation  
{and} dealing with intractable normalising constants but also promotes sample diversity by exploring a rich set of graph configurations.
Moreover, 
{in contrast} to deep models,  
{\steingen} does not require a complicated training phase.
{The procedure is related {to}  MCMC methods, but while MCMC methods would run the same Glauber dynamics for each sample, in  {{\steingen}} the parameters of the Glauber dynamics are updated after every step.} {We also introduce {its classical MCMC version} {\steingen}\textunderscore{nr}, which does not carry out re-estimation after every step {and is hence faster}.} 
{T}he theoretical underpinning for {\steingen}} 
{draws} on approximation results for  
ERGMs 
{from} \cite{reinert2019approximating} {and} {\cite{xu2021stein}.
{In the {spirit} of \cite{xu2022agrasst}, 
{\steingen} could be generalised to input graphs without any underlying {assumptions about a statistical model which generated the graph;}  {the estimation procedure is then related to the one in \cite{bresler2017learning} where Glauber dynamics is used to estimate an undirected graphical model from data.} 
{However,} to illustrate and assess the performance of 
{\steingen} 
we here we concentrate on the ERGM setting.}





{This paper is organised as follows. 
\Cref{sec:background} 
{gives} notation and 
background {on generation of ERGMs, an ERGM Stein operator, the graph kernel Stein statistic {gKSS}, generalisations to other random graphs and the approximate graph Stein statistic {\AgraSSt} as a non-model based goodness-of-fit statistic}. Our 
{\steingen} procedure for graph generation  {is presented} in \Cref{sec:method}, {with} 
theoretical {guarantees}
for {\steingen} regarding consistency, diversity and mixing time given in \Cref{sec:analysis}. {\Cref{sec:analysis}
also introduces the total variation distance between empirical degree distributions as a measure of sample fidelity. 
Numerical results on simulation studies and {a} real network data {case study on a teenager friendship network described in \citet{steglich2006applying}} are provided in  \Cref{sec:exp}. {\Cref{sec:exp} also contains figures of the Hamming distance against (1 minus the total variation distance between the empirical degree distributions) to illustrate the performance of {\steingen} as well as CELL and NetGAN regarding fidelity and diversity.}
{A concluding discussion is found}
in \Cref{sec:conclusion}. 
The Appendix contains 
{more details on parameter estimation as well as additional experiments} on synthetic and real data sets{, including 
Padgett's Florentine marriage network \citep{padgett1993robust}, and protein-protein interaction networks for the Epstein-Barr virus \citep{hara2022comprehensive} and for yeast \citep{von2002comparative}}.
The code for the experiments is 
available at \url{https://github.com/wenkaixl/SteinGen_code.git}.

\section{Assessing the quality and diversity of graph samples}\label{sec:background}


\paragraph{Notation.}
{First we introduce some {notation}. We denote by  $\G^{lab}_n$ the set of vertex-labeled graphs on $n$ vertices, with $N =n(n-1)/2 $ possible undirected edges, and we encode $x \in \G^{lab}_n$ by an ordered collection of $\{0,1\}$-valued variables 
$x = (x^{(ij)})_{1 \le  i < j \le n} \in \{0,1\}^N$,  {where} $x^{(ij)}=1$ {if and only if}
there is an edge between $i$ and $j$. 

We denote an (ordered) vertex-pair 
$s=(i,j)$  by $s\in [N]:=\{1, \ldots, N\}$. 
Let $e_s \in  \{0,1\}^N$ be a vector with $1$ in 
 coordinate {$s$} and 0 in all others; 
$x^{(s,1)} = x + (1-x{^{(s)}}) e_s$ {has the $s$-entry  replaced of $x$ by the value 1, and} $x^{(s,0)} = x - x{^{(s)}} e_s$ {has  the $s$-entry  of $x$ replaced by the value 0; moreover,}
${x}_{- s}$ is the set of edge indicators with  entry $s$ removed. 
More generally, for a graph $H$, {its vertex set is denoted by $V(H)$ and its edge set is denoted by $E(H)$.}  

For $V(H) \le n$
and for $x\in\{0,1\}^N$, denote by $t(H,x)$  the number of
{\it edge-preserving} injections from $V(H)$ to $V(x)$; an injection $\sigma$ preserves edges if for all edges $vw$ of $H$ {with  $\sigma(v)<\sigma(w)$}, $x_{\sigma(v)\sigma(w)}=1$
(here assuming $\sigma(v)<\sigma(w)$). For  $v_H =| V(H)| \ge 3$  set
\begin{equation} \label{eq:th}
t_H(x)=\frac{t(H,x)}{n(n-1)\cdots(n-v_H+3)}.
\end{equation} For $e\in E(H)$, define the graph $H_{- e}$ 
to be $H$ with the edge $e$ removed, but retaining all vertices.
For $h: \{0,1\}^N \rightarrow \R$ we  let
$\Delta_s h(x) = h( x^{(s,1)}) - h(x^{(s,0)})$ 
and  $|| \Delta h|| = \max_{s \in [N]} \max_{x \in \mathcal{G}_n^{\rm{lab}}} | \Delta_s h (x)|.$  The Hamming distance between two graphs $x$ and $y$ is 
\begin{equation} \label{eq:hamm} 
d_H(x,y) = \sum_{s =1}^{N} | x^{(s)} - y{^{(s)}}|.\end{equation} 
The total variation distance $d_{TV}$ between two distributions $P$ and $Q$ on $\{0, 1, 2, \ldots\} $ is
\begin{equation}\label{eq:dtv}
d_{TV}(P,Q) = \frac12 \sum_{k=0}^\infty| P(\{k\}) -     Q(\{k\})|. 
\end{equation}
For a distribution $q$ and a function $f$  the expectation is $\E_{{q}} f = \E f(X) $ 
where $X $ has distribution ${q}$.} 
Vectors in $\R^L$ are  column vectors; the superscript $\top$ denotes the transpose. {The function ${\mathbb 1} (A)$ is the indicator function which equals 1 if $A$ holds, and 0 otherwise.} {The norm $\parallel \cdot \parallel_p$ denotes the $L_p$-norm.}

\subsection{Exponential random graphs}


{Exponential random graph models} (ERGMs) 
have been extensively studied in social network analysis \citep{wasserman1994social,holland1981exponential}; a special case are Bernoulli random graphs.
{F}ix $n\in\mathbbm{N}$ and $k$ connected  graphs $H_1,\ldots,H_L$ with $H_1$ a single edge, and for $\ell=1,\ldots,L$ abbreviate $v_\ell:=\abs{V(H_\ell)}$ (so $v_1=2$).
{Recalling \eqref{eq:th} let}
\begin{equation}\label{eq:thl}
t_\ell(x)=\frac{t(H_\ell,x)}{n(n-1)\cdots(n-v_\ell+3)}.
\end{equation}
If $H{=H_1}$ is a single edge, then $t_H(x)$ is twice the number of edges of $x$. In the exponent this scaling of counts matches Definition~1 in \citet{bhamidi2011mixing} and Sections~3 and~4 of \citet{chatterjee2013estimating}.
An ERGM {
as a random graph model 
for the collection  $x\in \{0,1\}^{N}$
can be defined
as follows.
see \cite{reinert2019approximating}.
\begin{definition}[Definition 1.5 in
\citet{reinert2019approximating}]
\label{def:ergm} 
Fix $n\in \N$ and $L \in \N$. Let $H_1$ be a single edge and for $l=2, \ldots, L$ let  $H_l$ be a connected graph on at most $n$ vertices. With the notation \eqref{eq:thl}, 
for  $\beta = (\beta_1, \dots, \beta_L)^{\top} {\in \R^L}$ 
we say that 
$X\in \G^{lab}_n$ follows  the exponential random graph model  $X\sim \operatorname{ERGM}(\beta, t)$ if for  {for all} $x\in \G^{lab}_n$,
\begin{equation}\label{eq:ergm}
    {\Pr} (X = x) = \frac{1}{\kappa_n(\beta)}\exp{\left(\sum_{l=1}^L \beta_l t_l(x) \right)}.
\end{equation}
\end{definition}
In \eqref{eq:ergm}, $\kappa_n(\beta)$ is a  normalisation constant, {which even for moderately sized graphs is usually intractable}.
When $L=1$ then  $\ergm (\beta)$ has the same distribution as an Erd\"os-R\'enyi (ER) graph 
with parameter $p$, in which edges appear independently with probability $p = e^{\beta} / ( 1 + e^(\beta).$

Parameter estimation $\hat \beta$ for $\beta$ is 
{only} possible when the 
statistic  
${t(x)=(}t_1 (x), \ldots, t_L(x)  {)}${, which is a}  sufficient statistic {
for the parameter $\beta=(\beta_1, \ldots, \beta_L)^\top$,} is 
specified a priori.
%
As the normalising constant $\kappa_n(\beta)$ is usually intractable, often 
Markov Chain Monte Carlo (MCMC) procedures are used for 
MLE-type parameter estimation \citep{snijders2002markov}; {these are abbreviated} 
 MCMCMLE.
{A computationally efficient but often less accurate  to MCMCMLE is provided by the} Maximum Pseudo-Likelihood Estimator (MPLE) \citep{besag1975statistical},  
\citep{strauss1990pseudolikelihood,schmid2017exponential}.  
Contrastive divergence \citep{hinton2002training} has also been used for parameter estimation in 
ERGMs  \citep{hunter2006inference}.
Additional material on parameter estimation methods can be found in \Cref{app:parameter}.

Apart from specific models such as ER graphs or an Edge-2Star (E2S) model \citep{mukherjee2013statistics}, parameter estimation for $\beta$ in \eqref{eq:ergm} may not 
be consistent 
\citep{shalizi2013consistency}. Convergence results for restricted exponential family models are discussed {in} \cite{jiang2018convergence}.
For an ERGM  with  specified sufficient statistic 
$t(x)$,
parameter estimation and 
graph generation 
is 
implemented in the R package 
\texttt{ergm} \citep{morris2008ergm}, see also  \cite{handcock2008statnet}; {this implementation is used as a baseline} 
for our investigation.


\subsection{Glauber dynamics and
Stein operators {for ERGMs}}\label{sec:glauber}

{\steingen} is based on Glauber dynamics and a Stein operator for ERGMs. {To explain these notions, we start with a Stein operator. For a probability distribution $q$ on a measureable space ${\mathcal{X}}$,}
an operator $\T_q$ acting on functions  $f: {\mathcal{X}}\to \mathbb{R}^d$ {for some $d$} is called a {\it Stein operator}
 {with {\it Stein class}  ${\mathcal F}$ of functions $f: {\mathcal{X}}\to \mathbb{R}^d$}
 if for all $f \in {\mathcal F} $ the so-called {\it Stein identity} holds: $\E_q[{\T}_q f] = 0$. 
 {A particular instance of such a Stein operator is an infinitesimal operator of a Markov process which has the target distribution $q$ as unique stationary distribution, see for example \cite{barbour1990stein}. 
{In \cite{reinert2019approximating}, it was shown that a suitable Markov process for $q$ the distribution of an $\ergm(\beta, t)$ given in \eqref{eq:ergm} is  provided by} Glauber dynamics  on $ \{0,1\}^N$,  
with transition probabilities 
\begin{equation}\label{eq:markov_transition}
    \P (x \rightarrow x^{(s,1)} ) = \frac1N -
 \P(x \rightarrow x^{(s,0)}) = \frac1N q_X({s,1} | x{_{-s}}),
\end{equation}
where
$q_X({s,1} | {x_{-s}}) := \P(X{^{(s)}}=1|{X_{-s}=x_{-s}}).$
{From \eqref{eq:ergm},}
$$
q_X({s,1}|{x_{-s}}) = 
\frac{\exp\left\{\sum_{\ell=1}^L \beta_\ell t_\ell(x^{(s,1)})\right\}} 
{\exp\left\{ \sum_{\ell=1}^L \beta_\ell t_\ell(x^{(s,1)})\right\} +\exp\left\{\sum_{\ell=1}^L \beta_\ell t_\ell(x^{(s,0)})\right\} }.
$$
As   
$\Delta_s t_\ell (x)
= t_\ell(x^{(s,1)}) - t_\ell(x^{(s,0)})$ depends only on $x_{-s}$,
cancelling out common factors, \begin{align}
q({s,1}| {x_{-s}} ) =  \exp\left\{\sum_{\ell=1}^L \beta_\ell {\Delta_s t(x)}\right\} 
\left( \exp\left\{ \sum_{\ell=1}^L \beta_\ell {\Delta_s t(x)}\right\} +1 \right)^{-1}  
=:
q({s,1} | {\Delta_s t(x)} ).
\label{eq:condexp}
\end{align}
Thus, the transition probability in \eqref{eq:markov_transition}} depends only 
on $\Delta_s t(x)$. 
Similarly, 
exchanging $1$ and $0$ in this formula gives $q({s,0}|{x_{-s}} )$. 
{The Stein operator from \cite{reinert2019approximating} is the} generator $\T_{\beta, {t}} $ of this Markov process; 
\begin{eqnarray}\label{eq:stein_operator}
    \T_{\beta, {t}} 
    = \T_{q}f(x)
=  \frac{1}{N} \sum_{s\in[N]} \A^{(s)}_q f(x)
\end{eqnarray}
with summands
\begin{eqnarray}\label{eq:stein_component}
    \A^{(s)}_q f(x) &=& {q({s,1} | {\Delta_s t(x)} )} \, \Delta_s f(x) + \left( f(x^{(s,0)}) - f(x)\right)
  .
\end{eqnarray}
{\begin{lemma} Each operator given in \eqref{eq:stein_component}satisfies the {\it Stein identity;} for each $s \in [N],$
\begin{eqnarray}\label{meanzeroproperty} 
\mathbb{E}_{q} \T_{q}^{(s)}f = 0.
\end{eqnarray}
\end{lemma}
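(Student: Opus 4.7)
The strategy is to condition on $X_{-s}$ and exploit the fact that, given $X_{-s}$, the variable $X^{(s)}$ is Bernoulli with parameter $q(s,1\mid x_{-s})$, where $q(s,1\mid x_{-s})$ coincides with $q(s,1\mid \Delta_s t(x))$ because $\Delta_s t(x)$ is determined by $x_{-s}$ alone. This observation is what makes both factors in the first summand of $\A_q^{(s)} f(x)$ functions only of $x_{-s}$, which is the main structural ingredient needed for cancellation.

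Concretely, I would first record two facts that are immediate from the definitions. First, since $t_\ell(x^{(s,1)})$ and $t_\ell(x^{(s,0)})$ depend only on $x_{-s}$, so do $\Delta_s t(x)$, $q(s,1\mid \Delta_s t(x))$, and $\Delta_s f(x) = f(x^{(s,1)}) - f(x^{(s,0)})$. Second, for any $x \in \G_n^{\rm lab}$,
\begin{equation*}
f(x^{(s,0)}) - f(x) = -x^{(s)}\,\Delta_s f(x),
\end{equation*}
since the left-hand side vanishes when $x^{(s)} = 0$ and equals $-\Delta_s f(x)$ when $x^{(s)} = 1$. Substituting this into \eqref{eq:stein_component} rewrites
\begin{equation*}
\A_q^{(s)} f(x) = \bigl(q(s,1\mid \Delta_s t(x)) - x^{(s)}\bigr)\,\Delta_s f(x).
\end{equation*}

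The proof then concludes by the tower property. Conditioning on $X_{-s}$ and using $\E[X^{(s)}\mid X_{-s}] = q(s,1\mid X_{-s})$ together with the fact that $q(s,1\mid \Delta_s t(X))$ and $\Delta_s f(X)$ are $\sigma(X_{-s})$-measurable, one obtains
\begin{equation*}
\E_q\bigl[\A_q^{(s)} f(X) \,\big|\, X_{-s}\bigr] = \Delta_s f(X)\,\bigl(q(s,1\mid X_{-s}) - q(s,1\mid X_{-s})\bigr) = 0,
\end{equation*}
and taking the outer expectation yields \eqref{meanzeroproperty}. Equivalently, one can interpret $\A_q^{(s)}$ as the generator of the two-state chain on $\{x^{(s,0)}, x^{(s,1)}\}$ whose stationary distribution is $q(\cdot \mid x_{-s})$, so the identity $\E_{q(\cdot\mid x_{-s})}[\A_q^{(s)} f \mid X_{-s} = x_{-s}] = 0$ is the standard generator/stationarity relation.

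There is no real obstacle here; the only point to be careful about is tracking the dependence on $x^{(s)}$ versus $x_{-s}$ in each summand of $\A_q^{(s)} f$, because once the first summand is recognised as a $\sigma(X_{-s})$-measurable function times the right conditional mean of $X^{(s)}$, cancellation is automatic. Everything relies only on the definitions \eqref{eq:condexp} and \eqref{eq:stein_component}, and on the elementary fact that the conditional distribution $q(\cdot\mid x_{-s})$ is the law of $X^{(s)}$ given $X_{-s}=x_{-s}$ under the ERGM \eqref{eq:ergm}.
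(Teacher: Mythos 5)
Your proof is correct and follows essentially the same route as the paper's: both condition on $X_{-s}$ and exploit the fact that $q({s,1}\mid x_{-s}) = q({s,1}\mid \Delta_s t(x))$ is exactly the conditional law of $X^{(s)}$, with both relevant factors being functions of $x_{-s}$ alone. The only difference is presentational --- you first collapse the operator to $\bigl(q({s,1}\mid \Delta_s t(x)) - x^{(s)}\bigr)\,\Delta_s f(x)$ and apply the tower property in one line, whereas the paper evaluates $\T_{q}^{(s)}f$ at $x^{(s,1)}$ and $x^{(s,0)}$ separately and cancels the weighted sum; your reformulation is slightly slicker but mathematically identical.
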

}
\begin{proof}
By conditioning {and using that $q({s,1}|x_{-s})= q({s,1}|\Delta_s t (x) )$},  
\begin{eqnarray*}
\mathbb{E}_{q} \T_{q}^{(s)}f 
&= &
\sum_x q(x_{-s}; x^{(s,1)}) \T_{q}^{(s)}f(x^{(s,1)})  + q(x_{-s}; x^{(s,0)}) \T_{q}^{(s)}f(x^{(s,0)})  \\
&=& 
{\sum_x q( x_{-s})\left(q({s,1} | \Delta_s t(x) )\T_{q}^{(s)}f(x^{(s,1)}) +q({s,0} | \Delta_s t(x) )\T_{q}^{(s)}f(x^{(s,0)}) \right).}
\end{eqnarray*}
{Here we used that $\Delta_s f(x^{(s,1)}) =  \Delta_s f(x^{(s,0)}) = \Delta_s f(x)$ does not depend on $x^{(s)}$.}
Substituting $x^{(s,1)}$ and $x^{(s,0)}$ in  \eqref{eq:stein_component} 
gives \begin{eqnarray*}
\T_{q}^{(s)}f(x^{(s,1)}) &= &q({s,1} | {\Delta_s t(x)} )
\Delta_s f(x) + (f(x^{(s,0)}) - f(x^{(s,1)}))\\ 
&= &(1 - q({s,0} | {\Delta_s t(x)} )) 
\Delta_s f(x) + (f(x^{(s,0)}) - f(x^{(s,1)}))\\
&=& - q({s,0} | {\Delta_s t(x)} ) 
\Delta_s f(x)
\end{eqnarray*} and $
\T_{q}^{(s)}f(x^{(s,0)}) = q({s,1} | {\Delta_s t(x)} )
\Delta_s f(x) . $
Thus, 
\begin{eqnarray*}
\lefteqn{q({s,1} | {\Delta_s t(x)} )\T_{q}^{(s)}f(x^{(s,1)}) +q ({s,0} | {\Delta_s t(x)} )\T_{q}^{(s)}f(x^{(s,0)})}\\
&=-q({s,1} | {\Delta_s t(x)})  q({s,}0| {\Delta_s t(x)} )\Delta_s f(x) +q({s,0}| {\Delta_s t(x)}  ) q({s,1}| {\Delta_s t(x)} )\Delta_s f(x) = 0.
\end{eqnarray*}
\end{proof}

Under suitable conditions,  the  ERGM 
Stein operator in \eqref{eq:stein_operator} is close to the  $G(n,p)$ Stein operator, see 
\citet{reinert2019approximating}, Theorem 1.7, with details provided in the proof of Theorem 1 in  \citet{xu2021stein}. To state the result, a technical assumption is required, which originates in 
 \citet{chatterjee2013estimating}.
{With the notation in Definition~\ref{def:ergm}  for ERGM$(\beta, t)$}, {for $a\in[0,1]$ we set}
$\Phi(a) := \sum_{\ell=1}^{L} \beta_\ell e_\ell a^{e_\ell -1},$ and $ 
\varphi(a) 
:= ({1+ \tanh(\Phi(a))})/{2}, 
$
where $e_\ell$ is the number of edges in $H_\ell$. For a polynomial $f(x) = \sum_{i=1}^k a_k x^k$ we use the notation
$|f|(x) = \sum_{i=1}^k |a_k | x^k$. 

\begin{Assumption}\label{assum:er_approx}
There is a unique $a^{*} \in [0,1]$ that solves 
$\varphi(a^{*}) = a^{*}$; moreover $\frac{1}{2}|\Phi|'(1) < 1$.
\end{Assumption}
{Such a value $a^*$ will be used} as  edge probability in an approximating Bernoulli random graph, $\operatorname{ER}(a^*)$. The following result holds. 

\begin{proposition}[\citet{xu2022agrasstarxiv}Proposition A.4] \label{ergmconvergence} 
Let $q(x)=\operatorname{ERGM}(\beta, t)$ {satisfy} Assumption \ref{assum:er_approx} and let ${\tilde q}$ denote the distribution of {ER$(a^*)$.}
Then there is an explicit constant $C=C(\beta, t, {K})$  such that for all $\epsilon > 0,$ 
$  \frac{1}{N} \sum_{s \in {N}} \E | (\A_q^{(s)}  f(Y) - \A_{\tilde{q} } ^{(s)} f(Y) ) | \le || \Delta f|| {n \choose 2} \frac{C(\beta, t)}{\sqrt{n}}.$
\end{proposition}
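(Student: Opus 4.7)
The plan is to exploit the common structure of the two Stein operators and reduce the claim to a concentration bound on the ERGM conditional edge probability around the fixed-point value $a^*$. Comparing \eqref{eq:stein_component} with the analogous ER$(a^*)$ operator, the second term $f(x^{(s,0)})-f(x)$ is identical for both, and only the prefactor of $\Delta_s f(x)$ differs, so
\[
\A_q^{(s)} f(x) - \A_{\tilde q}^{(s)} f(x) = \bigl(q(s,1\mid \Delta_s t(x)) - a^*\bigr)\,\Delta_s f(x).
\]
Consequently $|\A_q^{(s)} f(x) - \A_{\tilde q}^{(s)} f(x)| \le \|\Delta f\|\,|q(s,1\mid \Delta_s t(x)) - a^*|$, uniformly in $s$ and $x$, and the whole task is to bound the expectation of this prefactor gap.

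Next I would rewrite both conditional probabilities through the sigmoid $\sigma(u)=1/(1+e^{-u})$. From \eqref{eq:condexp}, $q(s,1\mid \Delta_s t(x)) = \sigma\bigl(\sum_\ell \beta_\ell \Delta_s t_\ell(x)\bigr)$. Since $(1+\tanh u)/2 = \sigma(2u)$, the fixed-point equation $\varphi(a^*)=a^*$ in Assumption~\ref{assum:er_approx} reads $a^* = \sigma(2\Phi(a^*)) = \sigma\bigl(2\sum_\ell \beta_\ell e_\ell (a^*)^{e_\ell-1}\bigr)$. Exploiting $|\sigma'|\le 1/4$ then gives the Lipschitz reduction
\[
|q(s,1\mid \Delta_s t(x)) - a^*| \le \tfrac14 \sum_{\ell=1}^L |\beta_\ell|\,\bigl|\Delta_s t_\ell(x) - 2e_\ell (a^*)^{e_\ell-1}\bigr|.
\]

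It then remains to bound $\E\bigl|\Delta_s t_\ell(Y) - 2e_\ell (a^*)^{e_\ell-1}\bigr|$ for $Y\sim q=\operatorname{ERGM}(\beta,t)$. The constant $2e_\ell (a^*)^{e_\ell-1}$ is, up to an $O(1/n)$ boundary term from the normalisation in \eqref{eq:thl}, precisely $\E_{\tilde q}[\Delta_s t_\ell(Y)]$: forcing the edge $s$ produces roughly $2e_\ell n^{v_\ell-2}$ edge-preserving injections, each surviving in ER$(a^*)$ with probability $(a^*)^{e_\ell-1}$. Under Assumption~\ref{assum:er_approx}, the Chatterjee--Diaconis limit theorem (the same ingredient underpinning Theorem~1.7 of \cite{reinert2019approximating} and Theorem~1 of \cite{xu2021stein}) tells us that subgraph-count statistics under $q$ still concentrate around these ER$(a^*)$ reference values, with $L^1$ deviation of order $n^{-1/2}$ and constants depending only on $\beta$ and on the list $H_1,\dots,H_L$. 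Plugging this into the previous two displays, averaging trivially over $s\in[N]$, and absorbing the finite sum $\sum_\ell |\beta_\ell| e_\ell$ into a constant $C(\beta,t)$ yields the stated rate.

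The main obstacle is this last step: transferring concentration from the reference ER$(a^*)$ measure to the ERGM itself. Under ER$(a^*)$, each $\Delta_s t_\ell$ is a bounded polynomial in independent Bernoulli edge indicators and Efron--Stein or Azuma-type bounds give the $n^{-1/2}$ rate directly. Showing the same under the much more complicated ERGM measure is where the contraction hypothesis $\tfrac12|\Phi|'(1)<1$ in Assumption~\ref{assum:er_approx} is essential: it makes $a^*$ an attracting fixed point of $\varphi$ and enables the coupling/mixing argument of Chatterjee--Diaconis to propagate ER concentration to the ERGM, with the dependence on the contraction margin absorbed into the constant $C(\beta,t)$.
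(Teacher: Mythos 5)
The paper itself contains no proof of \Cref{ergmconvergence}: it is imported verbatim from \citet{xu2022agrasstarxiv} (Proposition A.4), with the surrounding text pointing to Theorem~1.7 of \citet{reinert2019approximating} and the proof of Theorem~1 in \citet{xu2021stein}. So there is no in-paper argument to compare against; what you have written is essentially a reconstruction of the route taken in those sources, and the deterministic part of it is correct. The cancellation of the $f(x^{(s,0)})-f(x)$ term, the identity $q(s,1\mid\Delta_s t(x))=\sigma(\sum_\ell\beta_\ell\Delta_s t_\ell(x))$, the rewriting $a^*=\sigma(2\Phi(a^*))$ via $(1+\tanh u)/2=\sigma(2u)$, the Lipschitz bound $|\sigma'|\le 1/4$, and the identification of $2e_\ell(a^*)^{e_\ell-1}$ as $\E_{\tilde q}[\Delta_s t_\ell]$ up to $O(1/n)$ (the factor $2e_\ell$ counting oriented edges of $H_\ell$ mapped onto $s$) are all right. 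Note also that the stated bound carries a factor $\binom{n}{2}$ on the right-hand side, so the cleaner bound your argument aims for, $\|\Delta f\|\,C(\beta,t)\,n^{-1/2}$ after averaging over $s$, would imply the proposition with room to spare.

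The genuine soft spot is exactly where you place it, but your proposed fix does not quite work as described. You cannot obtain $\E_q\bigl|\Delta_s t_\ell(Y)-2e_\ell(a^*)^{e_\ell-1}\bigr|=O(n^{-1/2})$ by ``transferring concentration'' through the mean-comparison bound \eqref{eq:key}: for $h(x)=|\Delta_s t_\ell(x)-2e_\ell(a^*)^{e_\ell-1}|$ one has $\|\Delta h\|=O(n^{-1})$, so the right-hand side of \eqref{eq:key} is $O(n^{-1})\cdot\binom{n}{2}\cdot O(n^{-1/2})=O(n^{1/2})$, which is vacuous against a function of order one. Two repairs are available. If the expectation in the proposition is under $\tilde q=\mathrm{ER}(a^*)$, the edge indicators are independent and a direct second-moment computation (or Efron--Stein) gives $\Var_{\tilde q}(\Delta_s t_\ell)=O(n^{-1})$, hence the $L^1$ rate $n^{-1/2}$, and nothing needs to be transferred. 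If the expectation is under the ERGM $q$ itself, one must invoke the exchangeable-pairs concentration inequalities of Chatterjee--Diaconis (the statements underlying Lemma~3.1 of \citet{reinert2019approximating}, which control $r_H(x;ij)-a^*$ with high probability under $q$ in the high-temperature regime), not Theorem~1.7; that is where the contraction condition $\tfrac12|\Phi|'(1)<1$ actually does its work. Your appeal to ``the coupling/mixing argument'' points at the right literature but at the wrong theorem within it; with that citation corrected, the argument is complete at the level of rigour of the source it reconstructs.
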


{The behaviour of Bernoulli random graphs is relatively well understood due to the independence of the edge indicators in this model. Many of the theoretical guarantees in this paper are based on first showing that the ERGM in question is close to a suitable Bernoulli random graph, and then deriving the guarantee in question for the Bernoulli random graph.}

\subsection{The graph kernel Stein statistic \gKSS}\label{sec:gkss} 

{
{B}ased on the heuristic that if a distribution $p$ is close to $q$ then $\E_p[\T_q f(x)]\approx 0$, {the quantity} 
$\sup_{f \in {\mathcal F}}  | \E_p[\T_q f(x)]| $ can be used to assess 
{a} distributional distance between $q$ and $p$. The choice of ${\mathcal F}$ is crucial for {making this quantity computable;}
 see \cite{gorham2015measuring}. In \cite{chwialkowski2016kernel} and \cite{liu2016kernelized} it was suggested to use as ${\mathcal F}$ the unit ball of a reproducing kernel Hilbert space (RKHS). 
 A corresponding distributional difference measure, {the  {\it graph kernel Stein statistic (gKSS)}} based on the ERGM Stein operator \eqref{eq:stein_operator}, is introduced in \citet{xu2021stein} to perform a goodness-of-fit testing procedure for \emph{explicit} exponential random graph models 
 {even when only a single network is observed}. 
{F}or a fixed graph  $x$, {and an RKHS $\mathcal H$},
{to test goodness-of-fit to a $q=\ergm(\beta, t)$ distribution,} gKSS is defined as
\begin{align}\label{eq:gkss}
    \operatorname{gKSS}(q;x) 
    & = \sup_{\|f\|_{\H}\leq 1} \Bigg|
    {\frac{1}{N} \sum_{s\in[N]} \A^{(s)}_{{q,t}} f(x,)} 
     \Bigg|,
\end{align}
where the function $f$ is chosen to best distinguish $q$ from $x$.
For {an} RKHS $\H$ associated with kernel $K$, 
{by the reproducing property of $\H$,} 
the squared version of gKSS admits 
{an explicit} quadratic form representation {which can be readily computed,}
\begin{equation}
\label{eq:gkss_quadratic}    
\operatorname{gKSS}^2(q;x) = \Bigg\langle {\frac{1}{N} \sum_{s\in[N]} \A^{(s)}_{{q,t}} K(x,\cdot), \frac{1}{N} \sum_{u\in[N]} \A^{(u)}_{{q,t}} K(x,\cdot)
}
\Bigg\rangle.
\end{equation}

\subsection{Beyond ERGMs}
\label{sec:agrasst}

{While gKSS is only available for $\ergm$s,}
{in practice, {instead of assuming an ERGM,}
as in \cite{xu2022agrasst}, in \eqref{eq:markov_transition} we {could} use more general conditional probabilities,} 
based on network statistics. 
Let $t(x)$ be {a (possibly vector-valued)} network statistic which takes on finitely many values $\uk$,  and let {$ q_{\uk}( {s,1}|\Delta_s t (x) =\uk) = 
\mathbb{P}(X^{{(s)}} =1 | {\Delta_s} t(x) = \uk)$}; we assume that $q_{\uk} ( x) > 0$ for all $\uk$ under consideration. 
In analogy with 
\eqref{eq:condexp}, we introduce a Markov chain on $\G^{lab}_n$
which  transitions from $x$ to $x^{(s,1)}$ with probability 
\begin{equation}\label{eq:cond_prob}
    q_{\uk}({s,1}|\Delta_s t (x))  = \mathbb{P} (X^s = 1 | {\Delta_s t(x)} )
    ,
\end{equation} 
and 
from $x$ to 
$x^{(s,0)}$ with probability $q({s,0} | {\Delta_s t(x)}  )
= 1- q_t ({s,1} {| {\Delta_s t(x)}  }) ;$ no other transitions occur.
The corresponding Stein operator {is}
$
    \T_{{q}, {t}} f(x)
=  \frac{1}{N} \sum_{s\in[N]} \A^{(s)}_{{q,t}} f(x) 
$
with
\begin{align}
\mathcal{A}_{q,t}^{(s)} f(x) 
 =  
 q ({s,1}{| {\Delta_s t(x)}  }) f( x^{(s,1)})  + q({s,0}{| {\Delta_s t(x)}  })f(x^{(s,0)})  - f(x).
\label{eq:cond_stein}
\end{align}
For an ERGM,  $t(x)$ could be taken as a  vector of the sufficient statistics 
but here we do not even assume a parametric network model $q(x)$, and
$t(x)$ is specified by the user. 

If there is no  
closed-form conditional probability ${q({s,1}| {\Delta_s t(x)}  )}$ 
in \eqref{eq:cond_prob} {available}, 
the Glauber dynamics in \eqref{eq:markov_transition} 
can be 
carried out for an
estimated conditional distribution $ {\widehat q({s,1} | {\Delta_s t(x)}  )}$.
To compare the 
estimated model $ {\widehat q(x^{{(}s,1{)}} | {\Delta_s t(x)}  )}$ 
and the sample $x$, the
Approximate graph Stein statistic (AgraSSt) 
{from}
\citet{xu2022agrasst}  
takes functions in {an} appropriate RKHS
to 
distinguish the model from the data, and is defined as 
\begin{equation} \label{eq:agrassteq} \operatorname{AgraSSt}(\widehat q, t;x) 
 = \sup_{\|f\|_{\H}\leq 1} \Big|N^{-1}\sum_s \A^{(s)}_{\widehat q,t} f(x)\Big|.\end{equation}
Due to the reproducing property of {the} RKHS, 
AgraSSt admits a quadratic form,
\begin{equation}
{\rm{AgraSSt}}^2 (q; x) = N^{-2}  \sum_{s \in [N]} \sum_{s'\in [N]} 
\left\langle \A^{(s)}_{\widehat q,t}  K(x,\cdot), \A^{(s')}_{ \widehat q,t}  K(\cdot,x)\right\rangle_{\H}.
    \label{eq:agrasst_quadratic}
\end{equation}
In practice, $N$ can be large and AgraSSt takes $N^2$ {steps} to compute the double sum, which can be computationally inefficient. 
\citet{xu2022agrasst} considers an edge re-sampled form that improves the
computational efficiency;
it is given by 
\begin{equation}\label{eq:agrasst}
\widehat{\operatorname{AgraSSt}}(\widehat q, t;x) 
= B^{-2}\sum_{b,b'\in [B]} 
\left\langle \A^{(s_b)}_{\widehat q,t}  K(x,\cdot), \A^{(s_{b'})}_{ \widehat q,t}  K(\cdot,x)\right\rangle_{\H}.
\end{equation}
{{While} in principle, any multivariate statistic $t(x)$ can be used in this formalism, 
estimating the conditional probabilities using relative frequencies can 
be computationally prohibitive. 
Instead, here we consider simple summary statistics, such as} edge density, degree statistics or {the} number of neighbours connected to both vertices of $s$.  
The estimation procedure {for the transition probabilities} is presented in Algorithm \ref{alg:est_conditional}  {which is adapted from \cite{xu2022agrasst} {by estimating} 
the conditional probability 
using only one network.} 

\begin{algorithm}[t]
   \caption{Estimating {the} conditional probability {${\widehat q(x^{{(}s,1{)}} | {\Delta_s t(x)}  )}$}}
   \label{alg:est_conditional}
\begin{algorithmic}[0]
\renewcommand{\algorithmicrequire}{\textbf{Input: }}
\renewcommand{\algorithmicensure}{\textbf{Procedure:}}
\REQUIRE~~\\
network $x$; 
network statistics $t(\cdot)$;
\ENSURE~~\\
\STATE Estimate the conditional probability ${{q}}({s,1}|{\Delta_s t(x)} =\uk)$ of the edge $s$ being present conditional on ${\Delta_s t(x)} =\uk$ by 
the relative frequency ${\widehat{q}}({s,1}|{\Delta_s t(x)} =\uk)$ of an edge at $s$ when ${\Delta_s t(x)} =\uk$.
\renewcommand{\algorithmicensure}{\textbf{Output:}}
\ENSURE~~\\
${\widehat{q}}({s,1}|{\Delta_s t(x)} =\uk)
$
that estimates 
$q({s,1}|{\Delta_s t(x)} =\uk)$ in \eqref{eq:cond_prob}.
\end{algorithmic}
\end{algorithm}

\section{{\steingen}: generating {fidelitous} 
graph 
 samples
with diversity}\label{sec:method}

The idea {behind} {\steingen} is {as follows.}
If $\mathcal{A}$ is the generator of a Markov process $(X_t, {t \ge 0})$ with unique stationary distribution $\mu$  
then, {under regularity conditions}, 
running the Markov process from an initial {distribution},
$X_t$ converges to the stationary distribution {in probability as $t \rightarrow \infty$}.  In particular, Glauber dynamics as in \eqref{eq:markov_transition} preserves the stationary distribution. Thus, the original sample together with the sample after one step of the Glauber dynamics can be used to re-estimate the transition probabilities given by \eqref{eq:cond_prob}. 
This idea is translated into the {\steingen} procedure as follows.} 
\begin{enumerate}
    \item 

We estimate the conditional probability $ {q({s,1} | {\Delta_s t(x)})}$ {from the observed graph $x$} {using \Cref{alg:est_conditional}}; 
denote the estimator as ${\widehat q({s,1} | {\Delta_s t(x)})}$.
\item Given the current graph $x$ we pick a vertex pair $s \in [N]$ uniformly at random and replace $x^{(s)}$ by $(x^{(s)})'$ drawn 
{to equal 1 with probability ${\widehat q({s,1} | {\Delta_s t(x)})}$, and 0 otherwise.} 
Keeping all other edge indicators as in $x$ 
{results in} a new graph $x'$ which differs from $x$ by at most one edge indicator. 
\item Starting with this new graph $x'$, we estimate ${q({s,1} | {\Delta_s t(x')}  )}$, draw a vertex pair, and replace it by {an edge indicator} drawn from the re-estimated conditional distribution, {again estimated using \Cref{alg:est_conditional}}.
\item This procedure is iterated $r$ times, which $r$ chosen by the user.
\end{enumerate}
The {\steingen} procedure is illustrated in \Cref{fig:steingen_illustrate} {and the algorithm is given in \Cref{alg:steingen}}.
{The fact that} $\mathcal{A}$ is a Stein operator for {the distribution $q$ of an ERGM will be used 
to obtain theoretical guarantees. 
{We end this section with some remarks on the {\steingen} procedure.}


\begin{algorithm}[t]
   \caption{{The} {\steingen} procedure {for generating one network sample}}
   \label{alg:steingen}
\begin{algorithmic}[1]
\renewcommand{\algorithmicrequire}{\textbf{Input:}}
\renewcommand{\algorithmicensure}{\textbf{Objective:}}
\REQUIRE~~\\
    The observed network 
    $x$; network statistics $t(\cdot)$;  
    {number of {steps} $r$ to be executed}
\ENSURE~~\\
Generate 
{one network} sample 
\renewcommand{\algorithmicensure}{\textbf{Procedure:}}
\ENSURE~~\\
\STATE Set $x(0) = x$.
\FOR{$i=1:{{r}} $}
    \STATE  Uniformly sample a vertex pair $s\in[N]$
    \STATE Estimate the conditional distribution {$\widehat {q}({s,1} | {\Delta_s t(x(i-1))}  )$}
{using} \Cref{alg:est_conditional}.
    \STATE With probability  {$\widehat {q}({s,1} | {\Delta_s t(x(i-1))}  )$} set 
    $x(i)^{(s)}=1$; 
    otherwise, set $x(i)^{(s)}=0$.
    \STATE Set $x(i)^{(s')}  = x({i-1})^{(s')},$  for all $ s' \in[N], s' \ne s$. 
    \STATE If $x{(i)} \ne x{(i-1)}$, re-estimate 
    {$\widehat{q}({s,1}| {\Delta_s t(x(i))}  )$} using \Cref{alg:est_conditional};
\ENDFOR
\STATE Record {{$x({r})$}} 
as the 
generated sample
\renewcommand{\algorithmicrequire}{\textbf{Output:}}
\REQUIRE~~\\
The 
generated 
{network} sample {$x(r)$}
\end{algorithmic}
\end{algorithm}

\begin{remark}
\begin{enumerate}
\item 
Direct estimation {of the} conditional probability {using \Cref{alg:est_conditional}} avoids the {often} intractable normalising constant involved in parameter estimation.
%


\item 
A standard MCMC method estimates the Glauber dynamics transition probabilities only once. 
As $q({s,1}| {\Delta_s t(x)}$ is estimated from 
only one graph, {the standard MCMC sampler may not explore the sample space very well.}
The re-estimation steps in {\steingen} increase the variability. 
\item {We} 
also propose a variant, {{\steingen} with \underline{n}o \underline{r}e-estimate}  ({\steingen}\textunderscore{nr}), which estimates  the target ${{q({s,1} | {\Delta_s t(x)})}}$ only once, from the input graph $x$, and then proceeds via Gibbs sampling starting from {$x$}.
{This {variant} 
differs from the MCMC procedure 
in the {\tt{R}} packages \texttt{sna} and \texttt{ergm}, which  uses {$x$} 
only for parameter estimation and then generates samples using the Markov chain with the estimated parameters}.
\item {A guideline for choosing the number $r$ of {steps} is 
$r= N \log N + \gamma N + \frac12$, where $\gamma$ is the Euler-Mascheroni constant, } 
as will be derived in  Subsection \ref{subsec:mixing}.
{Similarly to MCMC procedures, one could alternatively add a stopping rule which depends on the observed difference  between sample summaries.}
\end{enumerate}
\end{remark}

\section{Theoretical analysis}\label{sec:analysis}



In this section we give theoretical guarantees under which, first, {\steingen} 
{is fidelitous in the sense  that it} generates networks from approximately the {correct} 
distribution (\Cref{subsec:consistency}),  and second, {we give guarantees on the diversity of} the resulting networks 
{(\Cref{subsec:diversity})}. 
{\Cref{subsec:mixing} discusses the mixing time of {\steingen}, whereas \Cref{subsec:stability} addresses the stability of the network generation.} {We start  with a result that underpins}  the {\steingen} procedure, showing that the Glauber dynamics preserves its underlying  $\ergm(\beta, t)$ distribution.
\begin{proposition}\label{prop:stat} 
If $X$ follows the $\ergm(\beta, t)$ distribution {and if the corresponding Glauber Markov process is irreducible,} then any sample from its Glauber dynamics \eqref{eq:markov_transition} also follows the $\ergm(\beta, t)$ distribution. 
\end{proposition}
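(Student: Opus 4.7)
The statement is the standard stationarity property for the Glauber dynamics of a Gibbs measure, so the plan is to verify directly that the $\ergm(\beta,t)$ distribution $q$ is invariant under the transition kernel \eqref{eq:markov_transition} and then induct on the number of steps. The cleanest route is to check \emph{detailed balance}, which for Glauber dynamics reduces to a one-site calculation because the only transitions with positive probability out of $x$ go to $x^{(s,1)}$ or $x^{(s,0)}$ for some $s\in[N]$.

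Concretely, I would fix $s\in[N]$ and consider the two configurations $x^{(s,0)}$ and $x^{(s,1)}$ which agree on $x_{-s}$. By \eqref{eq:markov_transition}, the transition $x^{(s,0)}\to x^{(s,1)}$ has probability $\frac{1}{N}q({s,1}\mid x_{-s})$ and the reverse has probability $\frac{1}{N}q({s,0}\mid x_{-s})$. Using the definition of conditional probability,
\be{
q(x^{(s,0)})\,q({s,1}\mid x_{-s}) \;=\; q(x_{-s})\,q({s,0}\mid x_{-s})\,q({s,1}\mid x_{-s}) \;=\; q(x^{(s,1)})\,q({s,0}\mid x_{-s}),
}
so $q(x^{(s,0)})P(x^{(s,0)}\to x^{(s,1)})=q(x^{(s,1)})P(x^{(s,1)}\to x^{(s,0)})$. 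Summing this detailed-balance identity over all $s$ and all pairs of neighbours yields invariance of $q$ under the one-step transition kernel. No intractable normalising constant appears because it cancels between $q(x^{(s,0)})$ and $q(x^{(s,1)})$ (which share the common factor $\kappa_n(\beta)^{-1}$), and in fact this cancellation is precisely what is already used in the derivation of \eqref{eq:condexp}.

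The rest is routine. Once stationarity is established, a short induction on the number of Glauber steps shows that if $X(0)\sim q$ then $X(k)\sim q$ for every $k\ge 0$, which is what the statement claims. Irreducibility is not strictly needed for this direction—it only ensures that $q$ is the \emph{unique} stationary distribution and hence justifies convergence from arbitrary starts; I would mention this in one sentence so the hypothesis in the statement is not left dangling.

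I do not expect any genuine obstacle: the main thing to be careful about is the bookkeeping between the conditional probabilities $q({s,1}\mid x_{-s})$ appearing in \eqref{eq:markov_transition} and the joint probabilities $q(x^{(s,1)}),q(x^{(s,0)})$, together with the observation (already exploited in \eqref{eq:condexp}) that these conditionals depend on $x_{-s}$ only through $\Delta_s t(x)$, which means the argument goes through identically in the more general conditional setting of \Cref{sec:agrasst}.
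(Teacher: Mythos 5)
Your argument is correct, but it takes a genuinely different route from the paper. The paper's proof is essentially a citation: it invokes Lemma~2.3 of \cite{reinert2019approximating} for the fact that $\ergm(\beta,t)$ is the stationary distribution of its Glauber Markov process, and then observes that a chain started in its stationary distribution remains in it at every later time. You instead prove stationarity from scratch via the one-site detailed-balance identity
$q(x^{(s,0)})\,q({s,1}\mid x_{-s}) = q(x_{-s})\,q({s,0}\mid x_{-s})\,q({s,1}\mid x_{-s}) = q(x^{(s,1)})\,q({s,0}\mid x_{-s})$,
which is exactly the standard reversibility argument for Glauber dynamics of a Gibbs measure and is valid here (the self-loop transitions trivially satisfy detailed balance, so summing over states gives invariance). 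What your approach buys is self-containedness, the explicit observation that the normalising constant $\kappa_n(\beta)$ cancels, and the correct remark that irreducibility is not needed for this direction of the claim but only for uniqueness of the stationary distribution --- a point the paper leaves implicit by folding irreducibility into the cited lemma. What the paper's approach buys is brevity and consistency with the source it builds on throughout. The only minor imprecision in your write-up is the phrase ``summing this detailed-balance identity over all $s$ and all pairs of neighbours yields invariance'': the clean statement is that detailed balance for every ordered pair of states (including $y=x$) gives $\sum_y q(y)P(y\to x)=\sum_y q(x)P(x\to y)=q(x)$, but this is a presentational quibble, not a gap.
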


\begin{proof} It is shown in  {Lemma 2.3 of \cite{reinert2019approximating} that {under the assumptions of \Cref{prop:stat},} 
{$\ergm(\beta, t)$} is 
the stationary distribution of {its Glauber Markov process}.
Thus, when started from the stationary distribution, $X \sim$ $\ergm(\beta, t)$, then {at every time $s > 0$ the state $X(s)$ of the} Glauber Markov process  has distribution $\ergm(\beta, t)$.}
\end{proof}
{From here onwards we make the standing assumption that the $\ergm(\beta, t)$ distribution is such that the corresponding Glauber Markov process is irreducible.} 

\subsection{Consistency of the estimation} \label{subsec:consistency}

In 
{{\steingen}}  we estimate the transition probabilities from the sampled network by counting. Our theoretical justification of this procedure 
holds in the so-called {\it high {temperature} regime}, as follows.
{We recall} the definition for $\ergm(\beta, t)$ in \eqref{eq:ergm} {and Assumption \ref{assum:er_approx}.} 
\begin{proposition}\label{prop:consistent} Let $q(x)=\operatorname{ERGM}(\beta, t)$ {satisfy} Assumption \ref{assum:er_approx}. 
For $x$ a realisation of  $\ergm(\beta, t)$, let $N_{\uk}(x) {= \sum_{s \in [N]} 
{\mathbbm 1}(\Delta_s t(x) = \uk)}$ be the number of vertex pairs $s\in [N] $ such that $\Delta_s t(x) = \uk$, and let $n_{\uk}(x){= \sum_{s \in [N]} x^{(s)} {\mathbbm 1}(\Delta_s t(x) = \uk)}$ be the number of vertex pairs $s\in [N] $ such that $\Delta_s t(x)= \uk$ and $s$ is present in $x$.  Then 
$ \widehat{q}({s,1}| \Delta_s t(x) = \uk)
= \frac{n_{\uk}(x) }{N_{\uk}(x)} {\mathbbm{1}(N_{\uk} \ge 1)} $ 
is a consistent estimator of $ {q}({s,1} | \Delta_s t(x) = \uk)$ as $n \rightarrow \infty.$
\end{proposition}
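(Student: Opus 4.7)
The strategy is to verify that the empirical conditional frequency $\widehat{q}(s,1\mid\Delta_s t(x)=\underline{k})$ has mean equal to the target $q(\underline{k}):=q(s,1\mid\Delta_s t(x)=\underline{k})$ and that its variance vanishes, then combine these via Chebyshev/Slutsky. The crucial structural fact is that the ERGM conditional probability depends on $x$ only through $\Delta_s t(x)$, so $q(\underline{k})$ is a deterministic constant once $\underline{k}$ is fixed, given explicitly by
\be{q(\underline{k}) = \frac{\exp\{\sum_{\ell=1}^L \beta_\ell k_\ell\}}{1+\exp\{\sum_{\ell=1}^L \beta_\ell k_\ell\}}.}

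First I would compute the mean of the numerator relative to $q(\underline{k})$ times the denominator. Writing
\be{n_{\underline{k}}(X) - q(\underline{k})\,N_{\underline{k}}(X) = \sum_{s\in[N]} \bigl(X^{(s)} - q(\underline{k})\bigr)\,\mathbbm{1}\bigl(\Delta_s t(X)=\underline{k}\bigr),}
and conditioning on $X_{-s}$ for each summand (noting that $\Delta_s t(X)$ is measurable with respect to $X_{-s}$, and using \eqref{eq:condexp}), each conditional expectation equals zero on the event $\{\Delta_s t(X)=\underline{k}\}$. Thus the sum has mean zero.

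Next I would control the second moment. This is the main obstacle, because under the ERGM the indicators $\mathbbm{1}(\Delta_s t(X)=\underline{k})$ for different $s$ are dependent through shared incident edges. I would proceed in two substeps. (i) Use Proposition \ref{ergmconvergence} (Assumption~\ref{assum:er_approx}, the high-temperature regime) to approximate $X$ by a Bernoulli graph $Y\sim\mathrm{ER}(a^*)$ at the level of expectations of bounded functionals, incurring an $O(n^{-1/2})$ error. (ii) Under $\mathrm{ER}(a^*)$, the edge indicators are independent, and $\mathbbm{1}(\Delta_s t(Y)=\underline{k})$ depends on $Y_{-s}$ through only $O(n)$ edges (those incident to the endpoints of $s$, since the statistics $t_\ell$ are built from connected subgraphs of bounded size). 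A standard counting argument then gives $\mathrm{Var}\bigl(n_{\underline{k}}(Y) - q(\underline{k})\,N_{\underline{k}}(Y)\bigr) = O(N \cdot n) = O(n^3)$, which is of smaller order than $N^2 = \Theta(n^4)$. Transferring back via Proposition~\ref{ergmconvergence} preserves this rate up to lower-order terms.

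In parallel I would show that the denominator $N_{\underline{k}}(X)$ is of order $N$ with probability tending to one, provided $\underline{k}$ lies in the support of $\Delta_s t$ (otherwise the claim is vacuous since $N_{\underline{k}}(X)=0$ and the indicator $\mathbbm{1}(N_{\underline{k}}\ge 1)$ kills the estimator). Again, this follows by first computing $\mathbb{E}_{\mathrm{ER}(a^*)}[N_{\underline{k}}(Y)] = \Theta(N)$, applying a McDiarmid-type bounded-differences inequality under $\mathrm{ER}(a^*)$ (bounded differences of order $O(n)$, giving concentration at the $\Theta(N)$ scale), and transferring to the ERGM via Proposition~\ref{ergmconvergence}. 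Finally, combining the two estimates:
\be{\frac{n_{\underline{k}}(X)}{N_{\underline{k}}(X)} - q(\underline{k}) = \frac{n_{\underline{k}}(X) - q(\underline{k}) N_{\underline{k}}(X)}{N_{\underline{k}}(X)} = \frac{O_P(n^{3/2})}{\Theta_P(n^2)} = o_P(1),}
which yields consistency as $n\to\infty$. The hardest part is making step (ii) rigorous: carefully enumerating the covariance contributions between summands and confirming that the ERGM dependency, although present, only inflates the Bernoulli variance bound by a constant factor in the high-temperature regime.
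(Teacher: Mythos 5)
Your opening step is correct and is actually cleaner than anything in the paper's own proof: conditioning on $X_{-s}$ and using that $q({s,1}\mid x_{-s})$ depends on $x$ only through $\Delta_s t(x)$ does give $\E\bigl[n_{\uk}(X)-q(\uk)N_{\uk}(X)\bigr]=0$ exactly. The paper takes a different route entirely: it does not decompose into mean and variance, but instead uses Theorem 1.7 of \cite{reinert2019approximating} (a bound on $|\E h(X)-\E h(Z)|$ for $Z\sim\mathrm{ER}(a^*)$ proportional to $\|\Delta h\|\binom{n}{2}n^{-1/2}$) to show that all moments of differences of normalised subgraph densities vanish, concludes that $n_{\uk}(X)/N_{\uk}(X)-n_{\uk}(Z)/N_{\uk}(Z)\to 0$ in probability, and then invokes Proposition A.2 of the AgraSSt supplement to identify the ER limit as $a^*$. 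Note in particular that the paper's limit is $a^*$, the fixed point of Assumption \ref{assum:er_approx}, i.e.\ the whole statement lives in the high-temperature collapse to the Bernoulli model.

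There are two genuine gaps in your variance/denominator steps. First, the transfer to $\mathrm{ER}(a^*)$ cannot be done with Proposition \ref{ergmconvergence}: that result compares the two Stein operators $\A^{(s)}_q$ and $\A^{(s)}_{\tilde q}$ evaluated at a \emph{common} random graph $Y$; it says nothing about $\E h(X)$ versus $\E h(Z)$ for $X\sim\ergm(\beta,t)$ and $Z\sim\mathrm{ER}(a^*)$. The correct tool is the comparison bound \eqref{eq:key}, but even that fails quantitatively for your functional: writing $h(x)=\bigl((n_{\uk}(x)-q(\uk)N_{\uk}(x))/N\bigr)^2$, flipping one edge changes $\Delta_s t$ for $O(n)$ pairs $s$, so $\|\Delta h\|=O(n^{-1})$ and the transfer error is $O(n^{-1})\cdot\binom{n}{2}\cdot O(n^{-1/2})=O(n^{1/2})$, which diverges. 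So the variance bound cannot be imported from the independent model this way; it would have to be established directly under the ERGM, e.g.\ via Dobrushin-type correlation decay in the regime of Assumption \ref{ass:2}. Second, the claim $\E_{\mathrm{ER}(a^*)}[N_{\uk}(Y)]=\Theta(N)$ is false in general: when $t$ contains, say, the 2-star count, $\Delta_s t(Y)$ is essentially a sum of two degrees, and a local limit theorem gives $\P(\Delta_s t(Y)=\uk)=\Theta(n^{-1/2})$ for typical $\uk$, hence $\E N_{\uk}(Y)=\Theta(n^{3/2})$ rather than $\Theta(n^2)$. Your final Chebyshev ratio $O_P(n^{3/2})/\Theta_P(n^2)$ then becomes $O_P(n^{3/2})/\Theta_P(n^{3/2})$ and no longer tends to zero without a sharper variance bound that accounts for the rarity of the event $\{\Delta_s t=\uk\}$. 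Both issues are repairable in principle, but they are precisely where the work lies, and the proposal currently defers them to a ``standard counting argument'' and an inapplicable citation.
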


\begin{proof}  Let $\astar$ be as in Assumption \ref{assum:er_approx}; {let $X \sim \ergm(\beta, t)$ and  $Z \sim$ ER$(a^*)$}.  Theorem 1.7 from \cite{reinert2019approximating} gives that, 
for any 
 $h:\{0,1\}^{\binom{n}{2}}\to \IR$, we have
\begin{eqnarray} \label{eq:key}
\abs{\IE h(X)-\IE h(Z)}
	\leq \norm{\Delta h}\binom{n}{2}\bbclr{4\bclr{1-\tsfrac{1}{2}\abs{\Phi}'(1)}}^{-1} \sum_{\ell=2}^L \abs{\beta_\ell} \sqrt{\Var(\Delta_{12} t_\ell(Z))}.
\end{eqnarray}
In particular if $h(x) = t(H,x) n^{-{|v(H)|}}$ is the density of appearances of graph $H$ in $x$, then $|| \Delta h || = O (n^{-2}),$
and  ${\Var(\Delta_{12} t_\ell(Z))}= O(n^{-1}).$
Thus for such functions $h$ the bound will tend to 0 with $n \rightarrow \infty;$ 
the statistics $t_\ell(x)$ are of this type. Also,  as $h(x)$ is bounded by ${\rm aut}(H)$, the number of automorphisms of $H$, we have for $g(x) = h(x) /{\rm aut}(H)$ that $0 \le g(x) \le 1$ and $g(x) = O(1)$ as well as  $\parallel \Delta (g^m) \parallel = O (n^{-2})$ for any 
$m>0$. Thus, {for independent realisations of} 
$X$ and {$Z$} on the same probability space, 
all moments of $T(X,Z) = g(X) - g(Z)$ converge {to} 0 as $n \rightarrow \infty$ and are uniformly bounded. 
{From the convergence of all moments} 
it follows that $T(X,Z)$ converges to 0 in probability and hence the difference between counts in the two network models converges to 0 in probability,
Thus, 
with the convention that $0/0=0$, $\frac{n_{\uk} (X) } {N_{\uk} ({X})} - \frac{n_{\uk} (Z) } {N_{\uk} (Z)} $ 
converges to 0 in probability as $n \rightarrow \infty$.

It remains to show that $\frac{n_{\uk} (Z) } {N_{\uk} (Z)} $
is a consistent estimator for $\astar$, {the edge probability of  the ER graph {$Z$}}. 
{To see this, we use} Proposition A.{2} in  {the supplementary information for} \cite{xu2022agrasstarxiv}, 
which gives that $\frac{n_{\uk} ({Z}) } {N_{\uk} ({Z})}$ {converges to 
$a^*$ {in probability as $n \rightarrow \infty$.} 
} 
\end{proof}

\subsection{Diversity guarantee} \label{subsec:diversity}

The next result shows that 
{\steingen} 
samples 
are {expected to be} well separated.

\begin{proposition}\label{prop:hamming}
    Under Assumption \ref{assum:er_approx}, the expected Hamming distance between two consecutive steps in the Glauber dynamics converges to $2 a^* ( 1- a^*).$
\end{proposition}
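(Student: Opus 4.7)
The plan is to observe that one Glauber step flips at most one edge indicator, so $d_H(X, X') \in \{0, 1\}$ and $\E[d_H(X,X')] = \Pr(X'^{(s)} \ne X^{(s)})$ with $s$ chosen uniformly at random. Assuming $X\sim\ergm(\beta,t)$ (the stationary measure, by \Cref{prop:stat}), I would first condition on $X_{-s}$. By the very definition of the conditional probability, $X^{(s)} \mid X_{-s} \sim \text{Bernoulli}(q(s,1|X_{-s}))$, while the resampled $X'^{(s)}$ is independently $\text{Bernoulli}(q(s,1|X_{-s}))$, so the conditional flip probability equals $2\,q(s,1|X_{-s})(1 - q(s,1|X_{-s}))$. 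Using that $q$ depends on $x_{-s}$ only through $\Delta_s t(x)$ by \eqref{eq:condexp}, together with exchangeability in $s$, this gives
\[
\E[d_H(X,X')] = 2\, \E\!\left[q(s,1|\Delta_s t(X))\bigl(1 - q(s,1|\Delta_s t(X))\bigr)\right].
\]

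Next, letting $g(x) = q(s,1|\Delta_s t(x))(1 - q(s,1|\Delta_s t(x)))$ for a fixed $s$, I would transfer this expectation from $X\sim\ergm(\beta, t)$ to $Z \sim \operatorname{ER}(a^*)$ via the approximation \eqref{eq:key}. A direct counting argument shows that flipping some edge $s' \ne s$ alters each $\Delta_s t_\ell(x)$ by $O(n^{-2})$, since affected injections of $H_\ell$ must use both edges $s$ and $s'$ and therefore contribute $O(n^{v_\ell - 4})$ to the unnormalised count against a denominator of order $n^{v_\ell - 2}$; flipping $s' = s$ leaves $\Delta_s t_\ell$ unchanged. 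Composing with the smooth maps $u \mapsto q(s,1|u)$ and $p\mapsto p(1-p)$ then yields $\|\Delta g\| = O(n^{-2})$ by the chain rule for finite differences, and combined with the $O(n^{-1/2})$ bound on $\sqrt{\Var(\Delta_{12} t_\ell(Z))}$ (as in the proof of \Cref{prop:consistent}), the bound \eqref{eq:key} gives $|\E_X g(X) - \E_Z g(Z)| = O(n^{-1/2}) \to 0$.

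For $Z \sim \operatorname{ER}(a^*)$, a routine count over ways to root $H_\ell$ at $s$ yields $\E[\Delta_s t_\ell(Z)] \to 2 e_\ell (a^*)^{e_\ell - 1}$, with variance $O(n^{-1})$, so $\Delta_s t_\ell(Z) \to 2 e_\ell (a^*)^{e_\ell - 1}$ in probability. By continuity of the sigmoid $\sigma(u) = 1/(1+e^{-u})$ and bounded convergence, $\E_Z g(Z) \to p^*(1-p^*)$ where $p^* = \sigma(2\Phi(a^*))$. Using the identity $\sigma(2u) = (1+\tanh u)/2$, the definition of $\varphi$ preceding \Cref{assum:er_approx}, and the fixed-point equation $\varphi(a^*) = a^*$, one gets $p^* = a^*$, whence $\E[d_H(X,X')] \to 2 a^*(1-a^*)$. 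The main obstacle I anticipate is the bookkeeping behind $\|\Delta g\| = O(n^{-2})$: the nonlinear composition through $q(1-q)$ needs the chain-rule justification sketched above, but since $g$ is bounded in $[0,1/4]$ with bounded partial derivatives in each $\Delta_s t_\ell$, the required scaling is inherited from the combinatorial estimate on $\|\Delta(\Delta_s t_\ell)\|$.
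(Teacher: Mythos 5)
Your route is genuinely different from the paper's: you compute the one-step flip probability directly at stationarity as $2\,\E\bigl[q(s,1\mid \Delta_s t(X))\bigl(1-q(s,1\mid \Delta_s t(X))\bigr)\bigr]$ and then try to transfer this expectation from $\operatorname{ERGM}(\beta,t)$ to $\operatorname{ER}(a^*)$ via \eqref{eq:key}. The paper instead computes the flip probability exactly only for the $\operatorname{ER}(a^*)$ Glauber chain and controls the ERGM chain by the triangle inequality for $d_H$ together with the coupling from Remark~1.14 of \cite{reinert2019approximating}, which matches all but $O(n^{3/2})$ of the $N$ edge indicators. Your first step (conditional flip probability $2q(1-q)$ at stationarity) and your last step (the identification $\sigma(2\Phi(a^*))=\varphi(a^*)=a^*$) are both correct.

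The transfer step, however, has a genuine gap: the claim $\|\Delta g\|=O(n^{-2})$ is false. An injection of $H_\ell$ that contributes to $\Delta_s t_\ell$ and is affected by flipping $s'$ must map edges of $H_\ell$ onto both $s$ and $s'$. When $s$ and $s'$ are vertex-disjoint this pins down four image vertices and gives your $O(n^{v_\ell-4})$; but when $s$ and $s'$ share a vertex only three image vertices are pinned down, the count is $O(n^{v_\ell-3})$, and hence $\Delta_{s'}(\Delta_s t_\ell)=O(n^{-1})$. Concretely, for $H_\ell$ a triangle, $\Delta_s t_\ell(x)$ is proportional to $n^{-1}$ times the number of common neighbours of the endpoints of $s$, and flipping a pair $s'$ adjacent to $s$ changes that count by exactly one, so $\Delta_{s'}(\Delta_s t_\ell)=\Theta(n^{-1})$. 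Since $\|\Delta h\|$ in \eqref{eq:key} is a \emph{maximum} over all pairs $s'$, you only get $\|\Delta g\|=O(n^{-1})$, and the right-hand side of \eqref{eq:key} becomes $O(n^{-1})\cdot\binom{n}{2}\cdot O(n^{-1/2})=O(n^{1/2})$, which does not vanish. To rescue your (otherwise attractive) argument you would need a bound that sums $\|\Delta_{s'}g\|$ over $s'$ rather than taking $N$ times the maximum --- only $O(n)$ pairs $s'$ are adjacent to $s$, so $\sum_{s'}\|\Delta_{s'}g\|=O(1)$, in the spirit of the $\|c\|_q$ term in \Cref{thm:key} --- or a direct concentration statement for the local statistic $\Delta_s t_\ell(X)$ under the ERGM; neither is supplied by \eqref{eq:key} as you invoke it.
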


{\begin{proof} In the Glauber dynamics at each step at most one edge is flipped. The Hamming distance $d_H$ from \eqref{eq:hamm} between two consecutive instances is 1 when there is a flip, and otherwise, it is 0. 
Thus, {if $X(u)$ and $X(u+1)$ are two consecutive steps in the Glauber dynamics of}  $\ergm (\beta, t)$, and if $Z(u)$ and $ Z(u+1)$ are two consecutive steps of the ER($\astar$) Glauber dynamics, then by the triangle inequality
\begin{eqnarray*}
\lefteqn{ \mathbbm{E}    d_H(X (u), X(u+1))}\\
&\le&  \mathbbm{E} d_H(X (u), Z(u))  +
\mathbbm{E}  d_H(Z(u), Z(u+1)) + \mathbbm{E}  d_H(Z(u+1), X(u+1)) . 
\end{eqnarray*}
This inequality holds for any coupling between $X(u)$ and $Z(u)$, and for any coupling between $X(u+1)$ and $Z(u+1)$.
{In the Bernoulli random graph, edge indicators are independent, and thus, }with $S$ denoting the randomly chosen index from $[N]$,
\begin{eqnarray*}
\lefteqn{ 
\mathbbm{E}   d_H(Z(u), Z(u+1))  }\\ &=& \frac1N \sum_{s \in [N]} \mathbb{P} (Z(u)  \ne Z(u+1) | {S=s} 
)
    )
    \\&=& \frac1N \sum_{s \in [N]} \left\{ \mathbb{P} (Z (u){^{(s)}}  =1 ,  Z(u+1){^{(s)}}  =0 | {S=s} )
    +  \mathbb{P} (Z(u){^{(s)}}  =0 , Z_s(u+1)  =1 | {S=s} ) 
    \right\}\\
    &=& 2 \astar ( 1 - \astar).
\end{eqnarray*}
Moreover, from Remark 1.14 in \cite{reinert2019approximating} it follows that we can couple $X(u)$ and $Z(u)$ so  that there are on average $O(n^{3/2})$ edges that do not match. For this coupling, 
$ \mathbbm{E} d_H(X (u), Z(u))  = O(n^{-\frac12}),
$
and the same argument gives that we can couple $X(u+1)$ and $Z(u+1)$ such that
$   \mathbbm{E} d_H(X (u+1), Z(u+1))  = O(n^{-\frac12}).
$
Hence, as $n \rightarrow \infty$, the expected Hamming distance converges to the 
value $2 a^*(1-a^*)$. 
\end{proof}

We note that $2 a^* ( 1- a^*)$ is the expected Hamming distance {between two consecutive networks generated by}  the Glauber dynamics of an ER($\astar$) model. {Thus, the expected Hamming distance between two independent ER($\astar$) graphs $Y$ and $Z$ is
$\E d_H(Y,Z) = \sum_{s \in [N]} \mathbb{P} (Y^{(s)} \ne Z^{(s)}) = 2 N a^* (1 - a^*). $
{As} $  \E d_H(Y,Z)/N $ is independent of the number of vertices $n$,  in our experiments we 
scale the Hamming distance by $1/N$.}


\subsection{Mixing time considerations} \label{subsec:mixing}

{Although the distributions {of the generated graphs are close to that of the model generating the input graph $x$,}
the Glauber Markov process quickly `forgets' its starting point {$x$}. Indeed {Theorem 5 in  \citet{bhamidi2011mixing} gives  that under Assumption \ref{assum:er_approx},
the mixing time of the Glauber Markov chain is of order $\Omega(N\log N)$; we recall that the mixing time of a Markov chain is the number of steps needed in order to guarantee that the chain, starting from an arbitrary state, is within distance $e^{-1}$ from the stationary distribution. 
As in each step of the Glauber dynamic, a vertex pair is chosen independently with the same probability, the time until all $N$ possible vertex pairs have been sampled has the ``coupon collector
problem'' distribution, with mean $N \log N + \gamma N + \frac12 + O(N^{-1}$) 
(where $\gamma$ is the Euler-Mascheroni constant) and variance bounded by $\pi^2 N^2 / 6$.
the time of mixing, two chains started in different initial conditions will both be close to the stationary distribution. Hence, as stopping rule in the {\steingen} algorithm \Cref{alg:steingen} we suggest to use $r = \lfloor N \log N + \gamma N + \frac12 \rfloor$.} 
{For example when $n=50$ then $r=$ 9419.}

\subsection{Stability of {\steingen}} \label{subsec:stability}

To show the {stability of the network generation}  we  use Theorem 2.1 from \cite{reinert2019approximating}, as follows. 
Define the  $N\times N$ \emph{influence matrix} $\hat R$ for the Glauber dynamics of the distribution of $X$ by
$
\hat R_{r s}:=\max_{x\in\{0,1\}^N}\bbabs{q_X\bclr{\clr{x\s{s,1}}\s{r,1}|x\s{s,1}}-q_X\bclr{\clr{x\s{s,0}}\s{r,1}|x\s{s,0}}}.
$
Then $\hat R_{r s}$ is the maximum amount that the conditional distribution of the $r^{th}$ coordinate of $x$ can change due to a change in the $s^{th}$ coordinate of $x$. For $1\leq p\leq \infty$, let $\norm{\cdot}_p$ be the $p$-norm on $\IR^N$, and define the matrix operator $p$-norm
$
\norm{A}_p:=\sup_{v\not=0} \frac{\norm{A v}_p}{\norm{v}_p}.
$
{
\begin{Assumption} \label{ass:2}
 Assume that the distribution of $X$ is such that there is an $N\times N$ 
matrix $R$ satisfying that for all $r,s\in[N]$, and some $1\leq p 
\leq
\infty$ and $\eps=\eps_p>0$, we have 
$\hat R_{rs}\leq R_{rs}$ and $\norm{R}_p\leq 1-\eps<1.$
\end{Assumption}
If $X\sim ERGM(\beta, t)$  then \cite{reinert2019approximating} show that Assumption \ref{assum:er_approx} implies  Assumption \ref{ass:2}. However, for a stability result, we may be interested in  {comparing $X$ and $Y$ having possibly different distributions, such as} 
$X$ and $Y$ having the distribution of two consecutive steps in the Glauber dynamics. The general result is as follows.}

\begin{theorem}[Theorem 2.1 in \cite{reinert2019approximating}] \label{thm:key}
Let $X,Y\in\{0,1\}^N$ be random vectors, $h:\{0,1\}^N\to\IR$, and assume that the continuous time Glauber dynamics for the distribution of $X$ is irreducible {and satisfies Assumption \ref{ass:2}}.   For $s\in[N]$, set $c_s:=\norm{\Delta_s h}$ and 
$v_s(y):=\abs{q_X(y\s{s,1}|y)- q_Y(y\s{s,1}|y)}$, and $c:=(c_1,\ldots, c_N)$ and $v(Y):=(v_1(Y),\ldots, v_N(Y))$.  
Then for $q:=p/(p-1)$, {we have}
$
\babs{\IE h(X)-\IE h(Y)}  
\leq  \eps^{-1} \norm{c}_{q}  \, \IE \norm{v(Y)}_p.
$
\end{theorem}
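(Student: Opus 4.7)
The plan is to prove the bound by the standard generator-based Stein's method strategy combined with the known contraction provided by the influence matrix $R$.

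First I would introduce the continuous-time Glauber generator $\mathcal{A}$ for the distribution of $X$ acting on $h:\{0,1\}^N\to\IR$ via
\[
\mathcal{A} h(y) = \sum_{s\in[N]} \bigl\{ q_X(y\s{s,1}\mid y)\bigl(h(y\s{s,1})-h(y)\bigr) + q_X(y\s{s,0}\mid y)\bigl(h(y\s{s,0})-h(y)\bigr) \bigr\}.
\]
By irreducibility the distribution of $X$ is the unique stationary distribution, so the Poisson equation $\mathcal{A} f = h - \IE h(X)$ admits a solution given by $f(y) = -\int_0^{\infty}\bigl(P_t h(y) - \IE h(X)\bigr)\,dt$, where $P_t$ is the semigroup associated with $\mathcal{A}$. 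The key quantitative step, and the main obstacle, is to bound the discrete derivatives of $f$: one shows that there exists a vector $d=(d_1,\dots,d_N)\in\IR^N$ with $\norm{\Delta_s f}\leq d_s$ for every $s$, and that $\norm{d}_{q}\leq \eps^{-1}\norm{c}_{q}$. This is obtained by differentiating through the semigroup, coupling two Glauber chains that differ in a single coordinate, and iterating; the per-step increment of the influence under this coupling is controlled by $\hat R_{rs}\leq R_{rs}$, so after integrating over time the resulting bound involves the Neumann series $\sum_{k\geq 0} R^k$, whose $p$-norm is at most $\eps^{-1}$ under Assumption~\ref{ass:2}.

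With this derivative bound in hand, I would then carry out the comparison step. Since $-\mathcal{A} f(Y) = h(Y)-\IE h(X)$, taking expectations gives
\[
\IE h(X) - \IE h(Y) = -\IE\,\mathcal{A} f(Y) = \IE\bigl[\mathcal{A}_Y f(Y) - \mathcal{A} f(Y)\bigr],
\]
where $\mathcal{A}_Y$ is the Glauber generator built from the conditional distribution $q_Y$ of $Y$ rather than $q_X$; a routine conditioning on $Y_{-s}$ (exploiting that the inner expectation telescopes) shows $\IE\,\mathcal{A}_Y f(Y)=0$, justifying the identity above. Using $q(y\s{s,0}\mid y)=1-q(y\s{s,1}\mid y)$ to combine the two transition directions, for each $s$ the $s$-th summand of $(\mathcal{A}_Y-\mathcal{A})f(y)$ simplifies to $\bigl(q_Y(y\s{s,1}\mid y)-q_X(y\s{s,1}\mid y)\bigr)\Delta_s f(y)$, so in absolute value it is bounded by $v_s(y)\,\norm{\Delta_s f}\leq v_s(y)\,d_s$.

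Finally I would apply H\"older's inequality with exponents $p$ and $q=p/(p-1)$:
\[
\bigl|\IE h(X)-\IE h(Y)\bigr| \leq \IE\sum_{s\in[N]} v_s(Y)\,d_s \leq \norm{d}_{q}\,\IE\norm{v(Y)}_p \leq \eps^{-1}\norm{c}_{q}\,\IE\norm{v(Y)}_p,
\]
which is the claimed inequality. The only delicate step is the derivative bound in the first paragraph; everything else is essentially algebraic manipulation of the generator together with a single application of H\"older.
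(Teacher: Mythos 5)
The paper does not prove this statement; it imports it verbatim as Theorem~2.1 of \cite{reinert2019approximating}, so the only meaningful comparison is with the proof in that reference, which uses exactly the generator-method architecture you describe: solve the Poisson equation $\mathcal{A}f = h - \IE h(X)$ via $f = -\int_0^\infty (P_t h - \IE h(X))\,dt$, replace $\IE\,\mathcal{A}f(Y)$ by $\IE[(\mathcal{A}_Y-\mathcal{A})f(Y)]$ using the Stein identity $\IE\,\mathcal{A}_Y f(Y)=0$, reduce each summand to $(q_Y-q_X)\Delta_s f$, and finish with H\"older. Your reduction of the $s$-th summand and the final H\"older step are correct. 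The one place where your argument is asserted rather than proved is the central estimate $\Vert\Delta_s f\Vert\le d_s$ with $\Vert d\Vert_{q}\le \eps^{-1}\Vert c\Vert_{q}$: this requires constructing a coupling of two copies of the continuous-time chain started from configurations differing in coordinate $s$, showing that the vector of probabilities that the copies disagree in coordinate $r$ at time $t$ is dominated componentwise by $e^{-t(I-R)}$ applied to $e_s$, and integrating over $t$ to produce $(I-R)^{-1}=\sum_{k\ge 0}R^k$; you name this mechanism but do not carry it out, and essentially all of the difficulty of the theorem lives there. A second, smaller point: the quantity you need to control is the $\ell_q$ operator norm of the transposed Neumann series, $\Vert\sum_{k}(R^\top)^k\Vert_q$, which equals $\Vert\sum_k R^k\Vert_p\le\sum_k(1-\eps)^k=\eps^{-1}$ by duality of conjugate operator norms; as written you pass from a $p$-norm bound on $\sum_k R^k$ directly to a $q$-norm bound on $d$ without noting this duality step.
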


{
Thus, if the conditional probabilities $q_X$ {satisfy \Cref{ass:2} and if the differences $v_s$ between $q_X$ and $q_Y$ are small}  then the networks which they generate {are close}, measured by expectations of test functions.  If the networks $X$ and $Y$ are from two consecutive steps of the Glauber dynamics, 
Proposition \ref{prop:consistent} shows that for large $n$ the corresponding estimated conditional probabilities will {indeed} be close in probability.}

\subsection{{Measuring sample fidelity via 
total variation} distance 
} \label{subsec:tv}

{In this paper we assess the goodness of fit of the generated data to the hypothesised model using {\gKSS} and {\AgraSSt}.}
{To assess fidelity {of graph samples} empirically,  we use the total variation distance $d_{TV}$ from \eqref{eq:dtv} between the empirical degree distributions of a synthetically generated network and the input network.} 
For two networks $X^{(i)}, i=1,2$ 
 the empirical probability mass function of their degrees is  
$G^{(i)} (k) = \frac{1}{n} \sum_{v=1}^n \mathbbm{1} (\deg^{(i)} (v) = k), \quad i=1,2, $ with $\deg^{(i)} (v)$  denoting the degree of vertex $v$ in $X^{(i)}, i=1,2$. {The total variation distance between these empirical distribution functions is} 
\begin{eqnarray*}
 d_{TV} (G^{(1)}, G^{(2)}) 
 &=& \frac12 \sum_{k=0}^{n-1} 
 \left| \frac{1}{n} \sum_{v=1}^n  \mathbbm{1} (\deg^{(1)} (v) = k)
 - \frac{1}{n} \sum_{v=1}^n \mathbbm{1} (\deg^{(2)}  (v) = k)\right| .
\end{eqnarray*}
{For a collection of $r$ generated networks} with $G^{(0)} $ the degree distribution in the observed network and $G^{(i)} $, $i=1,2$, the degree distribution in the $i^{th}$ simulated network, as in \cite{xu2021stein} we  {measure fidelity by}
 the average empirical total variation distance 
 \begin{equation} \label{eq:empdtv}
 \frac{1}{{r}} \sum_{i=1}^{{r}} d_{TV} (G^{(0)}, G^{(i)}).\end{equation}
{To interpret this measure we note} 
 that even if two networks {are independently generated} from the same  distribution, their empirical degree distributions $G^{(1)}$ and $G^{(2)}$ may not completely agree.  Assume that $\P (G^{(1)} (k) =  G^{(2)} (k)) <1$; this is the case for example in Bernoulli random graphs with edge probability $0 < p < 1$.}  While $\mathbbm{E} G^{(1)} (k) = \mathbbm{E} G^{(2)} (k)$ for all $k$, the {expectation of the } empirical total variation distance does not  vanish.
To see this, 
 as  $G^{(1)} (k)$ and $G^{(2)} (k)$ are exchangeable {if they are generated from the same distribution}, 
\begin{eqnarray*}
{\mathbbm{E}  d_{TV} (G^{(1)}, G^{(2)}) }
 &=& \frac12 \sum_{k=0}^{n-1} 
 \mathbbm{E} \left| G^{(1)} (k) - G^{(2)}  (k)\right| \\
&=&  \sum_{k=0}^{n-1} 
 \mathbbm{E}  ( G^{(1)} (k) - G^{(2)}  (k) ) \mathbbm{1} (  G^{(1)} (k) >  G^{(2)} (k) ) 
 \end{eqnarray*} 
{by symmetry}. As $ \mathbbm{E}  ( G^{(1)} (k) ) = \mathbbm{E} ( G^{(2)}  (k) ) $ and as 
 $\P (  G^{(1)} (k) \ne G^{(2)}  (k) ) > 0$
 it follows that 
 ${\mathbbm{E}  d_{TV} (G^{(1)}, G^{(2)}) }>0$
 so that even if the distributions were identical, the average empirical total variation distance  would not vanish. 
 
 {When the underlying network model is a Bernoulli random graph, ER$(p)$,  the degree of a randomly picked vertex is binomially distributed with parameters $n-1$ and $p$.
 However,} 
 due to the dependence in the degrees, the random variables $G^{(1)}$ and $G^{(2)}$ are not quite binomially distributed. Using the binomial approximation from \cite{soon1996binomial} with the coupling from \cite{goldstein1996multivariate}, {we can approximate the degree distribution by the distribution of a collection of independent binomially distributed random variables $D_k \sim \text{Binomial}(n-1, p_k)$ with 
 $p_k = \mathbbm{P}(\deg^{(1)} (v) = k) = {{n-1} \choose k} p^k (1-p)^{n-1-k} $, for $k=0, \ldots, n-1$. 
 {Then} $G^{(i)}(k) \approx \frac1n D^{(i)}_k$ where $D^{(i)}_k \sim \text{Binomial}(n-1, p_k)$ are 
 independent, {and} 
 \begin{eqnarray*}
 {\mathbbm{E}  ( | G^{(1)} (k) - G^{(2)}  (k) | }
 &\approx&  \frac1n \mathbbm{E}  
 | D^{(1)}_k - D^{(2)}_k|  = \frac1n \mathbbm{E}  \{ (D^{(1)}_k + D^{(2)}_k -  2 \min (D^{(1)}_k , D^{(2)}_k)\}.
 \end{eqnarray*}
In \cite{craig1962mean} it is shown that 
$ \frac12  \sqrt{\frac{n}{\pi}} \le \min (D^{(1)}_k , D^{(2)}_k) \le np + 2 p (1-p) \sqrt{\frac{n}{\pi}}, 
$
and hence
\begin{eqnarray}
     \frac1n \mathbbm{E}  
 | D^{(1)}_k - D^{(2)}_k| \in  \left[4 p (1-p) \sqrt{\frac{1}{n \pi}} , \sqrt{\frac{1}{n \pi}}  \right]. \label{eq:dtvbound}
\end{eqnarray}
We use the upper bound $\sqrt{\frac{1}{n \pi}}$ as guideline.
}

 %
{As} the underlying network generation method is unlikely to be $G(n,p)$, 
we {give the bound} 
here for heuristic consideration only. 
{In} our synthetic experiments, we simulate the empirical total variation distance between the degree distributions under the null hypothesis.}

\section{Experimental results}\label{sec:exp}

{In our experiments, we assess} 
the two {\steingen} {generators from}
\Cref{sec:method}: 
\textbf{SteinGen\textunderscore{nr}} uses a fixed 
${q({s,1} | {\Delta_s t(x)}  )}, s \in [N],$
estimated from the input graph; 
\textbf{SteinGen} re-estimates ${q({s,1} | {\Delta_s t(x)}  )}$
using the generated graph samples.
We 
{compare the {\steingen} generators against two types of} 
graph generation methods.
The first type estimates the parameters $\beta$ in \eqref{eq:ergm} and then uses MCMC to generate samples from the estimated distribution. {Here we use for parameter estimation}
\textbf{MLE}, the maximum likelihood estimator based on {an} MCMC approximation \citep{snijders2002markov};
\textbf{MPLE}, 
{a} maximum pseudo-likelihood estimator, {see} \citet{schmid2017exponential}, and \textbf{CD}, {an estimator based on the} contrastive divergence approach \citep{asuncion2010learning}. 
{Our} implementation 
{uses the} \texttt{sna} suite \citep{butts2008social} and {the} \texttt{ergm} package  \citep{ergm} in R. The second type of graph generation method 
{is implicit. Here we explore the implicit graph generators \textbf{CELL} \citep{pmlr-v119-rendsburg20a} and \textbf{NetGAN} \citep{bojchevski2018netgan}.} 
\textbf{CELL} 
{is} a  
cross-entropy low-rank logit approximation 
that learns 
underlying random walks for the graph generation\footnote{{The CELL implementation is adapted from the code at  \url{https://github.com/hheidrich/CELL}}.}. 
{\textbf{NetGAN} is a}
graph generative adversarial network method.  
Both \textbf{CELL} and \textbf{NetGAN} can learn and generate graphs from a single observation.

\subsection{{Measuring fidelity and diversity}} To assess sample quality in terms of fidelity to the {distribution generating the} input network,  we report various network statistics for the generated networks. Moreover, for networks generated from synthetic models, we report rejection rates of a gKSS test {as described in \Cref{sec:gkss}}; for real-world networks, we use an AgraSSt test {as described in \Cref{sec:agrasst}}.
{As kernel}  
we use 
{a} Weisfeiler-Lehman (WL) graph kernel  \citep{shervashidze2011weisfeiler} {with level parameter 3, because WL graph kernels}  
have been shown to be effective for graph assessment problems \citep{weckbecker2022rkhs, xu2021stein}.
{The gKSS test uses a Monte-Carlo based test threshold which in the synthetic experiments is determined using $200$ samples generated from the true generating model.} {When} AgraSSt and {\steingen}\textunderscore{nr} {use the same network statistics, as both} estimate the conditional probability once, 
we expect the rejection rate of {\steingen}\textunderscore{nr} {to be} 
close to the test level.
We report the proportion of rejected gKSS {or {\AgraSSt}}  tests for a test at level $\alpha = 0.05$; we aim for a proportion of rejected tests being close to this level.  

{We also assess the fidelity of individual samples. For a 
{sample} of ${m}$ generated 
{networks} $x(1), \ldots, x({m})$  and $x(0)$ the initial network, we use}
as sample-based measure for fidelity the average empirical total variation distance \eqref{eq:empdtv} between the empirical degree distributions of the generated network {samples} and the input network, a measure which is also employed in \cite{xu2021stein} and motivated by the graphical test in \cite{hunter2008goodness}, {see \Cref{subsec:tv}}.
 To assess sample diversity, {for a trial $i$ we first generate a network $x(0,i)$ which we then use as input network for generating a sample $x(1,i), \ldots, x(m,i)$ of size $m$}. We report the {scaled average Hamming distance ${\bar{d}}_H(i):=\frac1{{m}N} \sum_{j=1}^{m} d_H(x(j,i), x(0,i))$} between 
 {the} generated sample{s} and the input network. 
 {Here we divide by $N$, the maximal Hamming distance on networks with $n$ vertices and $N$ potential edges, to keep the measure bounded between 0 and 1; see \Cref{subsec:diversity}.} 
 {If we run $w$ trials then we report the average ${\bar{d_H}} =
 \frac{1}{w} \sum_{i=1}^w {\bar{d}}_H(i)$
 where ${\bar{d}}_H(i)$ is the average Hamming distance in trial $i$. To indicate variability we also report the average standard deviation 
 $sd:= \frac{1}{w} \sum_{i=1}^w sd (d_H(i) ),
 $
 where $sd(d_H(i))
 = \left( \frac1m \sum_{j=1}^m (d_H ( x(j,i), x(0,i)) - {\bar{d}}_H(i))^2\right)^{1/2}$
 is the standard deviation of the Hamming distance in trial $i$.}
 The variability of the Hamming distance is used to illustrate the variability in the generated samples.


{To visualise the fidelity-diversity trade-off w}e plot (1 - average empirical total variation distance  of the degree distributions), {abbreviated $1 - {TV}$ Distance,} against the {average} Hamming distance. {For {$1 - {TV}$ Distance} again we average over $w$ trials.} The closer to the top-right corner, the more 
fidelitous and diverse are the generated samples. In the interpretation of the plots, we take note of the theoretical bounds from 
\Cref{subsec:tv}. 


\subsection{Synthetic network
simulations
}
\label{sec:expsim}
{In our} synthetic experiments, input networks 
are generated under four different ERGMs.
{With $E(x)$  the number of edges,  $S_2(x)$} the number of 2Stars, and $T(x)$ the number of triangles in a network $x$, we generate networks {on $n=50$ vertices}  
from 
\begin{enumerate}
\item  an Edge-2Star (E2S) model \citep{mukherjee2013statistics}, with  unnormalised density 
$
q(x) \propto \exp\{\beta_1 E(x) + \beta_2 S_2(x)\};
$ 
\item an Edge-Triangle (ET) model \citep{yin2016asymptotic} with unnormalised density
$
q(x) \propto \exp\{\beta_1 E(x) + \beta_2 T(x)\};
$
\item an Edge-2Star-Triangle (E2ST) model \citep{yang2018goodness,xu2021stein} with  unnormalised density
$
q(x) \propto \exp\{\beta_1 E(x) + \beta_2 S_2(x) + \beta_3 T(x)\};
$
and 
\item an  ER($\beta_1)$ model.
\end{enumerate}
We choose $\beta_1=-2$, $\beta_2=\frac{1}{n}$, $\beta_3=-\frac{1}{n}$. 
These models satisfy the  fast mixing condition in \citet{bhamidi2011mixing} {and Assumption \ref{assum:er_approx},} and are unimodal. 
{In this example, the choice of 
$r = N \log N + \gamma N + 0.5$ gives $r=9419$ as number of {steps}.
}

\paragraph{{Sample quality via gKSS}}
We first compare the quality of generated samples using {the rejection rate of gKSS tests at test level $0.05$}. 
For an input graph generated from each ERGM, we generate ${m=}30$ samples from each graph-generating method as one trial. We run $w=50$ trials 
The average gKSS value {over these $50$ trials is} shown in \Cref{tab:gkss_results}. 
In addition,  we  
report the rejection rate for the $50$ ``observed'' input samples  
as a baseline. 
The gKSS rejection rates 
which are closest to the true level $0.05$ are coloured red and the second-closest are coloured blue.

\begin{wraptable}{r}{0.61\textwidth}
    \vspace{-1.cm}
    \centering
    \caption{Rejection rates of gKSS tests {for $50$ trials,  with 30} 
    generated samples {for each trial, at} test level $\alpha=0.05$; {the} closest value to {the} test level {is} in red, and the second closest is in blue.}
   \vspace{-0.3cm}
    \label{tab:gkss_results}
    \begin{tabular}{l|rrrr}
\toprule
{Model} &         E2S &         ET &         E2ST &         ER \\
\midrule
MPLE &  0.393 &  0.133 &  0.370 &  {\color{blue}0.040} \\
CD &  0.413 &  0.200 &  0.403 &  0.030 \\
MLE &  0.253 &  0.127 &  0.250 &  0.036 \\
CELL &  0.080 &  0.100 &  0.190 &  0.020 \\
{NetGAN} &  0.110 &  0.160 &  0.280 &  0.086 \\
{\steingen}\textunderscore{nr} &  {\color{blue}0.021} &  {\color{blue}0.075} &  {\color{blue}0.105} &  {\color{red}0.050} \\
{\steingen} &  {\color{red}0.030} &  {\color{red}0.040} &  {\color{red}0.100} &  0.035 \\
\hline
Observed &  0.040 &  0.050 &  0.080 &  0.030 \\
\bottomrule
\end{tabular}   
\end{wraptable}

 \Cref{tab:gkss_results} shows that for E2S, ET and E2ST, {the}
parameter estimation methods have much higher gKSS {rejection rates than the other methods}. {The best performance is achieved by {\steingen}, followed by its faster variant {\steingen}\textunderscore{nr}. For the ER model, all generation methods achieve reasonable rejection rates, with  {\steingen}\textunderscore{nr} being completely on target in our simulation and MPLE not far behind.}  

\bigskip
\paragraph{Sample diversity via Hamming distance}

{If all generated samples are near-identical to the input network then the synthetic data may be of limited value. To assess variability,} 
%
Figure \ref{fig:hamming} {shows the average Hamming distance {between each generated sample and the input network} for {samples from the different methods for the above ERGMs (excluding ER),} 
plus/minus one standard deviation (sd), with the $x$-axis indicating the number {$r$} of {steps} 
generated {for {\steingen} and {\steingen}\textunderscore{nr}.  
The other methods do not generate consecutive samples and their Hamming distances are hence drawn as straight lines.  {For comparison we also give the theoretical bound $2 a^* ( 1-a^*)$ on the Hamming distance from Proposition \ref{prop:hamming}.}

\begin{figure}[t]
    \centering
    \begin{center}
    {
{\includegraphics[width=0.95\textwidth]{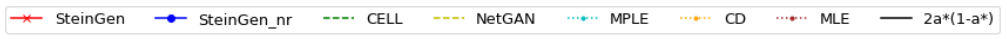}}}
    \subfigure[E2S Model]{
    \includegraphics[width=0.31\textwidth]{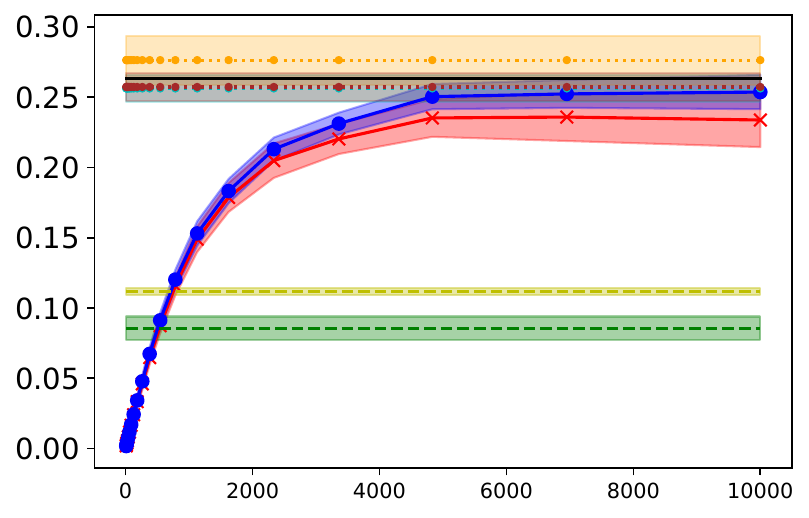}}
    \subfigure[E2ST Model]{
    \includegraphics[width=0.31\textwidth]{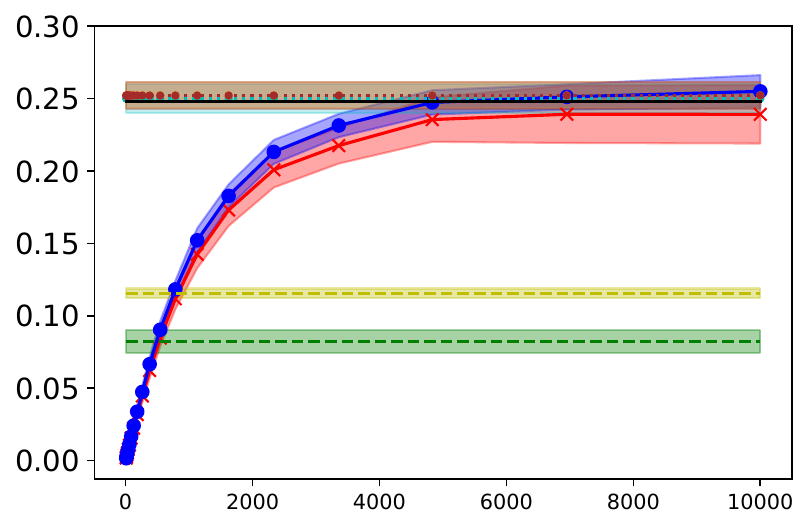}}
    \subfigure[ET Model]{
    \includegraphics[width=0.31\textwidth]{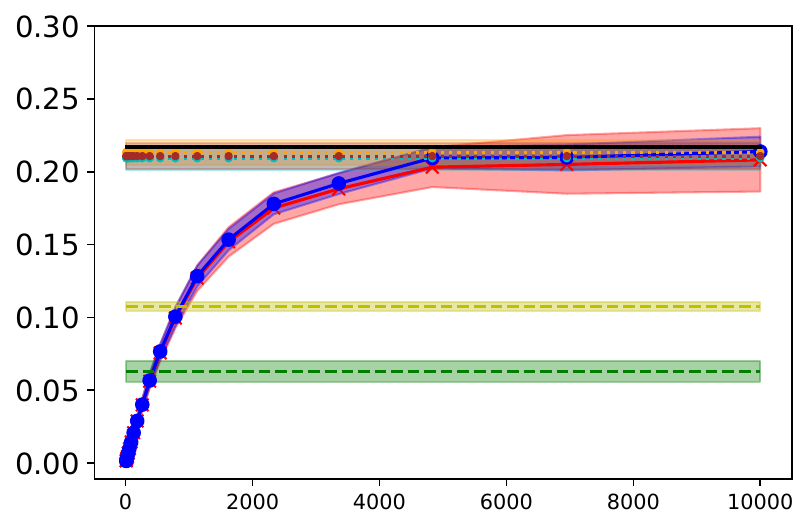}}
    \vspace{-0.1cm}
    \caption{Hamming distance between generated samples and {the} {initial {synthetic}  network} 
    {for a networks on ${n=}50$ vertices}; 
    {average} and standard deviation 
    of {$m=$}50 trials. 
    {In E2S, $
    2a^\ast(1-a^\ast)=0.263$;
    in E2ST, $
    2a^\ast(1-a^\ast)=0.248$;
    in ET, $
    2a^\ast(1-a^\ast)=0.217$.}
    \label{fig:hamming}}
    \end{center}
    \vspace{-1.5cm}
\end{figure}

From Figure \ref{fig:hamming}, we see that the parameter estimation methods have largest Hamming distance from the input network. As these methods use the input network only for parameter estimation and then generate networks at random, this finding is perhaps not surprising. However,
{\steingen} samples have much higher Hamming distance compared to those from CELL, indicating higher sample diversity. 
With the number of {steps in {\steingen},}
 the Hamming distance for both {\steingen}\textunderscore{nr} and {\steingen} samples increases and then stabilises and approaches the theoretical limit ${2}a^* ( 1 - a^*) $ from \Cref{prop:hamming}; this stabilisation provides another  natural criterion for {the number of steps for which to run} 
{\steingen} and {\steingen}\textunderscore{nr}. 
{While in \Cref{subsec:mixing} the theoretical underpinning gives $N \log N + \gamma N + 0.5=9419$ as guideline for the number of steps, in Figure \ref{fig:hamming} the results are already close to stable for step sizes of around 
$r=4000$, less than half of $N \log N + \gamma N + 0.5$.}
The variance of 
the Hamming distance for {\steingen} after stabilisation is higher than that of {\steingen}\textunderscore{nr}, indicating that the re-estimation procedure increases sample diversity.
The sample diversity achieved by {\steingen} and {\steingen}\textunderscore{nr} is close to that achieved by the parameter estimation methods.

{\paragraph{Fidelity-diversity trade-off} 
\Cref{fig:frontier}
shows the trade-off between fidelity and diversity in the simulated networks. 
The dotted red line shows the estimated {$1 - {TV}$ Distance} 
using 
$50$ simulated networks under the null model.
{As expected, a}s the number of {steps} $r$ increase{s}, the Hamming distances (diversity) for {\steingen} samples increase{s} while  $1 - TV$ Distance (fidelity) decrease{s}. 
However, 
the sample fidelity decreases only a by small amount and approaches the empirical total variation distance (the red dashed line). Compared to CELL and NetGAN, {\steingen} with large $r$ produces samples with simultaneously higher diversity and higher fidelity. 
The  bound $(\sqrt{n \pi})^{-1}$ {from \eqref{eq:dtvbound}} 
is not too far off. 

More synthetic experiments can be found in \Cref{app:synth}, {including 
different re-estimation intervals for updating the estimates of $\hat{q}$ (\Cref{app:reest}),  examples with multiple graph observations (\Cref{app:mult}), and {improving} the sample quality by selecting {samples} with the smallest {\gKSS} value  (\Cref{app:select}).} 
{The choice of graph kernels in {\gKSS} is explored in detail in \cite{weckbecker2022rkhs}.}


\begin{figure}[t!]
    \centering
    {
{\includegraphics[width=0.68\textwidth]{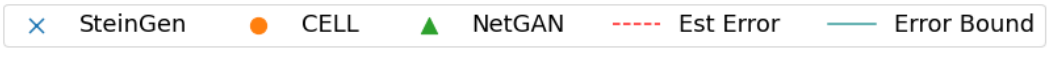}}}
    \subfigure[E2S Model]{
    \includegraphics[width=0.318\textwidth]{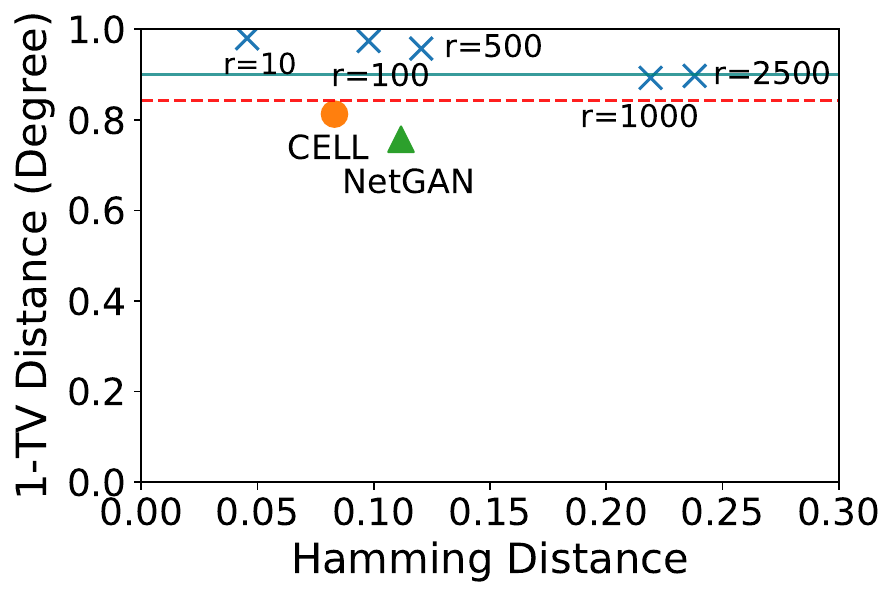}}
    \subfigure[E2ST Model]{
    \includegraphics[width=0.318\textwidth]{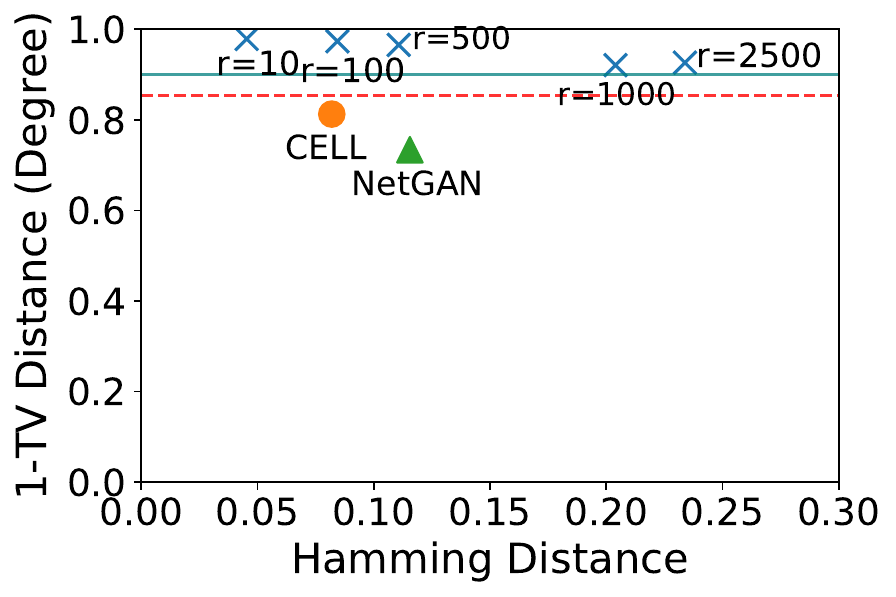}}
    \subfigure[ET Model]{
    \includegraphics[width=0.318\textwidth]{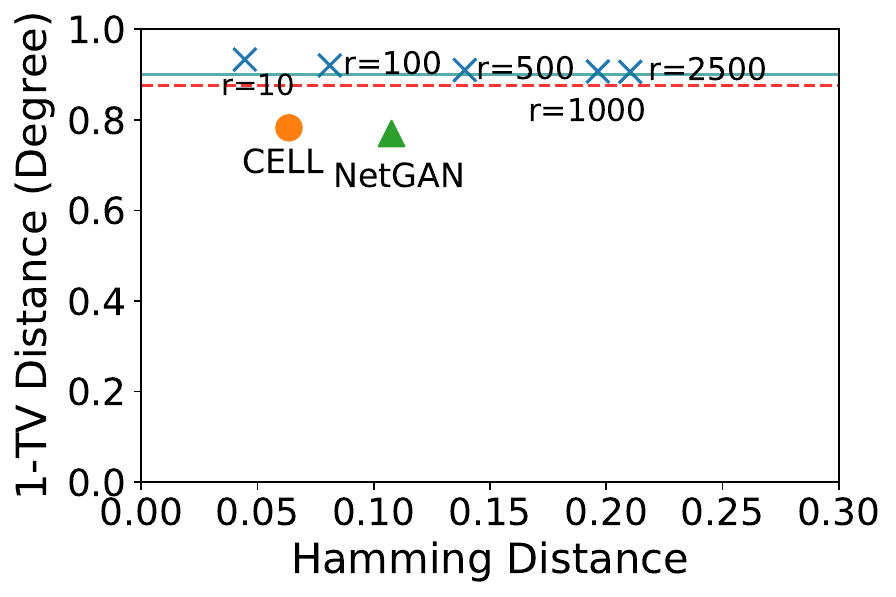}}
    \vspace{-0.5cm}
    \caption{Hamming distance versus TV distance {of} degree using generated samples; $r$ is the number of {steps} in {\steingen}.
    {The estimated error is estimated from simulations, the error bound is the bound $(\sqrt{\pi n})^{-1}$ from \eqref{eq:dtvbound}.
\label{fig:frontier}}
   }
\vspace{-1.12
cm}
\end{figure}

\paragraph{Runtime comparison}
{The time, in seconds,}
for generating a sample from each method for {our} E2ST model, {are, in order of speed: {\steingen}\textunderscore{nr} (0.0244), CELL (0.0487), {NetGAN (0.5265)}, {\steingen} (0.0559), MPLE (0.0929), and MLE (0.5090).}
{
{\steingen}\textunderscore{nr} is  the fastest method, but it has a less accurate gKSS test rejection rate than {\steingen}.}



\WFclear
\subsection{Real network applications}
\label{subsec:real}

As a real network example {we} use a 
teenager friendship network with 50 vertices described in \citet{steglich2006applying}; 
\cite{xu2021stein} propose an E2ST ERGM. {
\Cref{tab:teenager} shows some of its network summary statistics.} 

We generate $50$ samples from the input graph and compute  
the sufficient statistics Edge Density, Number of 2Stars and Number of Triangles for the generated samples from each method; their averages and standard deviations are shown in \Cref{tab:teenager}. 
The reported SteinGen values use $r=600$ {steps}.

\begin{table}[t!]
    \centering
    \caption{Teenager friendship network. {Closest to observed is in red, second-closest in blue.}
    \label{tab:teenager}}
        \vspace{-0.15cm}
    \begin{tabular}{c|lll|c|l}\toprule
 {} &  Density &     2Stars &  Triangles &     AgraSSt & Hamming \\
 \midrule 
 MPLE             &  0.0421 (2.42e-2) &   329 (80.4) &    75.52 (43.4) & 
 0.68
 &0.106 (2.22e-2)\\
 CD               &  0.2900 (1.10e-2) &  4537 (538) &   4146 (668) &  
 0.92
 & {\color{red}0.211} (1.03e-2)\\
 CELL             &  {\color{red}{0.0450}}  (3.46e-4) &    220 (14.1) &      22.50 (7.73) & 
 0.12
 & 0.0423 (3.32e-3)\\
{NetGAN}             &  {{0.1120}}  (1.38e-6) &    227 (13.3) &      9.28 (2.53) & 
 0.34
 & 0.0820  (5.07e-3)\\
{\steingen}\textunderscore{nr}   &  0.0516  (1.02e-3) &   {\color{blue}{362}} (14.9) &    {\color{blue}{88.90}} (24.8) &   
 {\color{red}0.06}
 & 0.0912 (9.95e-3)\\
 {\steingen}  &  {\color{blue}{0.0445}} (9.49e-4) &   {\color{red}{364}} (84.1) &     {\color{red}{85.75}}   (10.7) & 
 {\color{blue}0.08}
 & {\color{blue}0.107} (1.32e-2)\\
 \hline
 Teenager         &  0.0458 &    368 &      86.00 & pval=0.64 &  \\
    \bottomrule
    \end{tabular}
    \vspace{-0.5cm}
\end{table}

{For this network, the} MCMCMLE {estimation procedure in \tt{ergm}} does not converge.
CELL captures the edge density and 2-Star statistics well, {but not} 
the triangle counts. CD has the highest variability but does not capture the sufficient network statistics. MPLE estimates the sufficient statistics reasonably well but is outperformed by  {\steingen}. 
{\steingen}\textunderscore{nr} also performs  well in capturing the sufficient statistics.
{As the true model for the teenager network is unknown and hence the gKSS test does not apply, in \Cref{tab:teenager} we also report the proportion of rejections of the 
kernel-based 
Approximate graph Stein Statistic (AgraSSt) goodness-of-fit test {\eqref{eq:agrassteq}} 
to assess the sample quality{; see \Cref{sec:agrasst} for details}. 
This test {uses an approximate model which estimates 
the conditional  probabilities in \Cref{eq:cond_prob}, given the number of edges, 2stars, and triangles, from the observed Teenager network, without an explicit underlying ERGM.}

\begin{wrapfigure}{r}{.48\textwidth}
\centering
\vspace{-0.7cm}
\includegraphics[width=.45\textwidth]{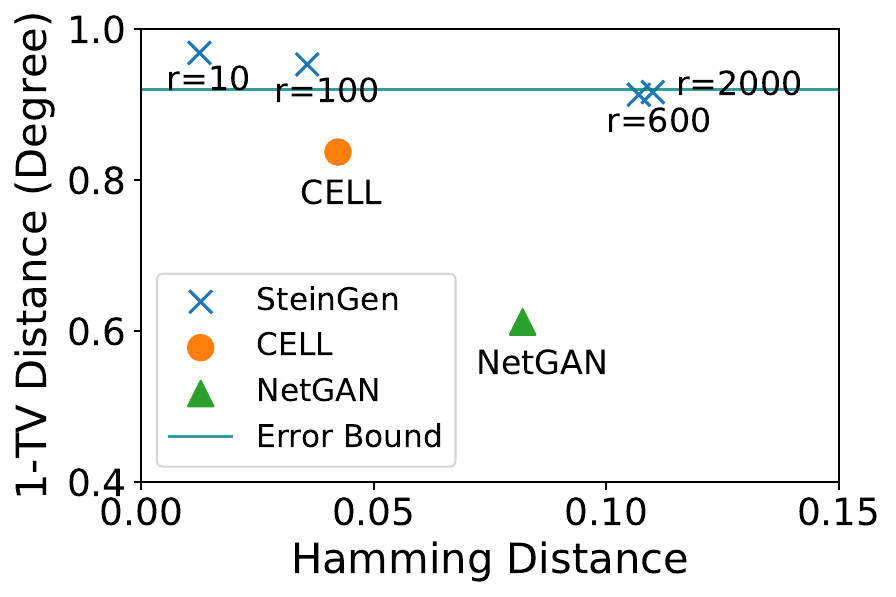}
\vspace{-.5cm}
\caption{Hamming distance versus $1-TV$ Distance {of} degree for the teenager network; $r$ is the number of {steps}
in {\steingen},  {the blue line is the error bound $(n \pi)^{-\frac12}$ from \eqref{eq:dtvbound}.}\label{fig:teenager_frontier}}
\vspace{-.5cm}
\end{wrapfigure}

We generate $100$ samples from each method and perform the AgraSSt test using $200$ samples generated from the approximate model to determine the rejection threshold at test level $\alpha=0.05$. We {also} report the AgraSSt test $p$-value of the Teenager network; {the value indicates that the observed network can plausibly be viewed as having the estimated conditional distribution.}

\Cref{tab:teenager} shows that 
{\steingen} has {the} rejection rate {which is closest} to {the} test level 0.05, {followed by {\steingen}\textunderscore{nr} and CELL,} while 
the parameter estimation methods MPLE and CD have a much higher rejection rate.
{Regarding diversity,} the Hamming distance from CELL is the lowest, indicating that perhaps the generated samples are very similar to the original 
Teenager network. CD produces the largest Hamming distances on average, but 
the sample quality is low. The next highest diversity is produced by {\steingen}.

Moreover, \Cref{tab:network_stats}  shows some {additional} 
{standard} network statistics {to match those used in \cite{pmlr-v119-rendsburg20a}, except the power law exponent which is not often informative for small networks}: 
shortest path (SP), largest connected components (LCC), assortativity (Assortat.), clustering coefficient (Clust.) and Maximum degree of the network. 
{{\steingen} performs the best on 
first four 
of these statistics, within one standard deviation of the observed values, closely followed by {\steingen}\textunderscore{nr}, while the other methods show statistically significant deviation from the observed statistics.}
{However for the maximum degree,} 
NetGAN performs 
best, although closely followed by {\steingen}.}




Moreover, we plot the Hamming distance versus $1-TV$ Distance to visualise the trade-off between the quality of generated samples and the diversity. \Cref{fig:teenager_frontier} shows that with {increasing} number of {\steingen} {steps}, the increase in total variation  distance is small compared to the gain in Hamming distance.
{\steingen} produces samples with higher 
{fidelity} and, {for} $r=600$ or $r=2000$, {also with} larger diversity 
{than} CELL and NetGAN. 

{As an aside, for a network of size $n$ the crude upper bound on the total variation distance of $(n \pi)^{-\frac12}$ derived in \eqref{eq:dtvbound} gives  0.9202115; {\steingen} samples are not far off.}
{The effect of different kernel choices for the teenager network is explored in \Cref{subsec:teenager}, \Cref{tab:teenager_kernel}. While there are numerical differences, the {\AgraSSt} rejection rates are qualitatively similar for the different kernels.}


\begin{table}[t!]
\small
    \centering    
    \caption{Additional network statistics.}\label{tab:network_stats}
            \vspace{0.25cm}
    \begin{tabular}{l|lllll}
    \toprule
       {}  &    SP &    LCC &  Assortat. & Clust. & Max(deg)
       \\
       \midrule
       Teenager         &  3.39 &  33.00 &       0.172 & 0.9056 & 5.00 \\
       \hline {\steingen}  &  {\color{red}{3.49
}}(0.299) & {\color{blue}{33.20}} (0.678) &     {\color{red}{0.163 }}(0.031) & {\color{red}{1.027}} (0.018) & 11.50 (1.688) \\
{\steingen}\textunderscore{nr} &{\color{blue}{3.76 }}(0.464) &   {{35.25}} (2.66) &     {\color{blue}{0.144}} (0.056) & {\color{blue}{1.226 }}(0.098) & 13.00 (1.712)\\
CELL & 5.38 (0.905) &  45.30 (5.56) &       0.103 (0.089) & 0.191 (0.025) & {\color{blue}{11.20}} (0.980) \\
{NetGAN} & 2.66 (0.034) &  {\color{red}{33.00}} (0.) &       0.098 (0.085) & 0.132 (0.035) & {\color{red}{9.333}}  (1.014)\\
MPLE &2.048 (1.17) &  20.05 (8.48) &       0.765 (0.143) & 0.187 (0.024) & 11.30 (1.269) \\
CD & 1.148 (0.224) &  25.00 (0.100) &       0.985 (0.020) & 0.190 (0.027)& 12.00 (1.673)\\
    \bottomrule
    \end{tabular}
\vspace{-.5cm}
\end{table}



\section{Conclusion and discussions}\label{sec:conclusion}
{{\steingen} is a synthetic network generation method which is based on Stein's method and can be used even when only one input network is available; no training samples are required. 
{\gr{I}n our experiments} {\steingen} achieves a good balance between a high sample quality as well as a good sample diversity.
In  \Cref{app:real}} {of the Appendix} we include additional 
{experiments on} real network experiments, {namely} 
the Florentine marriage network from  \cite{padgett1993robust},  and two protein interaction networks, for EBV \citep{hara2022comprehensive} {and for yeast \citep{von2002comparative}. In these experiments the general pattern is confirmed, but with sometimes different orderings of CELL and NetGAN.}
With only one observed network, {we find that} {\steingen} generates synthetic samples which are close in distribution to the observed network while being dissimilar from it. Moreover, {\steingen} comes with theoretical guarantees.

{\steingen} outperforms its competitors partly through its re-estimation step which implicitly captures variability in the distribution from which the observed network is generated.   We also propose a faster method, {\steingen}\textunderscore{nr}, which avoids the re-estimation step and also performs well. As an intermediary method, 
{one could} re-estimate 
${\widehat q(s,1 | {\Delta_s t(x)}  )}$ in Algorithm \ref{alg:steingen} 
only after a fixed number of samples have been generated, see {{\Cref{app:reest}
for details {and} additional experimental results.}} 

{
While 
here 
{\steingen} is presented in the setting of ERGMs, using the \AgraSSt\,  approach {from} \cite{xu2022agrasst}, it is straightforward to generalise the approach to networks for which the underlying distribution family is unknown. 
Moreover, {\steingen} can easily be expanded to take multiple graphs as input, gaining strength in conditional probability estimation; for details see \Cref{app:mult} in the Appendix. One could also generate multiple samples and use {\AgraSSt} to select the best samples for the next network generation step, see \Cref{app:select} in the Appendix; this could be viewed as related to particle filtering. 
{Moreover, learning 
suitable network statistics $t(x)$ could be an interesting future research direction.}

{{\steingen} has some shortcomings: It disregards any attributes on the network. It does not naturally apply to time series of networks. It also does not come with any privacy preserving guarantees. Extending {\steingen} to these settings will be part of future work.}

Finally a note of caution: when applying {\steingen}, ethical aspects should be taken into consideration. One could think of situations in which synthetic networks distort the sensitive narrative of the data. Moreover, if the synthetic networks are used for crucial decision making such as in healthcare, extra care is advised.}

{\small
\acks{{The authors would like to thank Chris Oates (Newcastle) for a very helpful discussion.} 
G.R. and W.X. acknowledge the support from EPSRC grant EP/T018445/1.
G.R is also supported in part by EPSRC grants EP/W037211/1, EP/V056883/1, and EP/R018472/1.}
}
WX is also supported by Deutsche Forschungsgemeinschaft (DFG, German Research Foundation) under Germany’s Excellence Strategy – EXC number 2064/1 – Project number 390727645.


\bibliography{main}

\begin{thebibliography}{67}
\providecommand{\natexlab}[1]{#1}
\providecommand{\url}[1]{\texttt{#1}}
\expandafter\ifx\csname urlstyle\endcsname\relax
  \providecommand{\doi}[1]{doi: #1}\else
  \providecommand{\doi}{doi: \begingroup \urlstyle{rm}\Url}\fi

\bibitem[Abbe and Sandon(2015)]{abbe2015community}
Emmanuel Abbe and Colin Sandon.
\newblock Community detection in general stochastic block models: Fundamental
  limits and efficient algorithms for recovery.
\newblock In \emph{2015 IEEE 56th Annual Symposium on Foundations of Computer
  Science}, pages 670--688. IEEE, 2015.

\bibitem[Ali et~al.(2014)Ali, Rito, Reinert, Sun, and Deane]{ali2014alignment}
Waqar Ali, Tiago Rito, Gesine Reinert, Fengzhu Sun, and Charlotte~M Deane.
\newblock Alignment-free protein interaction network comparison.
\newblock \emph{Bioinformatics}, 30\penalty0 (17):\penalty0 i430--i437, 2014.

\bibitem[Asuncion et~al.(2010)Asuncion, Liu, Ihler, and
  Smyth]{asuncion2010learning}
Arthur Asuncion, Qiang Liu, Alexander Ihler, and Padhraic Smyth.
\newblock Learning with blocks: Composite likelihood and contrastive
  divergence.
\newblock In \emph{Proceedings of the Thirteenth International Conference on
  Artificial Intelligence and Statistics}, pages 33--40. JMLR Workshop and
  Conference Proceedings, 2010.

\bibitem[Barbour(1990)]{barbour1990stein}
Andrew~D Barbour.
\newblock Stein's method for diffusion approximations.
\newblock \emph{Probability Theory and Related Fields}, 84\penalty0
  (3):\penalty0 297--322, 1990.

\bibitem[Besag(1975)]{besag1975statistical}
Julian Besag.
\newblock Statistical analysis of non-lattice data.
\newblock \emph{Journal of the Royal Statistical Society: Series D (The
  Statistician)}, 24\penalty0 (3):\penalty0 179--195, 1975.

\bibitem[Bhamidi et~al.(2011)Bhamidi, Bresler, and Sly]{bhamidi2011mixing}
Shankar Bhamidi, Guy Bresler, and Allan Sly.
\newblock Mixing time of exponential random graphs.
\newblock \emph{The Annals of Applied Probability}, 21\penalty0 (6):\penalty0
  2146--2170, 2011.

\bibitem[Bojchevski et~al.(2018)Bojchevski, Shchur, Z{\"u}gner, and
  G{\"u}nnemann]{bojchevski2018netgan}
Aleksandar Bojchevski, Oleksandr Shchur, Daniel Z{\"u}gner, and Stephan
  G{\"u}nnemann.
\newblock Net{GAN}: Generating graphs via random walks.
\newblock In \emph{International Conference on Machine Learning}, pages
  610--619. PMLR, 2018.

\bibitem[Bollob{\'a}s et~al.(2007)Bollob{\'a}s, Janson, and
  Riordan]{bollobas2007phase}
B{\'e}la Bollob{\'a}s, Svante Janson, and Oliver Riordan.
\newblock The phase transition in inhomogeneous random graphs.
\newblock \emph{Random Structures \& Algorithms}, 31\penalty0 (1):\penalty0
  3--122, 2007.

\bibitem[Borgwardt and Kriegel(2005)]{borgwardt2005shortest}
Karsten~M Borgwardt and Hans-Peter Kriegel.
\newblock Shortest-path kernels on graphs.
\newblock In \emph{Fifth IEEE International Conference on Data Mining
  (ICDM'05)}, pages 8--pp. IEEE, 2005.

\bibitem[Bresler et~al.(2017)Bresler, Gamarnik, and Shah]{bresler2017learning}
Guy Bresler, David Gamarnik, and Devavrat Shah.
\newblock Learning graphical models from the {G}lauber dynamics.
\newblock \emph{IEEE Transactions on Information Theory}, 64\penalty0
  (6):\penalty0 4072--4080, 2017.

\bibitem[Butts(2008)]{butts2008social}
Carter~T Butts.
\newblock Social network analysis with sna.
\newblock \emph{Journal of Statistical Software}, 24:\penalty0 1--51, 2008.

\bibitem[Chami et~al.(2022)Chami, Abu-El-Haija, Perozzi, R{\'e}, and
  Murphy]{chami2022machine}
Ines Chami, Sami Abu-El-Haija, Bryan Perozzi, Christopher R{\'e}, and Kevin
  Murphy.
\newblock Machine learning on graphs: A model and comprehensive taxonomy.
\newblock \emph{Journal of Machine Learning Research}, 23\penalty0
  (89):\penalty0 1--64, 2022.

\bibitem[Chanpuriya et~al.(2021)Chanpuriya, Musco, Sotiropoulos, and
  Tsourakakis]{chanpuriya2021power}
Sudhanshu Chanpuriya, Cameron Musco, Konstantinos Sotiropoulos, and Charalampos
  Tsourakakis.
\newblock On the power of edge independent graph models.
\newblock \emph{Advances in Neural Information Processing Systems},
  34:\penalty0 24418--24429, 2021.

\bibitem[Chatterjee and Diaconis(2013)]{chatterjee2013estimating}
Sourav Chatterjee and Persi Diaconis.
\newblock Estimating and understanding exponential random graph models.
\newblock \emph{The Annals of Statistics}, 41\penalty0 (5):\penalty0
  2428--2461, 2013.

\bibitem[Chwialkowski et~al.(2016)Chwialkowski, Strathmann, and
  Gretton]{chwialkowski2016kernel}
Kacper Chwialkowski, Heiko Strathmann, and Arthur Gretton.
\newblock A kernel test of goodness of fit.
\newblock In \emph{International Conference on Machine Learning}, pages
  2606--2615. PMLR, 2016.

\bibitem[Craig(1962)]{craig1962mean}
CC~Craig.
\newblock On the mean and variance of the smaller of two drawings from a
  binomial population.
\newblock \emph{Biometrika}, 49\penalty0 (3/4):\penalty0 566--569, 1962.

\bibitem[Figueira and Vaz(2022)]{figueira2022survey}
Alvaro Figueira and Bruno Vaz.
\newblock Survey on synthetic data generation, evaluation methods and {GAN}s.
\newblock \emph{Mathematics}, 10\penalty0 (15):\penalty0 2733, 2022.

\bibitem[Goldstein and Rinott(1996)]{goldstein1996multivariate}
Larry Goldstein and Yosef Rinott.
\newblock Multivariate normal approximations by {S}tein's method and size bias
  couplings.
\newblock \emph{Journal of Applied Probability}, 33\penalty0 (1):\penalty0
  1--17, 1996.

\bibitem[Gorham and Mackey(2015)]{gorham2015measuring}
Jackson Gorham and Lester Mackey.
\newblock Measuring sample quality with {S}tein's method.
\newblock In \emph{Advances in Neural Information Processing Systems}, pages
  226--234, 2015.

\bibitem[Goyal et~al.(2020)Goyal, Jain, and Ranu]{goyal2020graphgen}
Nikhil Goyal, Harsh~Vardhan Jain, and Sayan Ranu.
\newblock Graph{G}en: a scalable approach to domain-agnostic labeled graph
  generation.
\newblock In \emph{Proceedings of The Web Conference 2020}, pages 1253--1263,
  2020.

\bibitem[Guo and Zhao(2022)]{guo2022systematic}
Xiaojie Guo and Liang Zhao.
\newblock A systematic survey on deep generative models for graph generation.
\newblock \emph{IEEE Transactions on Pattern Analysis and Machine
  Intelligence}, 2022.

\bibitem[Han et~al.(2022)Han, Hui, Jiang, Qian, and Xie]{han2022generative}
Yuehui Han, Le~Hui, Haobo Jiang, Jianjun Qian, and Jin Xie.
\newblock Generative subgraph contrast for self-supervised graph representation
  learning.
\newblock In \emph{Computer Vision--ECCV 2022: 17th European Conference, Tel
  Aviv, Israel, October 23--27, 2022, Proceedings, Part XXX}, pages 91--107.
  Springer, 2022.

\bibitem[Handcock et~al.(2008)Handcock, Hunter, Butts, Goodreau, and
  Morris]{handcock2008statnet}
Mark~S Handcock, David~R Hunter, Carter~T Butts, Steven~M Goodreau, and Martina
  Morris.
\newblock statnet: Software tools for the representation, visualization,
  analysis and simulation of network data.
\newblock \emph{Journal of Statistical Software}, 24\penalty0 (1):\penalty0
  1548, 2008.

\bibitem[Hara et~al.(2022)Hara, Watanabe, Yoshida, Masud, Kato, Kondo, Suzuki,
  Kurose, Uddin, Arata, et~al.]{hara2022comprehensive}
Yuya Hara, Takahiro Watanabe, Masahiro Yoshida, HM~Abdullah~Al Masud, Hiromichi
  Kato, Tomohiro Kondo, Reiji Suzuki, Shutaro Kurose, Md~Kamal Uddin, Masataka
  Arata, et~al.
\newblock Comprehensive analyses of intraviral {E}pstein-{B}arr virus
  protein--protein interactions hint central role of {BLRF2} in the tegument
  network.
\newblock \emph{Journal of Virology}, 96\penalty0 (14):\penalty0 e00518--22,
  2022.

\bibitem[Hein et~al.(2007)Hein, Audibert, and Luxburg]{hein2007graph}
Matthias Hein, Jean-Yves Audibert, and Ulrike~von Luxburg.
\newblock Graph laplacians and their convergence on random neighborhood graphs.
\newblock \emph{Journal of Machine Learning Research}, 8\penalty0 (6), 2007.

\bibitem[Hinton(2002)]{hinton2002training}
Geoffrey~E Hinton.
\newblock Training products of experts by minimizing contrastive divergence.
\newblock \emph{Neural Computation}, 14\penalty0 (8):\penalty0 1771--1800,
  2002.

\bibitem[Hoff et~al.(2002)Hoff, Raftery, and Handcock]{hoff2002latent}
Peter~D Hoff, Adrian~E Raftery, and Mark~S Handcock.
\newblock Latent space approaches to social network analysis.
\newblock \emph{Journal of the American Statistical association}, 97\penalty0
  (460):\penalty0 1090--1098, 2002.

\bibitem[Holland and Leinhardt(1981)]{holland1981exponential}
Paul~W Holland and Samuel Leinhardt.
\newblock An exponential family of probability distributions for directed
  graphs.
\newblock \emph{Journal of the American Statistical Association}, 76\penalty0
  (373):\penalty0 33--50, 1981.

\bibitem[Hunter and Handcock(2006)]{hunter2006inference}
David~R Hunter and Mark~S Handcock.
\newblock Inference in curved exponential family models for networks.
\newblock \emph{Journal of Computational and Graphical Statistics}, 15\penalty0
  (3):\penalty0 565--583, 2006.

\bibitem[Hunter et~al.(2008{\natexlab{a}})Hunter, Goodreau, and
  Handcock]{hunter2008goodness}
David~R Hunter, Steven~M Goodreau, and Mark~S Handcock.
\newblock Goodness of fit of social network models.
\newblock \emph{Journal of the American Statistical Association}, 103\penalty0
  (481):\penalty0 248--258, 2008{\natexlab{a}}.

\bibitem[Hunter et~al.(2008{\natexlab{b}})Hunter, Handcock, Butts, Goodreau,
  and Morris]{morris2008ergm}
David~R Hunter, Mark~S Handcock, Carter~T Butts, Steven~M Goodreau, and Martina
  Morris.
\newblock ergm: A package to fit, simulate and diagnose exponential-family
  models for networks.
\newblock \emph{Journal of Statistical Software}, 24\penalty0 (3):\penalty0
  nihpa54860, 2008{\natexlab{b}}.

\bibitem[Jiang et~al.(2018)Jiang, Wu, Jin, and Wong]{jiang2018convergence}
Bai Jiang, Tung-Yu Wu, Yifan Jin, and Wing~H Wong.
\newblock Convergence of contrastive divergence algorithm in exponential
  family.
\newblock \emph{The Annals of Statistics}, 46\penalty0 (6A):\penalty0
  3067--3098, 2018.

\bibitem[Karwa et~al.(2016)Karwa, Petrovi{\'c}, and Baji{\'c}]{karwa2016dergms}
Vishesh Karwa, Sonja Petrovi{\'c}, and Denis Baji{\'c}.
\newblock {DERGM}s: Degeneracy-restricted exponential random graph models.
\newblock \emph{arXiv preprint arXiv:1612.03054}, 2016.

\bibitem[Kingma and Ba(2014)]{kingma2014adam}
Diederik~P Kingma and Jimmy Ba.
\newblock Adam: A method for stochastic optimization.
\newblock \emph{arXiv preprint arXiv:1412.6980}, 2014.

\bibitem[Kolaczyk(2009)]{kolaczyk2009statistical}
Eric~D Kolaczyk.
\newblock Statistical analysis of network data.
\newblock \emph{Springer Series in Statistics}, 2009.

\bibitem[Krivitsky et~al.(2022)Krivitsky, Morris, Handcock, Butts, Hunter,
  Goodreau, Klumb, de~Moll, and Bojanowski]{krivitsky2022advanced}
Pavel~N Krivitsky, Martina Morris, Mark~S Handcock, Carter~T Butts, David~R
  Hunter, Steven~M Goodreau, Chad Klumb, Skye~Bender de~Moll, and Micha{\l}
  Bojanowski.
\newblock Advanced features of the ergm package for modeling networks.
\newblock \emph{network}, 1:\penalty0 2, 2022.

\bibitem[Krivitsky et~al.(2023)Krivitsky, Hunter, Morris, and Klumb]{ergm}
Pavel~N. Krivitsky, David~R. Hunter, Martina Morris, and Chad Klumb.
\newblock {ergm} 4: New features for analyzing exponential-family random graph
  models.
\newblock \emph{Journal of Statistical Software}, 105\penalty0 (6):\penalty0
  1--44, 2023.
\newblock \doi{10.18637/jss.v105.i06}.

\bibitem[Liu et~al.(2016)Liu, Lee, and Jordan]{liu2016kernelized}
Qiang Liu, Jason Lee, and Michael Jordan.
\newblock A kernelized {S}tein discrepancy for goodness-of-fit tests.
\newblock In \emph{International Conference on Machine Learning}, pages
  276--284, 2016.

\bibitem[Liu et~al.(2017)Liu, Chen, Cooper, Oh, Yeung, and
  Suzumura]{liu2017can}
Weiyi Liu, Pin-Yu Chen, Hal Cooper, Min~Hwan Oh, Sailung Yeung, and Toyotaro
  Suzumura.
\newblock Can {GAN} learn topological features of a graph?
\newblock \emph{arXiv preprint arXiv:1707.06197}, 2017.

\bibitem[Lov{\'a}sz and Szegedy(2006)]{lovasz2006limits}
L{\'a}szl{\'o} Lov{\'a}sz and Bal{\'a}zs Szegedy.
\newblock Limits of dense graph sequences.
\newblock \emph{Journal of Combinatorial Theory, Series B}, 96\penalty0
  (6):\penalty0 933--957, 2006.

\bibitem[Mukherjee and Xu(2023)]{mukherjee2013statistics}
Sumit Mukherjee and Yuanzhe Xu.
\newblock Statistics of the two star {ERGM}.
\newblock \emph{Bernoulli}, 29\penalty0 (1):\penalty0 24--51, 2023.

\bibitem[Newman(2018)]{newman2018networks}
Mark Newman.
\newblock \emph{Networks}.
\newblock Oxford University Press, 2. edition, 2018.

\bibitem[Niu et~al.(2020)Niu, Song, Song, Zhao, Grover, and
  Ermon]{niu2020permutation}
Chenhao Niu, Yang Song, Jiaming Song, Shengjia Zhao, Aditya Grover, and Stefano
  Ermon.
\newblock Permutation invariant graph generation via score-based generative
  modeling.
\newblock In \emph{International Conference on Artificial Intelligence and
  Statistics}, pages 4474--4484. PMLR, 2020.

\bibitem[Padgett and Ansell(1993)]{padgett1993robust}
John~F Padgett and Christopher~K Ansell.
\newblock Robust {A}ction and the {R}ise of the {M}edici, 1400-1434.
\newblock \emph{American Journal of Sociology}, 98\penalty0 (6):\penalty0
  1259--1319, 1993.

\bibitem[Reinert and Ross(2019)]{reinert2019approximating}
Gesine Reinert and Nathan Ross.
\newblock Approximating stationary distributions of fast mixing {G}lauber
  dynamics, with applications to exponential random graphs.
\newblock \emph{The Annals of Applied Probability}, 29\penalty0 (5):\penalty0
  3201--3229, 2019.

\bibitem[Rendsburg et~al.(2020)Rendsburg, Heidrich, and
  Luxburg]{pmlr-v119-rendsburg20a}
Luca Rendsburg, Holger Heidrich, and Ulrike~Von Luxburg.
\newblock {N}et{GAN} without {GAN}: From random walks to low-rank
  approximations.
\newblock In \emph{Proceedings of the 37th International Conference on Machine
  Learning}, pages 8073--8082. PMLR, 2020.

\bibitem[Schmid and Desmarais(2017)]{schmid2017exponential}
Christian~S Schmid and Bruce~A Desmarais.
\newblock Exponential random graph models with big networks: Maximum
  pseudolikelihood estimation and the parametric bootstrap.
\newblock In \emph{2017 IEEE International Conference on Big Data}, pages
  116--121. IEEE, 2017.

\bibitem[Shalizi and Rinaldo(2013)]{shalizi2013consistency}
Cosma~Rohilla Shalizi and Alessandro Rinaldo.
\newblock Consistency under sampling of exponential random graph models.
\newblock \emph{Annals of Statistics}, 41\penalty0 (2):\penalty0 508, 2013.

\bibitem[Shervashidze et~al.(2011)Shervashidze, Schweitzer, Leeuwen, Mehlhorn,
  and Borgwardt]{shervashidze2011weisfeiler}
Nino Shervashidze, Pascal Schweitzer, Erik Jan~van Leeuwen, Kurt Mehlhorn, and
  Karsten~M Borgwardt.
\newblock Weisfeiler-{L}ehman graph kernels.
\newblock \emph{Journal of Machine Learning Research}, 12\penalty0
  (Sep):\penalty0 2539--2561, 2011.

\bibitem[Silverman et~al.(2020)Silverman, Schmidt, Anastasiadou, Altucci,
  Angelini, Badimon, Balligand, Benincasa, Capasso, Conte,
  et~al.]{silverman2020molecular}
Edwin~K Silverman, Harald~HHW Schmidt, Eleni Anastasiadou, Lucia Altucci, Marco
  Angelini, Lina Badimon, Jean-Luc Balligand, Giuditta Benincasa,
  Giovambattista Capasso, Federica Conte, et~al.
\newblock Molecular networks in network medicine: Development and applications.
\newblock \emph{Wiley Interdisciplinary Reviews: Systems Biology and Medicine},
  12\penalty0 (6):\penalty0 e1489, 2020.

\bibitem[Simonovsky and Komodakis(2018)]{simonovsky2018graphvae}
Martin Simonovsky and Nikos Komodakis.
\newblock Graph{VAE}: Towards generation of small graphs using variational
  autoencoders.
\newblock In \emph{Artificial Neural Networks and Machine Learning--ICANN 2018:
  27th International Conference on Artificial Neural Networks, Rhodes, Greece,
  October 4-7, 2018, Proceedings, Part I 27}, pages 412--422. Springer, 2018.

\bibitem[Snijders(2002)]{snijders2002markov}
Tom~AB Snijders.
\newblock Markov chain {M}onte {C}arlo estimation of exponential random graph
  models.
\newblock \emph{Journal of Social Structure}, 3\penalty0 (2):\penalty0 1--40,
  2002.

\bibitem[Soon(1996)]{soon1996binomial}
Spario~YT Soon.
\newblock Binomial approximation for dependent indicators.
\newblock \emph{Statistica Sinica}, pages 703--714, 1996.

\bibitem[Sosa and Buitrago(2021)]{sosa2021review}
Juan Sosa and Lina Buitrago.
\newblock A review of latent space models for social networks.
\newblock \emph{Revista Colombiana de Estad{\'\i}stica}, 44\penalty0
  (1):\penalty0 171--200, 2021.

\bibitem[Steglich et~al.(2006)Steglich, Snijders, and
  West]{steglich2006applying}
Christian Steglich, Tom~AB Snijders, and Patrick West.
\newblock Applying {SIENA}.
\newblock \emph{Methodology}, 2\penalty0 (1):\penalty0 48--56, 2006.

\bibitem[Strauss and Ikeda(1990)]{strauss1990pseudolikelihood}
David Strauss and Michael Ikeda.
\newblock Pseudolikelihood estimation for social networks.
\newblock \emph{Journal of the American Statistical Association}, 85\penalty0
  (409):\penalty0 204--212, 1990.

\bibitem[Sugiyama and Borgwardt(2015)]{sugiyama2015halting}
Mahito Sugiyama and Karsten Borgwardt.
\newblock Halting in random walk kernels.
\newblock In \emph{Advances in Neural Information Processing Systems}, pages
  1639--1647, 2015.

\bibitem[Vignac et~al.(2022)Vignac, Krawczuk, Siraudin, Wang, Cevher, and
  Frossard]{vignac2023digress}
Clement Vignac, Igor Krawczuk, Antoine Siraudin, Bohan Wang, Volkan Cevher, and
  Pascal Frossard.
\newblock Digress: Discrete denoising diffusion for graph generation.
\newblock \emph{arXiv preprint arXiv:2209.14734}, 2022.

\bibitem[Von~Mering et~al.(2002)Von~Mering, Krause, Snel, Cornell, Oliver,
  Fields, and Bork]{von2002comparative}
Christian Von~Mering, Roland Krause, Berend Snel, Michael Cornell, Stephen~G
  Oliver, Stanley Fields, and Peer Bork.
\newblock Comparative assessment of large-scale data sets of protein--protein
  interactions.
\newblock \emph{Nature}, 417\penalty0 (6887):\penalty0 399--403, 2002.

\bibitem[Wasserman and Faust(1994)]{wasserman1994social}
Stanley Wasserman and Katherine Faust.
\newblock \emph{Social {N}etwork {A}nalysis: {M}ethods and {A}pplications},
  volume~8.
\newblock Cambridge University Press, 1994.

\bibitem[Weckbecker et~al.(2022)Weckbecker, Xu, and
  Reinert]{weckbecker2022rkhs}
Moritz Weckbecker, Wenkai Xu, and Gesine Reinert.
\newblock On {RKHS} choices for assessing graph generators via kernel {S}tein
  statistics.
\newblock \emph{arXiv preprint arXiv:2210.05746}, 2022.

\bibitem[Xu and Reinert(2021)]{xu2021stein}
Wenkai Xu and Gesine Reinert.
\newblock A {S}tein goodness-of-test for exponential random graph models.
\newblock In \emph{International Conference on Artificial Intelligence and
  Statistics}, pages 415--423. PMLR, 2021.

\bibitem[Xu and Reinert(2022{\natexlab{a}})]{xu2022agrasstarxiv}
Wenkai Xu and Gesine Reinert.
\newblock Agra{SS}t: Approximate graph {S}tein statistics for interpretable
  assessment of implicit graph generators; version 4.
\newblock \emph{arXiv preprint arXiv:2203.03673}, 2022{\natexlab{a}}.

\bibitem[Xu and Reinert(2022{\natexlab{b}})]{xu2022agrasst}
Wenkai Xu and Gesine~D Reinert.
\newblock Agra{SS}t: Approximate graph {S}tein statistics for interpretable
  assessment of implicit graph generators.
\newblock \emph{Advances in Neural Information Processing Systems},
  35:\penalty0 24268--24279, 2022{\natexlab{b}}.

\bibitem[Yang et~al.(2018)Yang, Liu, Rao, and Neville]{yang2018goodness}
Jiasen Yang, Qiang Liu, Vinayak Rao, and Jennifer Neville.
\newblock Goodness-of-fit testing for discrete distributions via {S}tein
  discrepancy.
\newblock In \emph{International Conference on Machine Learning}, pages
  5557--5566, 2018.

\bibitem[Yin et~al.(2016)Yin, Rinaldo, and Fadnavis]{yin2016asymptotic}
Mei Yin, Alessandro Rinaldo, and Sukhada Fadnavis.
\newblock Asymptotic quantization of exponential random graphs.
\newblock \emph{The Annals of Applied Probability}, pages 3251--3285, 2016.

\bibitem[You et~al.(2018)You, Ying, Ren, Hamilton, and
  Leskovec]{you2018graphrnn}
Jiaxuan You, Rex Ying, Xiang Ren, William Hamilton, and Jure Leskovec.
\newblock Graph{RNN}: {G}enerating realistic graphs with deep auto-regressive
  models.
\newblock In \emph{International Conference on Machine Learning}, pages
  5708--5717. PMLR, 2018.

\end{thebibliography}







\appendix


\section{More on parameter estimation methods}
\label{app:parameter}

As shown in \Cref{sec:exp}, parameter estimation methods based on MPLE, CD and MLE {can achieve high sample diversity but in our experiment they usually show low sample fidelity.} 
 {To better understand this behaviour here 
we} estimate {the} parameters {in the three models E2S, E2ST and ET {in the same setup as in \Cref{sec:expsim}, with $n=50$ vertices and the same true parameter 
{values} $\beta_1 = -2, \beta_2 = \frac1n, \beta_3 = -\frac1n$.  We note that parameter estimation methods estimate the parameters jointly for maximising the likelihood; for an observed network $x$ the linear combination $\sum_{l=1}^L \beta_l t_l(x)$ is the basis of the estimation. Hence we would not expect to see unique parameter estimates, but we would expect a linear combination of them to stay approximately constant.}}

\Cref{fig:param_est} {shows the results for the estimates of $\beta_1$ versus $\beta_2$; the true parameter combination is  indicated by a magenta star. In} 
the E2S {model},
which is arguably the easiest of the three models considered, 
{in} all three methods {there is an approximately linear relationship between the} estimates {of} $\beta_1$ and $\beta_2$,
relating to 
maintaining a similar density of the generated graphs. However, in {the} E2ST and ET {models}, the parameter estimation can be very inaccurate, for all three methods. {Hence,} 
a higher rejection rate in a gKSS test for the parameter estimation methods, {as observed in Table \ref{tab:gkss_results} in the main text,} is not unexpected.

\begin{figure}[tp!]
    \centering
    \subfigure[E2S Model]{
    \includegraphics[width=0.31\textwidth]{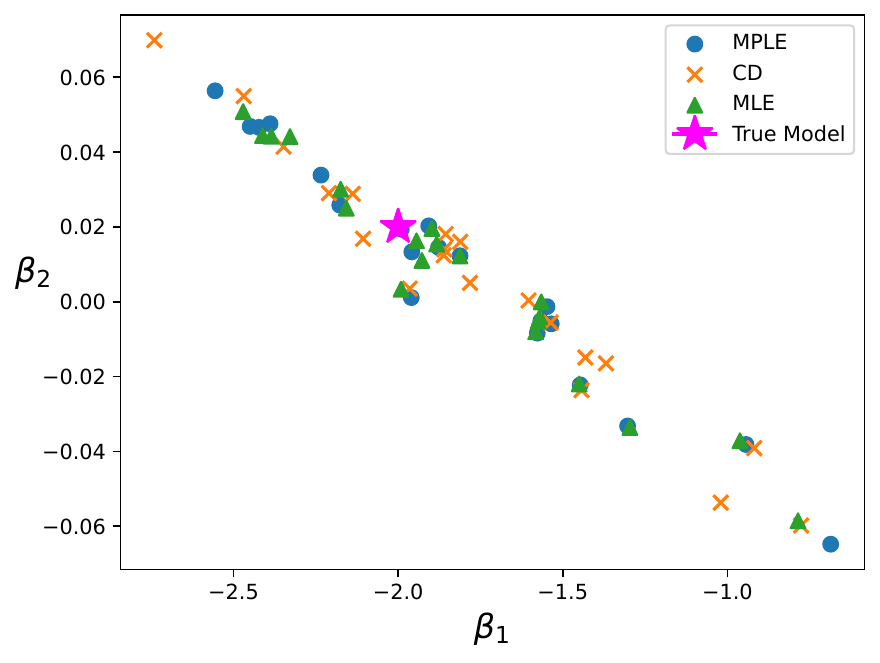}}
    \subfigure[E2ST Model]{
    \includegraphics[width=0.315\textwidth]{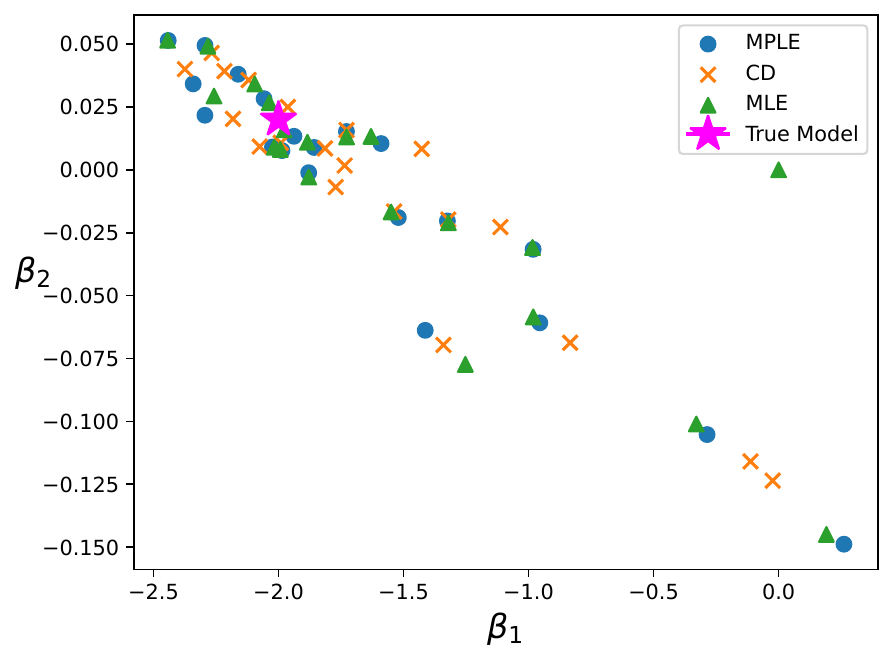}}
    \subfigure[ET Model]{
    \includegraphics[width=0.31\textwidth]{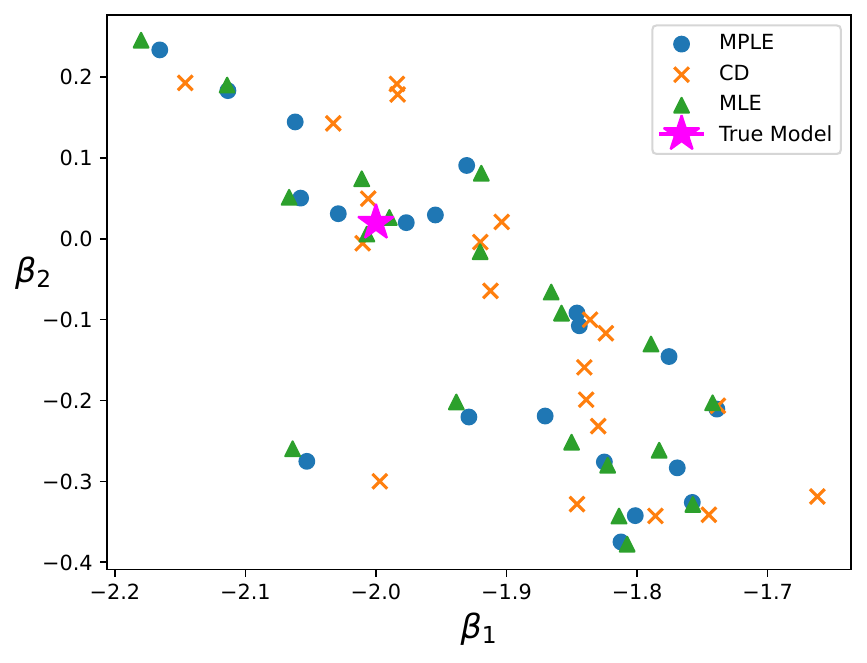}}
    \vspace{-0.1cm}
    \caption{Estimated parameters $\beta_1$, $\beta_2$ for $n=50$. {The plot shows considerable variability in the parameter estimation.}\label{fig:param_est}
    }
    \vspace{-.5cm}
\end{figure}

\section{More synthetic experiments}

\label{app:synth}

{In this section, we provide additional experimental results on synthetic networks. When we refer to specific $\ergm$s we use the same parameters as in 
\Cref{sec:expsim} in the main text.}

\subsection{Re-estimation after $k$ steps} \label{app:reest}

In the main text, we present 
{\steingen} with re-estimation after one step in the Glauber dynamics chain; {we re-estimate the transition probability when the sampled network differs from the previous network. By construction, the two networks will deviate in one edge indicator, 
{rendering} the re-estimation procedure  to be not  very {computationally} efficient. Hence it may be of interest to re-estimate the transition probability only after $k$ 
different networks have been obtained.}  
Here we investigate the setting where the re-estimation happens after $k $ {changes in the network} ({where networks could be repeated, but not consecutively}), {for networks on 30 vertices}.  
We also consider the setting of {\steingen}\_nr where no re-estimate applies; {one could view this case as $k = \infty$}.

\begin{figure}[tp!]
    \centering
    \begin{center}
   
   \subfigure[E2S Model
   ]{
    \includegraphics[width=0.3\textwidth]{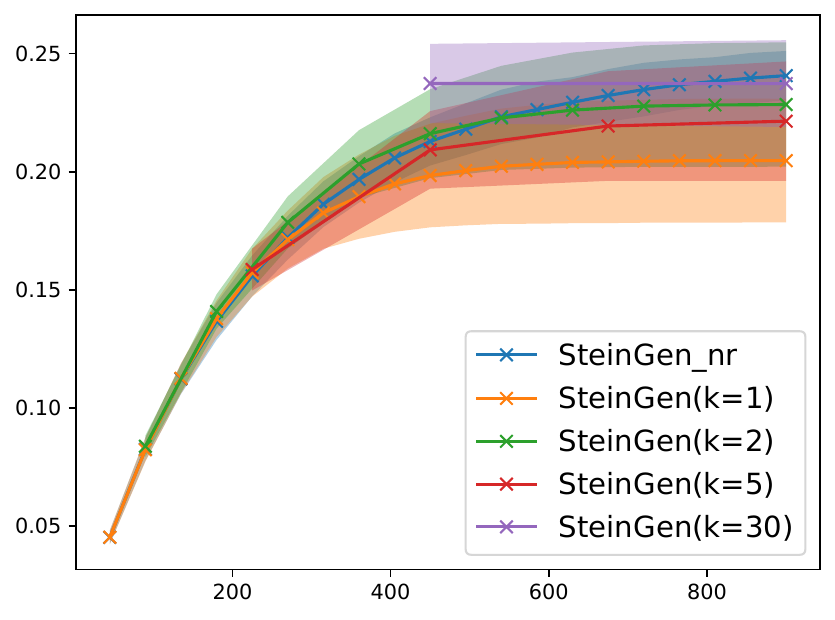}}
    \subfigure[E2ST Model
    ]{
    \includegraphics[width=0.3\textwidth]{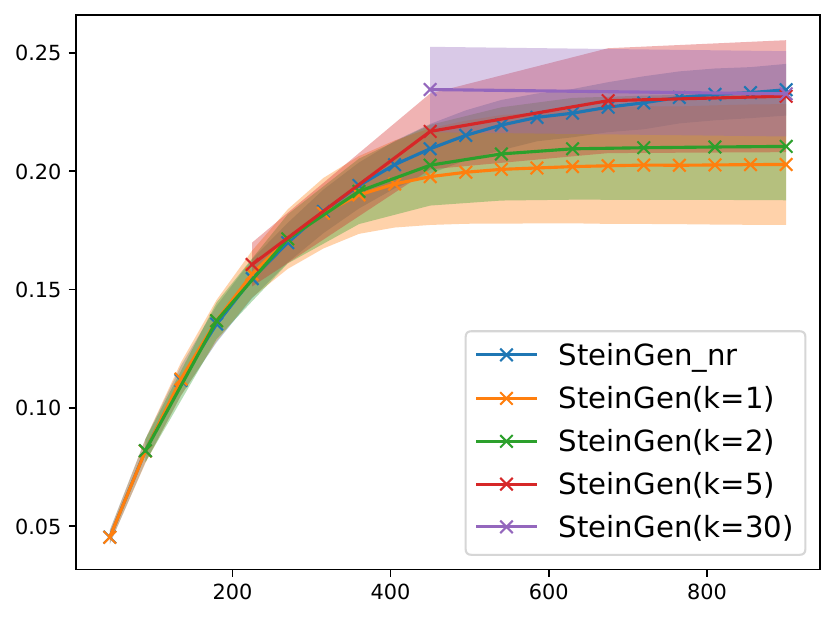}}
    \subfigure[ET Mode 
    ]{
    \includegraphics[width=0.3\textwidth]{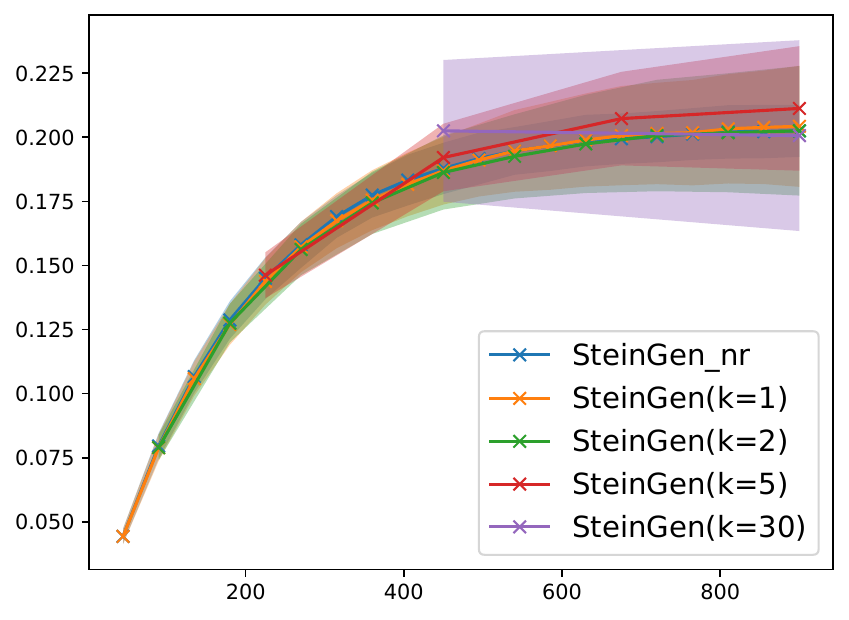}}
    \vspace{-0.25cm}
    \caption{Hamming distance  {$\pm 1$ standard deviation} between generated samples and {an initial network} 
    from each model for different {re-estimation steps, including}
    {\steingen}\_nr as a benchmark. 
    The number of vertices is set to 
    $30$. {For each re-estimation step size $k$ we record the first observations after $k \times n/2$ steps to unify the comparison by accounting for the number of steps in the Glauber process.}
        \label{fig:hamming2}
    }
    \end{center}
   \vspace{-1.cm}
\end{figure}

\begin{figure}[t!]
    \centering
    \begin{center}
    {
{\includegraphics[width=0.98\textwidth]{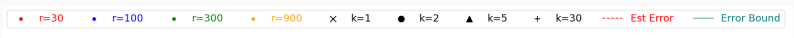}}}
    \vspace{-0.5cm} 
    \subfigure[E2S Model]{
    \includegraphics[width=0.318\textwidth]{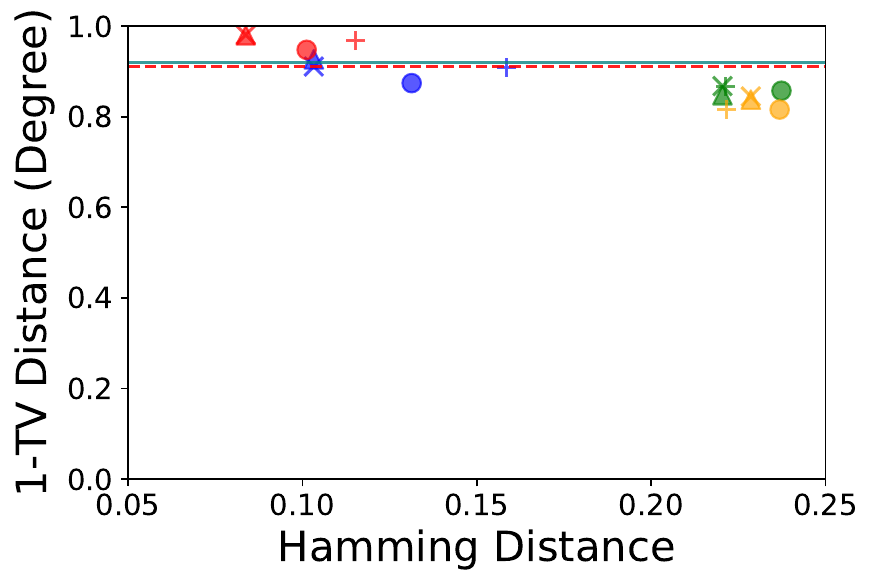}}
    \subfigure[E2ST Model]{
    \includegraphics[width=0.318\textwidth]{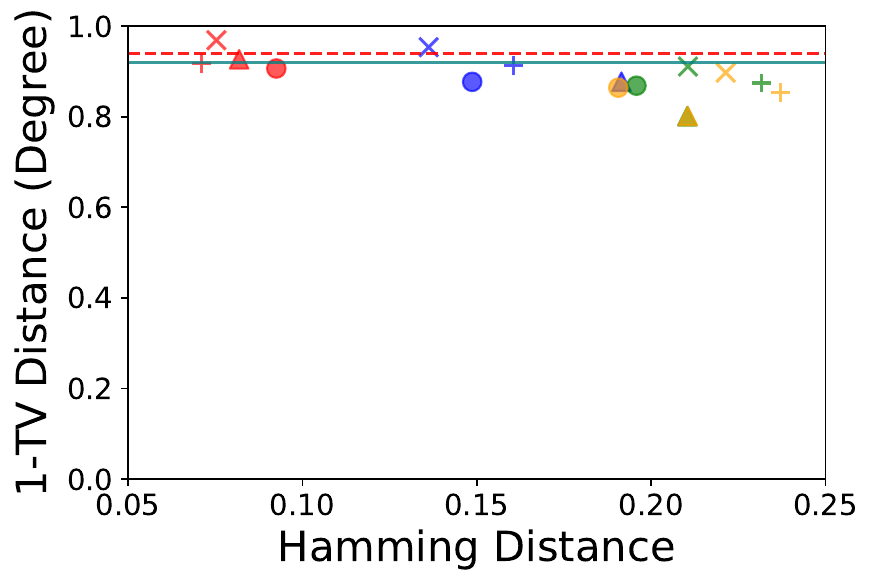}}
    \subfigure[ET Model]{
    \includegraphics[width=0.318\textwidth]{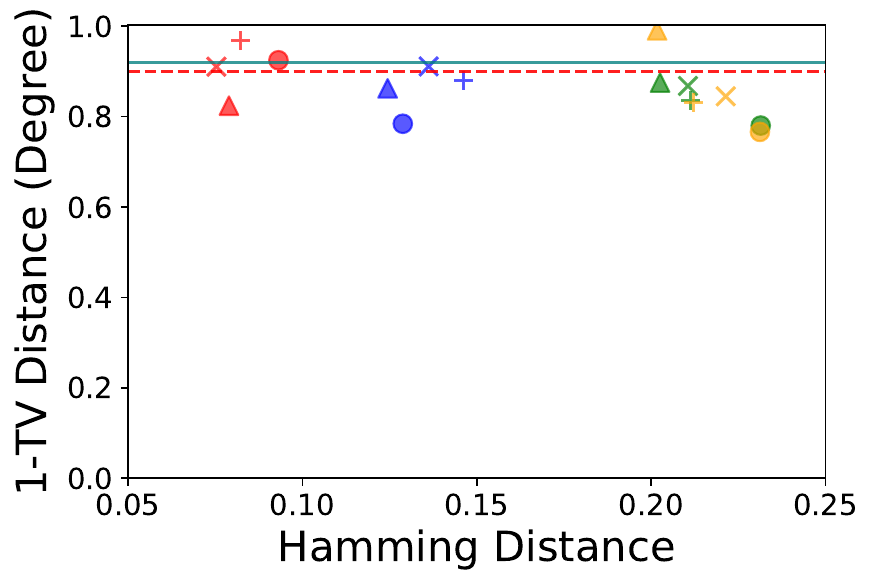}}
    \vspace{-0.cm}
    \caption{Hamming distance versus 1-TV distance {of} degree with generated samples; $r$ is the number of {steps} in {\steingen};
    $k$ is the number of steps 
    {between} re-estimation; $n=30$;
    {Est Error is estimated from simulations; Error Bound is 
    $({n \pi})^{-\frac12}$ in \eqref{eq:dtvbound}.
\label{fig:frontier_different_k}}
   }
\end{center}
\vspace{-1.5cm}
\end{figure}

We plot the Hamming distance for various models, varying the number of re-estimation steps, 
in \Cref{fig:hamming2}. From the plot we see that, with higher number of re-estimation steps $k$,
the Hamming distance converges to the limit faster, which is expected and echoes the behavior of {\steingen}\_nr. Moreover, as $k$ increases, the variance of Hamming distance increases as well, implying an increase in sample variety. 
{\Cref{fig:frontier_different_k} 
shows the relationship between diversity and fidelity; the number of steps in {\steingen} has a more substantial effect on diversity than the number of steps between re-estimation. As expected, re-estimation after every step achieves highest fidelity; there is a trade-off between fidelity and computational efficiency.} 
}

\subsection{Graph generation from multiple network observations}
\label{app:mult}

\begin{wraptable}{r}{0.5\linewidth}
    \centering
     \vspace{-.75cm}
     \caption{Rejection rate for the the gKSS test with $5$ observed network samples; network size $n=30$; test level $\alpha=0.05$. The closest value to the test level is marked in red and the second closest in blue.}
    \label{tab:gkss_multi}
    \vspace{-0.2cm}
    \begin{tabular}{c|cccc}
       \toprule
       {Model} & E2S &  ET & E2ST & ER \\
       \hline
        MPLE & 0.09  & 0.07& 0.08& {\color{blue}0.06}\\
        CD & 0.13 &  0.17& 0.19 & 0.09\\
        MLE & {\color{blue}0.06} & 0.11& 0.07 &0.07\\
        {\steingen}\_nr & 0.03 &  {\color{blue}0.06}& {\color{red}0.05}&{\color{blue}0.06}\\
        {\steingen} & {\color{red}0.04} & {\color{red}0.05}& {\color{blue}0.06} &{\color{red}0.04}\\
        \bottomrule
    \end{tabular}
\end{wraptable}
 

As mentioned in \Cref{sec:conclusion}, 
{\steingen} can 
easily be expanded to multiple graph inputs.
{Heuristically,} 
the estimation of the conditional distribution {should be} improved when multiple input graphs are available, as it can {then} be estimated from the collection of graphs.
{Here we show some experiments to illustrate this extension; we use the same setup as in \Cref{sec:expsim}, on 30 vertices, but now with 5 observed network samples from each model.}
For the parameter estimation counterparts, we use the \texttt{ergm.multi} implementation recently 
{added to} 
the \texttt{statnet} suite \citep{krivitsky2022advanced}.
{To estimate the conditional distribution in \Cref{alg:est_conditional},
if there are $c$ observed network samples $x_1, \ldots, x_c$ from each model, on $n$ vertices each, we denote by $N_i$ the set of pairs of vertices in network $i$, for $i=1, \ldots, c$, so that $|N_i| = N$ for all $i$. Then we estimate 
\[
q(x^{(s,{1})} |{\Delta_s t(x)} =\uk)
= \frac{\sum_{i=1}^c \sum_{s \in N_i} \mathbb{I} (x_i^{s}=1)\mathbb{I} ({\Delta_s t(x)} =\uk) }{\sum_{i=1}^c \sum_{s \in N_i} \mathbb{I} ({\Delta_s t(x)} =\uk) }
\]
where $\mathbb{I}(A)$ is the indicator function of an event $A$ which equals 1 if $A$ holds and 0 otherwise. \\
 Similarly to what was was carried out for} Table \ref{tab:gkss_results}, a gKSS test is performed; the rejection rates are reported in \Cref{tab:gkss_multi}. {Compared to Table \ref{tab:gkss_results}, the rejection rates show a marked improvement for all methods;  for {\steingen} and {\steingen}\textunderscore{nr} they are now very close to the desired 0.05 even for E2ST.  Except on the simple ER network for which {\steingen}\textunderscore{nr} ties with MPLE, {\steingen} and {\steingen}\textunderscore{nr}  again outperform the other methods.}

\subsection{Improving sample quality with 
{\gKSS} 
selection}
\label{app:select}

{The standard {\steingen} procedure may produce a sampled network which may not be very representative of the true network, judged by {\gKSS}.  
As mentioned in \Cref{sec:conclusion}, 
 %
 {\gKSS} 
 could also} be used as a criterion to select samples 
for potential downstream tasks. 
As an illustration, we generate 30 network samples on 30 vertices from the E2S model 
in \Cref{sec:expsim}, {calculate the {\gKSS} for each of these 30 samples, and select the 10 samples with the smallest {\gKSS} 
value. We repeat this experiment $50$ times.} 
}{\Cref{fig:gkss_select_frontier} shows 
{a slight} improvement in fidelity, {but there can be a slight deterioration in} diversity, according to our measures.}

\begin{figure}[t!]
    \centering
    \begin{center}
    {
{\includegraphics[width=0.8\textwidth]{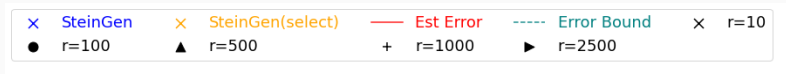}}}
    \subfigure[E2S Model]{
    \includegraphics[width=0.318\textwidth]{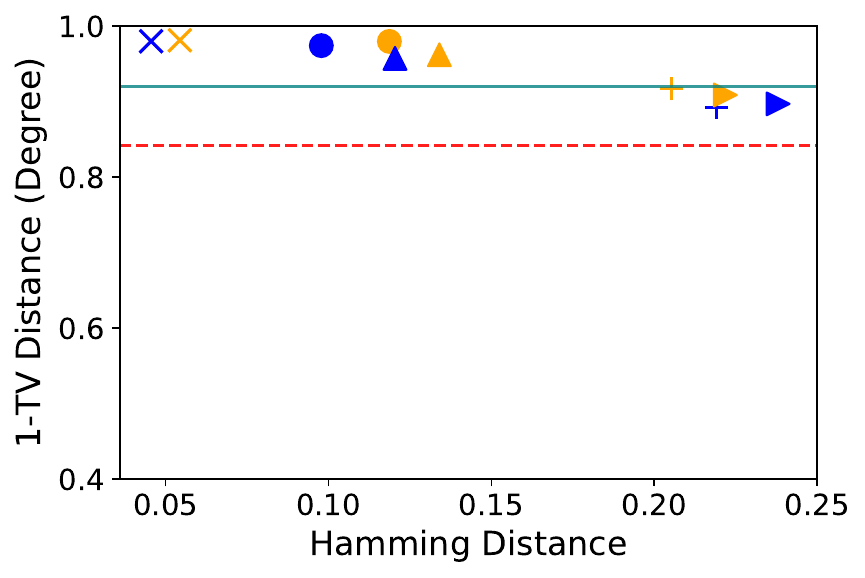}}
    \subfigure[E2ST Model]{
    \includegraphics[width=0.318\textwidth]{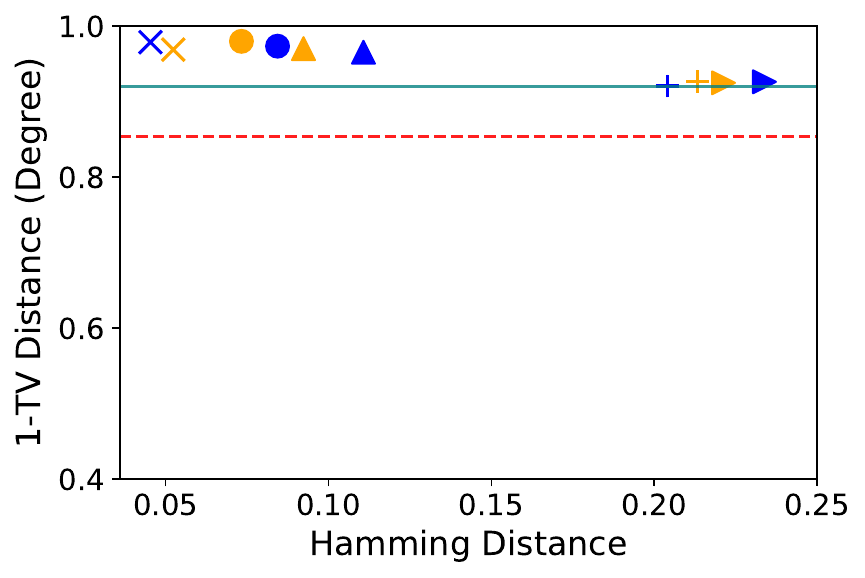}}
    \subfigure[ET Model]{
    \includegraphics[width=0.318\textwidth]{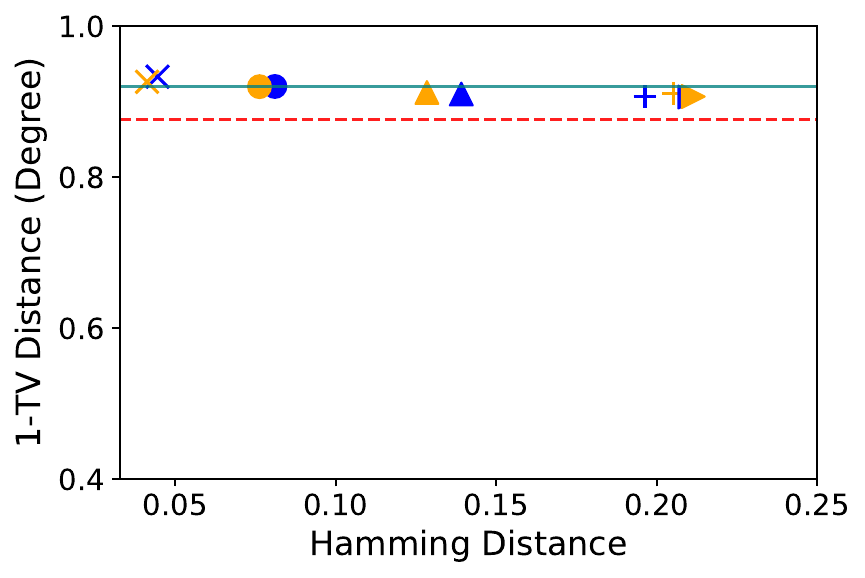}}
    \vspace{-0.5cm}
    \caption{Hamming distance versus 1-TV distance {of} degree using generated samples {with and without batch selection}; $r$ is the number of {steps} in {\steingen};  
    {Est Error, in red, is estimated from simulations, while Error Bound, {in blue,} is the bound $(\sqrt{\pi n})^{-1}$ from \eqref{eq:dtvbound}.
\label{fig:gkss_select_frontier}
   }
   }
\end{center}
\vspace{-1.5cm}
\end{figure}

\section{Additional real data experiments}
\label{app:real}

{Here we report results from experiments on additional real network data. In the absence of a ground truth model, we consider a method as performing well if the observed network statistic in the real network is within two standard deviations of the average in the generated samples. In addition we judge diversity by the Hamming distance to the observed real network; the larger the Hamming distance, the more diverse the samples.}

 

\subsection{Additional results for the teenager network: {kernel choice}} \label{subsec:teenager}
{In this subsection} present further experimental results on the teenager friendship network \citep{steglich2006applying} discussed in \Cref{subsec:real}.

\begin{wraptable}{r}{0.75\linewidth}
    \centering 
    \vspace{-0.8cm}
    \caption{AgraSSt rejection rate at 5\% level 
    with different graph kernel choices for the
    teenager network}
    \label{tab:teenager_kernel}
\begin{tabular}{l|rrrrr}
\toprule
{}  & MPLE & CD & CELL & {\steingen} & {\steingen}\_nr \\
\midrule
WL & 0.68 &  0.92 &  0.12 &  0.06 &  0.08  \\
GVEH & 0.46 &  0.74 &  0.36 &  0.08 &  0.04  \\
SP & 0.34 &  0.62 &  0.10 &  0.02 &  0.04  \\
Const & 0.24 &  0.32 &  0.10 &  0.04 &  0.06  \\
\bottomrule
\end{tabular}
\vspace{-0.25cm}
\end{wraptable} 

We investigate the quality of generated samples from various schemes with AgraSSt using different graph kernels. \textbf{WL}: Weisfeiler-Lehman (WL) graph kernels  \citep{shervashidze2011weisfeiler} with level parameter $3$ as presented in the main  text; \textbf{GVEH} Gaussian Vertex-Edge Histogram kernel \citep{sugiyama2015halting} with unit bandwidth; \textbf{SP}: the Short Path kernel \citep{borgwardt2005shortest}; and \textbf{Const}: the ``constant'' kernel as considered in \citet{weckbecker2022rkhs}. From the rejection rate in \Cref{tab:teenager_kernel}, we see that {\steingen} and {\steingen}\_nr achieve rejection rates {which are} much closer to the significance level $0.05$ compare to MPLE, CD and CELL. {The WL kernel and the constant kernel yield rejection rates which match the 5\% level most closely for {\steingen}, but other kernels perform fairly similarly, confirming the findings in \cite{weckbecker2022rkhs}.}

%


\begin{table}[t!]
    \centering
    \footnotesize
\caption{Statistics for generated samples from  Lazega's lawyer network; $50$ networks are generated from each methods; reporting average(avg) and standard deviation (sd).
    \label{tab:lazega_stats}}
    \vspace{-.25cm}
\begin{tabular}{l|rrrrrrr|c}
\toprule
 & MPLE & CD & MLE & CELL & {NetGAN} & {\steingen} & {\steingen}\_nr & Lazega \\
 \hline
Density(avg) & 0.184 & 0.191 & 0.180 & 0.182 & 0.205 & 0.182 & 0.183 & 0.183 \\
Density(sd) & 0.023 & 0.018 & 0.017 & 0.001 & 0.003 &  0.025 & 0.015 & - \\
\hline
2Star(avg) & 729.0 & 785.4 & 693.1 & 921.3 & 899.2 & 722.3 & 755.0 & 926 \\
2Star(sd) & 173.2 & 143.8 & 126.7 & 23.7 & 38.6 & 133.6 & 82.7 & - \\
\hline
Triangles(avg) & 46.2 & 50.9 & 42.9 & 105.4 & 84.2 & 139.8 & 124.8 & 120 \\
Triangles(sd) & 16.4 & 14.8 & 12.1 & 8.25 & 11.59 & 38.2 & 27.8 & - \\
\hline
SP (avg) & 2.09 & 2.05 & 2.10 & 2.22 & 2.02 & 2.12 & 2.09 & 2.14 \\
SP (sd) & 0.148 & 0.101 & 0.096 & 0.042 & 0.034 & 0.073 & 0.048 & - \\
\hline
LCC (avg) & 35.9 & 36.0 & 35.9 & 35.9 & 34.0 & 36.0 & 36.0 & 34\\
LCC(sd) & 0.180 & 0. & 0.300 & 0.359 & 0. & 0. & 0. & - \\
\hline
Assortat.(avg) & -0.071 & -0.046 & -0.079 & -0.164 & -0.040 & -0.139 & -0.033 & -0.168 \\
Assortat.(sd) & 0.098 & 0.065 & 0.091 & 0.058 & 0.081 & 0.069 & 0.058 & - \\
\hline

Clust.(avg)  &  0.1850 &   0.1911 &   0.1831 &   {\color{red}{0.3429}} & 0.2885 & {\color{blue}{0.3418}}  &   0.4869 &   0.3887 \\
Clust.(sd)  &  0.0281 &  0.0276 &  0.0271 &  0.0222 &  0.0287 & 0.0981 &  0.0903 &  - \\ \hline 
Max deg (avg) &  11.30 &  12.00 &  11.20 &  {\color{red}{15.53}} & {\color{blue}15.67} & 11.60 &  {{12.10}}  &  15.00 \\
Max deg (sd) &  1.2688 &  1.4605 &  0.9451 &  1.0241 & 0.9428 & 1.6248 &  1.3747 &  - \\
\hline 
\hline
AgraSSt(avg) &  0.184 &  0.214 &  0.132 &  0.084 & 0.135 & 0.095 &  0.113 &    0.054 \\
AgraSSt(sd) &  0.182 &  0.067 &  0.102 &  0.034 & 0.158 & 0.077 &  0.066 & -\\
\hline
Hamming(avg) & 0.293 & 0.296 & 0.287 & 0.056 & 0.145 & 0.212 & 0.215 & - \\
Hamming(sd) & 0.019 & 0.017 & 0.018 & 0.007 & 0.008 & 0.010 & 0.012 & - \\
\bottomrule
\end{tabular}
\end{table}

\subsection{Padgett's Florentine marriage network}

\begin{figure}
\centering
\includegraphics[width=0.48\textwidth]{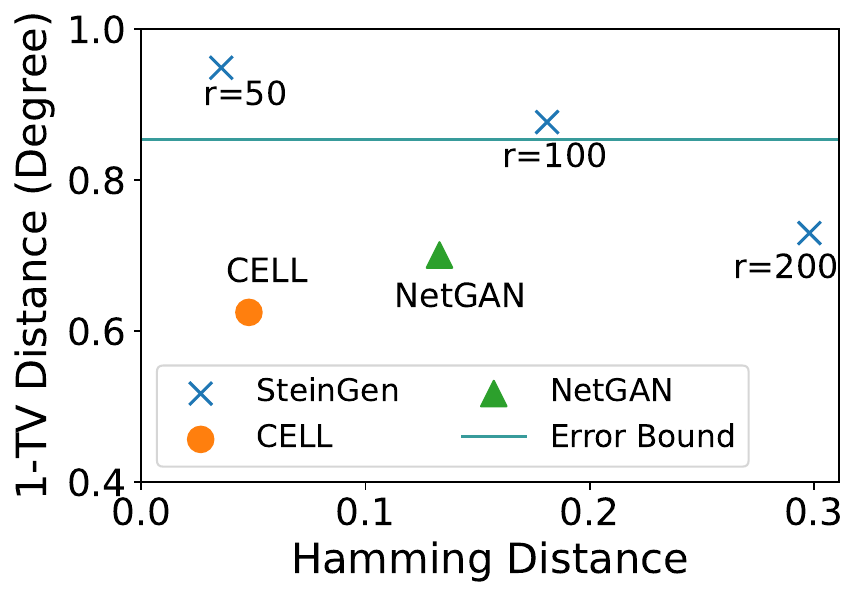}
\caption{Hamming distance versus 1-TV distance {of} 
degree for the Florentine marriage network.
\label{fig:florentine_frontier}}
 \end{figure}

 Padgett’s Florentine marriage network \citep{padgett1993robust}, with 16 vertices representing Florentine  families during the Renaissance and 20 edges representing their  marriage 
ties, 
is a benchmark network for network analysis.
In \cite{reinert2019approximating} and \cite{xu2021stein}, an ER model was considered a good fit.
{Our simulation setup is as in 
\Cref{subsec:real} and we report the same summary statistics in \cref{tab:florentine_stats}. {The  heuristic 
 bound \eqref{eq:dtvbound} 
would give a lower bound on the expected value of $ 0.8589526
$
 for $1 - TV$ Distance.
}} 


{\Cref{fig:florentine_frontier} shows fidelity and diversity for the different methods; here, for large enough $r$, {\steingen} achieves considerably higher diversity, and slightly higher fidelity, than CELL or NetGAN.}



\begin{table}[tp!]
    \centering
\footnotesize
\caption{Statistics for generated samples from {the} Florentine marriage network; reporting average(avg) and standard deviation (sd).}
    \vspace{-0.1cm}
    \label{tab:florentine_stats}
    \begin{tabular}{l|rrrrrrr|c}
\toprule
{} &      MPLE &      CD &       MLE &      CELL & {NetGAN} & {\steingen} &  {\steingen}\_nr & Florentine\\
\midrule
Hamming &  0.268 &  0.253 &  0.257 &  0.048 &  0.133 & 0.301 &     0.268 & \\
AgraSSt&  0.08 &  0.10 &  0.06 &  0.04 & 0.06 & 0.05 &     0.04 & 
pval=0.15\\
\hline
Density (avg) &    0.169 &   0.160 &   0.162 &   0.167 &  0.190 & 0.172 &   0.165  & 0.167\\
Density (sd) &    0.0361 &  0.0379 &   0.0347 &   2.77e-3 &  7.75e-4 &  0.0298 &   0.0254  & - \\
\hline
2Star (avg) &  47.66 &  44.12 &  44.10 &  45.86 & 47.82 & 63.64 &  48.80 & 47\\
2Star (sd) &  20.38 &  21.39 &  17.96 &  3.86 & 4.59 & 38.87 &  15.61 & - \\ \hline 

Triangles (avg) &  2.80  &  2.72  &  2.22  &  2.10  & 2.42 &  5.41  &   4.5  & 3 \\ 
Triangles (sd) &  2.51 &  2.36 &  1.57 &   1.04 &  1.47 & 3.18 &   2.04 & - \\ \hline 
\midrule
SP (avg) &   2.543 &   2.564 &   2.574 &   2.704 &  2.600 & 2.510 &   2.439 &   2.486  \\  
SP (sd) &  0.339 &  0.502 &  0.325 &  0.188 & 0.219 & 0.439 &  0.254 & - \\ \hline
LCC (avg) &  14.60 &  13.86 &  14.20 &  15.96 & 15 & 15.78 &  16.00 &  15 \\
LCC (sd) &  1.70 &  1.91 &  0.943 &  0.101 &  0. & 0.229 &  0.229 & - \\
\hline
Assortat.
(avg) &  -0.141 &  -0.131 &  -0.123 &  -0.328 & -0.249 & -0.093 &   -0.132 &  -0.375 \\
Assortat.
(sd) &  0.141 &  0.156 &  0.158 &  0.114 & 0.105 & 0.107 &  0.124 & - \\
\midrule \hline 
Clust. (avg)  &  {\color{blue}{0.1532}}  &  0.1665 &  {\color{red}{0.1474}}  &  0.1386 & 0.2103 & {\color{blue}{0.1532}}  &  0.1665 &  0.1474 \\
Clust. (std)  &  0.1046 &  0.1098 &  0.0841 &  0.0700 & 0.0945 & 0.1046 &  0.1098 &  - \\ \hline 
Max deg (avg) & 4.980 &  {\color{blue}{5.060}} &  {\color{red}{5.120}} &  6.000 & 5.167 & 4.980 &  {\color{blue}{5.060}} &  5.120 \\
Max deg (std) &   0.1532 &  0.1665 &  0.1474 &  0.1386 & 1.067 & 0.1532&  0.1665 &  - \\
\bottomrule
\end{tabular}
\end{table}

{\Cref{tab:florentine_stats} gives the result from generating 30 samples each for the different network generators. {\steingen} has the largest Hamming distance. While {{\steingen} samples} deviates
from some of the observed network statistics more than the other methods, all observed values {of the sufficient statistics} are well within one standard deviation of the values in the Florentine marriage network.
{We note that the methods based on parameter estimation perform  best for this small benchmark data set.} 

{\subsection{Protein-Protein Interaction (PPI) networks}\label{app:PPI}}
Protein-protein interactions (PPI)  are crucial for various biological processes; {for a survey see for example \cite{silverman2020molecular}. Here we consider two examples, the Epstein-Barr virus and yeast.}  


\paragraph{{An} Epstein-Barr Virus (EBV) network}
We first examine a relatively small PPI network, the Epstein-Barr Virus (EBV) network {used in} 
\citep{ali2014alignment}; see also \citep{hara2022comprehensive}\footnote{The dataset 
can be downloaded from the
\url{https://github.com/alan-turing-institute/network-comparison/blob/master/data/virusppi.rda}.}. 
This network has one connected component that consists of 60 vertices and 208 edges, thus having edge density 0.11751. 

\begin{figure}[h]
\vspace{-0.3cm}
    \centering
    \begin{center}
    \subfigure[$t=$(E, 2S)]{
    \includegraphics[width=0.235\textwidth]{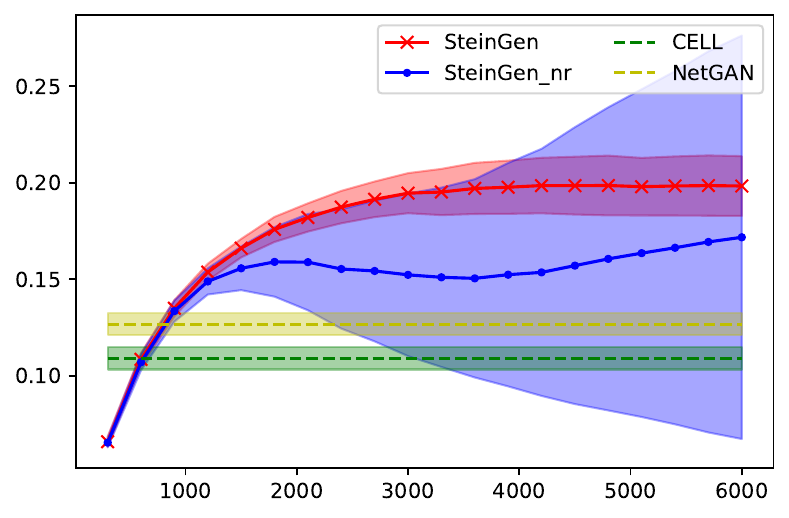}}
    \subfigure[$t$= (E, T)]{
    \includegraphics[width=0.235\textwidth]{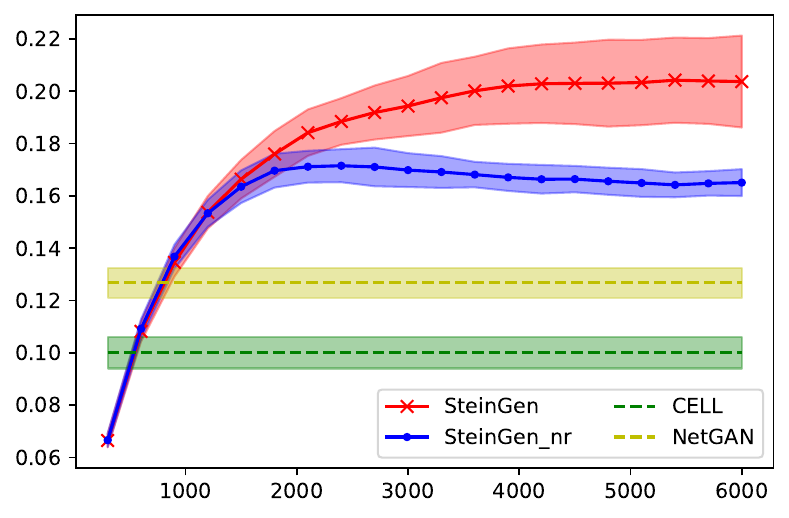}}
    \subfigure[$t$= (E, 2S, T)]{
    \includegraphics[width=0.235\textwidth]{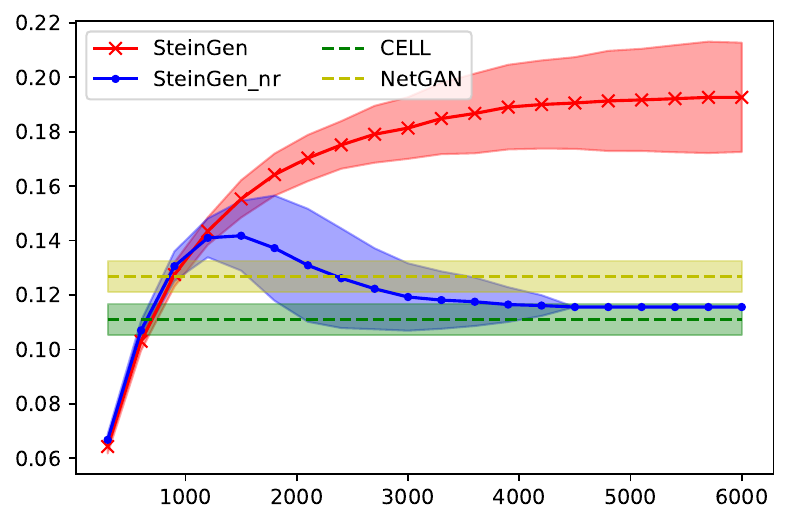}}
     \subfigure[$t$=E]{
    \includegraphics[width=0.235\textwidth]{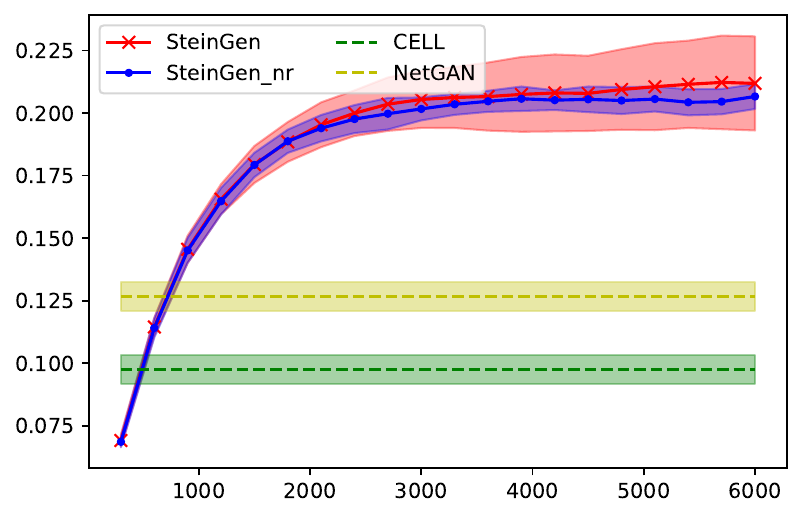}}
    \end{center}
    \vspace{-0.5cm} 
    \caption{Hamming distance for the Epstein-Barr Virus PPI network}
    \label{fig:hamming_ebv}
\end{figure} 
{Using different network statistics $t(x)$ we obtain  the Hamming distance  to the original network in \Cref{fig:hamming_ebv}.
The statistics $t(x)$ used in the models are found in the captions, with $E$ denoting the number of edges, $2S$ the number of 2-stars, and $T$ the number of triangles. We also show the Hamming distance for networks generated by  CELL and NetGAN. While {\steingen} performs similarly across models, achieving the largest Hamming distance, {\steingen}\_nr is more erratic in models which include the number of 2-stars.} The average and standard deviation are taken over $50$ network samples from each method. {The heuristic bound \eqref{eq:dtvbound} 
would give a lower bound on the expected value of $ 0.9271634
$
 for $1 - TV$ Distance.}

{\Cref{tab:EBVsumm} shows various network summaries for generated networks from {\steingen} and {\steingen}\_nr, {with $t(x)$ the number of edges and 2-stars, as well as} CELL and NetGAN.} 
The network statistics from {\steingen} samples are  closest or second closest to the observed EBV samples, with a larger standard deviation (std) than CELL or NetGAN samples.
The achieved Hamming distances of 
{\steingen} and {\steingen}\_nr exceed both CELL and NetGAN.

\begin{figure}[h]
    \centering
\includegraphics[width=0.48\textwidth]{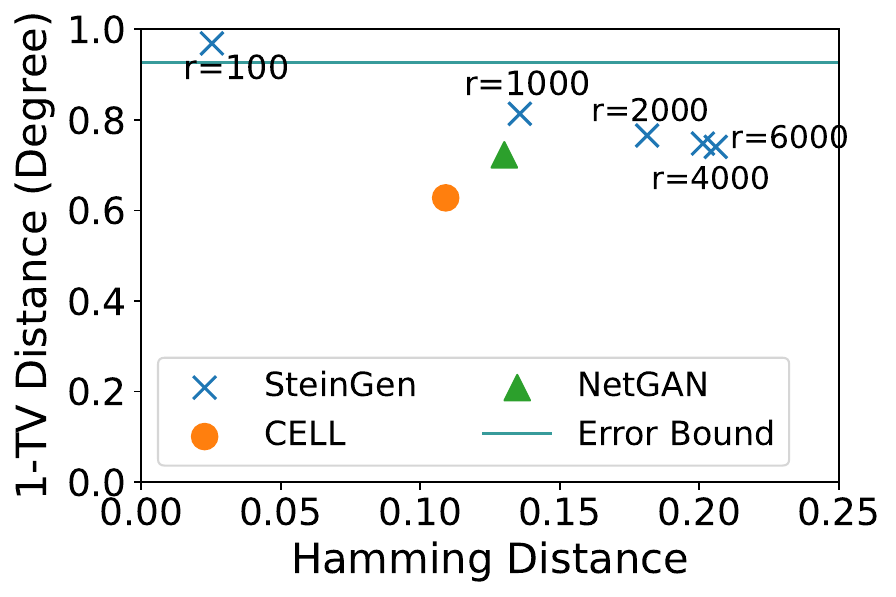}
    \caption{Hamming distance versus 1-TV distance {of} degree for the EBV  network; $r$ is the number of {steps} in {\steingen}. 
    }
\label{fig:frontier-ebv}
\end{figure}

We show the {corresponding} fidelity-diversity trade-off plot in \Cref{fig:frontier-ebv}. 
The empirical TV distance and Hamming distance are computed from averaging over $50$ samples from each generation method. 
{W}ith $r>1000$, the {\steingen} achieves higher fidelity while keeping better diversity compare to CELL and NetGAN. Moreover, the TV distance does not change much from $r=2000$ to $r=6000$.

\begin{table}[t!]
\small
   \caption{EBV network statistics 
   }
    \label{tab:EBVsumm}

    \vspace{0.2cm}

\begin{tabular}{l|rrrrrrrr}
\toprule
{} &  density & 2Star & Triangle & Short.\,Path & LCC & Assortat. & Clust. & Max(deg) \\
\midrule
{\steingen} &  {\color{blue}{0.1170}}  &  {\color{blue}{1956}} &   {\color{red}{214.3}} &  {\color{blue}{2.318}}  &  {\color{red}{60.0}} & {\color{red}{-0.1664}}  &  {\color{red}{0.3524}} &  19.02 
\\
std &  0.0260 &  418.7 &  162.4 &  0.1447 &  0.0 &  0.0586 &  0.1248 &  5.863 
\\
\hline 
{\steingen}\_nr  &   0.1627 &  {{1913}}  &  567.5 &  2.285 &  59.82 & -0.2761 &  0.5699 &  {\color{red}{25.78}} 
\\
std  &  0.0281 &  520.2 &  206.9 &  0.1152 &  0.4331 &  0.0400 &  0.1180 &  3.651 
\\ \hline 
CELL  &  {\color{red}{0.1176}} &  1865 &  {116.0} &  {\color{red}{2.335}} &  {\color{red}{60.0}}  & {-0.1469}  &  {{0.1862}} &  {{21.04}} 
\\
std &  1.388e-5 &  66.78 &  15.14 &  0.0323 &  0.0 &  0.0581 &  0.0205 &  2.441 
\\
\hline 
NetGAN &  {\color{red}{0.1174}} &  {\color{red}{1981}} &  {\color{blue}{146.5}} &  {\color{blue}{2.318}}  &  {\color{red}{60.0}}  & {\color{blue}{-0.1521}}  &  {\color{blue}{0.2216}} &  {\color{blue}{22.66}} \\
std &  1.388e-6 &  61.38 &  13.94 &  0.03699 &  0.0 &  0.06409 &  0.01745 &  2.405 \\
\midrule
\hline
EBV &  0.1175 &  2277 &  209.0 &  2.442 &  60.0 & -0.1930 &  0.2753 &  27.0 
\\
\bottomrule
\end{tabular} 
\vspace{-0.15cm}
 
\end{table}

\begin{table}[t!]
    \centering
    \caption{AgraSSt rejection rates for {\steingen} on the EBV network with different $t(x)$
    }
    \vspace{0.2cm}
    \label{tab:EBV_agrasst}
    \begin{tabular}{c|cccc}
    \toprule
        {} & E + 2S & E + T & E + 2S + T & E(Bernoulli) \\
        \midrule
        {\steingen} & {\color{red}0.06} & 0.18 & 0.12 & {\color{red}0.04} \\
        {\steingen}\_nr & 0.32 & 0.58& 0.38 & {\color{red}0.02} \\
         CELL & 0.24 & 0.24 & 0.22 & 0.20\\
         NetGAN & 0.18 & 0.20 & 0.20 & 0.18\\
        \bottomrule
    \end{tabular}
\vspace{-0.15cm}
\end{table}

\Cref{tab:EBV_agrasst} {gives the rejection rates of} AgraSSt tests with different choice of network statistics $t(x)$, {based on} 
50 trials. From the table, we see that the {\steingen} {samples based on the edges and 2-stars ($E+2S$)} and both {\steingen} and {\steingen}\_nr samples based on the Bernoulli model  ($E$) tend to be not rejected at $5\%$ significance level. The rejection rate for CELL and NetGAN samples tend to be higher.

\paragraph{A PPI network {for yeast}}

{Finally we} examine a relatively larger scale {standard} PPI network, {that of yeast based on}  \cite{von2002comparative}\footnote{The data was adapted from the \texttt{R} package \texttt{igraphdata}, originally downloaded from 
\url{http://www.nature.com/nature/journal/v417/n6887/suppinfo/
nature750.html}.
} The network contains 2617 vertices and 11855 edges. We use the largest component (for CELL and NetGAN comparison) as our observed network, which contains 2375 vertices and 11693 edges; giving the edge density of 0.004148. {The bound \eqref{eq:dtvbound} 
would give a heuristic lower bound on the expected value of $ 0.9884231
$
 for $1 - TV$ Distance.} 
{For both PPI networks, the heuristic bound  is much closer to 1, compared to the 1-TV distance of generated samples, indicating that perhaps the independence assumptions in the heuristic are violated.}
 \begin{figure}
 \centering
\includegraphics[width=0.48\textwidth]{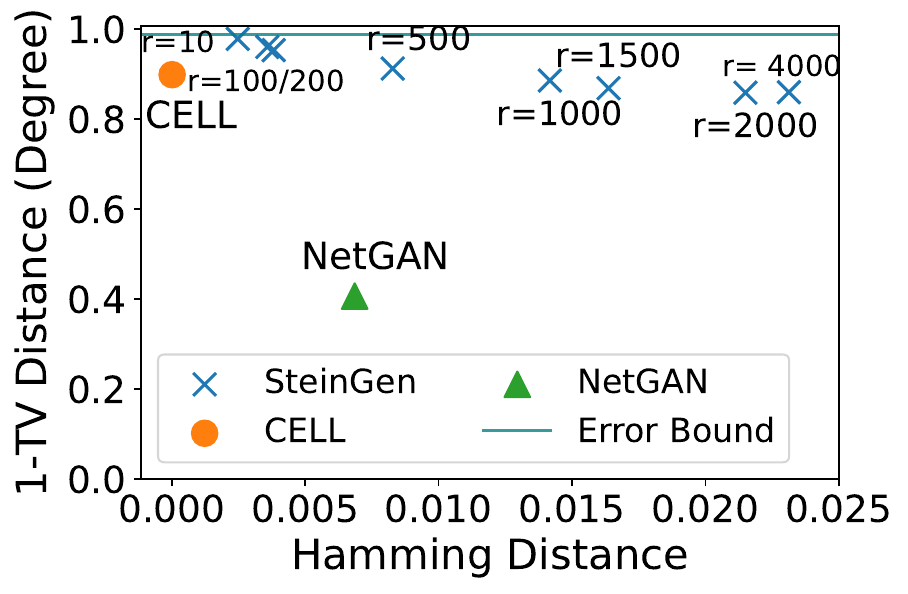}
    \caption{Hamming distance versus 1-TV distance {of} degree for the yeast PPI network; $r$ is the number of {steps} in {\steingen}. 
    \label{fig:frontier-PPI}}
     \end{figure}
{Here the theoretical guideline for the choice of $r$, namely $r = N \log N + \gamma N + 0.5$, yields $r= 43,496,711$. For computational reasons we chose  much smaller values of $r$.}
We report the fidelity-diversity trade-off in \Cref{fig:frontier-PPI}, where the conditional distribution is estimated using edge and 2-stars. From the plot, we see that the {\steingen} method already achieves higher fidelity than  NetGAN, while having larger diversity compared to the CELL and NetGAN samples when the number of {steps}  $r > 500$, which is relatively moderate compared to the large graph size of $2375$. When $r$ increases, {\steingen} samples achieve higher diversity with minimal sacrifice in sample fidelity.

\begin{table}[t!]
\scriptsize
   \caption{PPI network statistics}
    \label{tab:PPIsumm}

    \vspace{0.25cm}

\begin{tabular}{l|rrrrrrrr}
\toprule
{} &  density{($10^{-3}$)} & 2Star{($10^{2}$)} & Triangle(10) & SP{($10^{-3}$)} & LCC & Assortat{($10^{-4}$)}. & Clust.{($10^{-4}$)} & Max(deg) \\
\midrule
{\steingen}  &  {\color{blue}4.158} &  {\color{blue}3884} &  {\color{red}6070} &  {\color{red}5076} &  {\color{red}2375} &  {\color{red}4527} &  {\color{red}4688} &  {\color{red}118.0} \\
std & 6.812e-04 &    1.241 &    0.7980 &  6.149 &     0. &  1.900 &  1.230 &  0. \\
\midrule
{\steingen}\_nr &  4.159 &  {\color{red}3885} &  {\color{blue}6072} &  {\color{blue}5067} &  {\color{red}2375} &  {\color{blue}4521} &  {\color{blue}4689} &  {\color{red}118.0} \\
std & 8.128e-04 &    2.896 &   2.517 &  10.01 &     0. &  5.750 &  3.890 &  0. \\
\midrule
CELL &  {\color{red}4.148} &  3153 &  3034 &  4523 &  {\color{blue}2372} &  3761 &  2887 &  {\color{blue}102.5} \\
std &  0. &   19.63 &  46.98 &  25.80 &     0.8292 &  101.5 & 36.60 &  1.118 \\
\midrule
NetGAN &  3.416 &  1080 &    12.23 &  3821 &  2617 &  30.67 &  33.98 &   22.67\\
std & 1.479 &  467.7 &   5.321 &  165.4 &  1133 &  68.15 &  14.78 &  9.849\\
\midrule
\midrule
PPI & 4.148 &  3885 &  6070 &  5096 &  2375 &  4539 &  4687 &  118.0 \\
\bottomrule
\end{tabular} 
 
\end{table}

\end{document}